\pdfoutput=1
\documentclass{article}
\usepackage[T1]{fontenc}
\usepackage{times}

\usepackage[left=31mm,right=31mm,top=33mm,bottom=20mm]{geometry}
\usepackage{natbib}
\usepackage{booktabs}       % professional-quality tables

\usepackage{xcolor}         % colors
\usepackage[colorlinks,
linkcolor=blue,
citecolor=blue,
urlcolor=magenta,
linktocpage,
plainpages=false]{hyperref}

\bibliographystyle{plainnat}
\bibpunct{(}{)}{;}{a}{,}{,}

% Technical Definitions -------------------------------------
% \usepackage{authblk}
\usepackage{amsmath,amssymb,amsfonts}
\usepackage{amsthm}
\usepackage{thmtools,thm-restate} %% very important that this appears after amsthm is imported!!
\usepackage{enumitem}
\usepackage{bbm}
\usepackage{dsfont}
\usepackage{graphicx}
\usepackage{epsfig}
\usepackage{subcaption}
\usepackage{amsfonts}
\newtheorem{theorem}{Theorem}
\newtheorem{lemma}[theorem]{Lemma}

\theoremstyle{definition}
\newtheorem{definition}[theorem]{Definition}

\newtheorem{observation}[theorem]{Observation}

\theoremstyle{remark}

\usepackage[noend,ruled]{algorithm2e}
\SetCommentSty{mycommfont}
\SetKwInput{KwInput}{Input}                % Set the Input
\SetKwInput{KwOutput}{Output}
\SetKwInput{KwInit}{Initialization}            

\SetKwComment{Comment}{/* }{ */}

\DeclareMathOperator*{\argmax}{arg\,max}

\DeclareMathOperator*{\nth}{^{\text{th}}}

\newcommand{\E}{\mathbb{E}}

\newcommand{\indicator}{\mathds{1}}

\newcommand{\A}{\mathcal{A}}

\newcommand{\regret}{\mathcal{R}}

\newcommand{\F}{\mathcal{F}}

% --------------------------------------------------------

\title{On the price of exact truthfulness in incentive-compatible online learning with bandit feedback: \\A regret lower bound for WSU-UX}

\author{Ali Mortazavi \\ 
             University of Victoria \\
             \texttt{alithemorty@gmail.com}
             \and
             Junhao Lin\thanks{Work completed while at University of Victoria} \\ 
             University of Waterloo \\
             \texttt{linjunhao9385@gmail.com}
             \and
             Nishant A. Mehta \\ 
             University of Victoria \\
             \texttt{nmehta@uvic.ca}}

\date{}

\begin{document}

\maketitle

\begin{abstract}
  In one view of the classical game of prediction with expert advice with binary outcomes, in each round, each expert maintains an adversarially chosen belief and honestly reports this belief. We consider a recently introduced, strategic variant of this problem with selfish (reputation-seeking) experts, where each expert strategically reports in order to maximize their expected future reputation based on their belief. In this work, our goal is to design an algorithm for the selfish experts problem that is incentive-compatible (IC, or \emph{truthful}), meaning each expert's best strategy is to report truthfully, while also ensuring the algorithm enjoys sublinear regret with respect to the expert with the best belief. \citet{freeman2020no} recently studied this problem in the full information and bandit settings and obtained truthful, no-regret algorithms by leveraging prior work on wagering mechanisms. While their results under full information match the minimax rate for the classical ("honest experts") problem, the best-known regret for their bandit algorithm WSU-UX is $O(T^{2/3})$, which does not match the minimax rate for the classical ("honest bandits") setting. It was unclear whether the higher regret was an artifact of their analysis or a limitation of WSU-UX. We show, via explicit construction of loss sequences, that the algorithm suffers a worst-case $\Omega(T^{2/3})$ lower bound. Left open is the possibility that a different IC algorithm obtains $O(\sqrt{T})$ regret. Yet, WSU-UX was a natural choice for such an algorithm owing to the limited design room for IC algorithms in this setting.
\end{abstract}

\section{Introduction}
In the problem of prediction with expert advice \citep{vovk1995game}, we have $K$ experts and $T$ days and a fixed loss function $\ell(x_t, y_t)$. Each day $t \in [T]$, the learner has access to the advice $b_{i,t}$ of each expert $i \in [K]$ about the outcome $y_t$. Using the expert advice, the learner predicts $x_t$ and then outcome $y_t$ is revealed. The error of the learner is measured by $\ell(x_t, y_t)$ at round $t$. The goal of the learner is to achieve low regret with respect to the cumulative loss of the best expert in hindsight. One common approach to this problem in the literature is to, for any round $t$, maintain a set of weights over experts $w_{t,i}$ for all $i \in [K]$, select the advice of expert $i$ with probability $\frac{w_{t,i}}{\lVert \boldsymbol{w}_t \rVert}_1$, and update the weights appropriately once the outcome $y_t$ is revealed. See for example the multiplicative weight update (MWU) method \citep{arora2012multiplicative} and Hedge \citep{freund1997decision}. 

\cite{roughgarden2017online} considered this problem for binary outcomes (i.e.~$y_t \in \{0,1\}$) where the experts are strategic: in each round $t$, each expert forms a belief $b_{i,t} \in [0,1]$ about the binary outcome $y_t$, (i.e.~$b_{i,t}=\Pr\left(y_t=1\right) \in [0,1]$) and reports $r_{i,t} \in [0,1]$ in such a way as to maximize its own future \textit{reputation} among the pool of experts. See the protocol in Algorithm~\ref{alg:reputation-seeking-full-info-protocol} which shows this framework.

\begin{algorithm}[t]
  \DontPrintSemicolon
  \KwInput{$T$, $K$, $\ell(X,Y): [0,1] \times \{0,1\} \rightarrow \mathbb{R}$}
  \For{$t = 1, \ldots, T$}
  {
    The learner chooses a distribution $\boldsymbol{\pi_{t}} \in \Delta_{K}$ over experts and draws an expert $I_t$.
    
    Each expert $i \in [K]$ forms a belief $b_{i,t} \in [0,1]$ about the distribution of outcome $y_t$.
    
    Each expert $i$ reports a prediction $r_{i,t} \in [0,1]$ with the goal of maximizing~their~own~\textit{future~reputation}.

    Nature reveals the outcome $y_t \in \{0,1\}$.
    
    Learner incurs loss of $\E[\ell(r_{I_t,t},y_t)] =\sum_{j\in [K]} \pi_{t,i}\ell(r_{j,t},y_t)$
    
  }
  
  \caption{Protocol for Prediction With Selfish (Reputation Seeking) Experts for binary outcomes}
  \label{alg:reputation-seeking-full-info-protocol}
\end{algorithm}

Moreover, considering the class of learning algorithms that maintain weights over experts $w_{t,i}$ for $i \in [K]$, \cite{roughgarden2017online} assumed that each expert $i$ at round $t$ associates {its own current reputation with the weight $w_{t,i}$ and} its future reputation simply as its weight in the next round $w_{t+1,i}$. This type of expert is called a \emph{myopic} expert as it does not consider the impact of the decision on its long-term reputation.  

Given this notion of future reputation, the design of the algorithm would impact how each expert would report. Consider round $t$ and expert $i$ and a fixed weight-based learning algorithm; depending on the learning algorithm, there is a function $f$ that determines the weight $w_{t+1,i}$ as
\begin{align}
  w_{t+1,i}= f(r_{i,t}, y_t,  r_{-i, t}, h_{t-1}),
  \label{eq:some-eq44}
\end{align}
where $r_{-i, t}$ denotes the reports of the experts other than expert $i$ and $h_{t-1}$ is all the information revealed by the end of round $t-1$. 

Moreover, assume that expert $i$ has perfect information about $r_{-i, t}$ and $h_{t-1}$. 
Assuming expert $i$ has a belief $b_{i,t}$ about the distribution of $y_t$, then it reports $r_{i,t}$ to maximize its expected reputation:
\begin{align*}
  r_{i,t} &= \argmax_{r \in [0,1]}\E_{b_{i,t}}\left[w_{t+1,i}\right] \\
          &=  \argmax_{r \in [0,1]}\E_{b_{i,t}}\left[f(r, y_t, r_{-i, t}, h_{t-1})\right],
\end{align*}
where the expectation is over the randomness of the outcome $y_t$ as if it were drawn based on the expert's belief $b_{i,t}$. 
Observe that depending on function $f$, the value the expert $i$ reports $r_{i,t}$ can be different than its belief $b_{i,t}$. If truth-telling is always a dominating strategy, meaning that no matter the other experts' reports $r_{-i,t}$, the best response is $r_{i,t}=b_{i,t}$, then the algorithm is called \emph{incentive-compatible}. The design of incentive-compatible algorithms is desirable for two reasons:
\begin{itemize}[topsep=0pt,itemsep=0pt,itemindent=0pt,leftmargin=1.5em]
\item Quality of prediction: The regret guarantee for an incentive-compatible online learning algorithm holds not only for the expert with the best reports but also holds for the expert with the best beliefs, which the algorithm does not have direct access to. This guarantee is called
  \textit{belief regret}.\footnote{\cite{frongillo2021efficient} used the term ``regret with respect to the true beliefs''.}
\item Natural strategy: An expert does not need to take into consideration the reports of other experts. Moreover, when a simple strategy (truth-telling) is strictly dominating, it is reasonable to expect that agents will choose that strategy.
\end{itemize}

As observed by \cite{roughgarden2017online}, the design of incentive-compatible online learning algorithms is intimately connected to the problem of designing proper scoring rules (see Definition~\ref{def:proper_scoring_rule} in Section~\ref{sec:preliminaries}). This implies that when the loss function is proper, the problem is easy. For instance, for any proper loss function, such as squared loss, MWU which uses the update rule
\begin{align*}
  w_{t+1,i} = w_{t,i} (1-\eta \ell(r_{i,t}, y_t)),
\end{align*}
is incentive-compatible.

However, for absolute loss, which is not a proper loss function, \citet[Corollary 31]{roughgarden2017online} showed that, under some mild restrictions\footnote{The class of algorithms that are considered by \cite{roughgarden2017online} are those that are a natural extension of deterministic weighted majority algorithms, where the weight update has some mild restrictions.}, no weight-based randomized algorithm can achieve no-regret.

Yet, as observed by \cite{freeman2020no}, even for proper loss functions such as squared loss, for another natural variation of incentive for experts who, in any round $t$, want to maximize their expected \textit{normalized} weight (i.e.~the probability of being selected) in the next round based on their private belief about the outcome $y_t$, the classical multiplicative weight algorithm (MWU) fails to be truthful\footnote{It is easy to show that Hedge and MWU are not truthful. However, note that \cite{frongillo2021efficient} showed that Hedge is approximately truthful. Under some assumptions, approximately truthfulness is enough to get good belief regret, but in this paper, we only focus on exactly-truthful/incentive-compatible algorithms: the algorithms where truth-telling is the only dominant strategy.}. \cite{freeman2020no} designed the Weighted-Score Update rule which is truthful and also achieves $O(\sqrt{T})$ regret in the full-information setting. However, in their extension to the multi-armed bandit setting (which they refer to as the partial information setting), their algorithm WSU-UX achieves $O(T^{2/3})$ regret, which does not match the minimax optimal rate $O(\sqrt{T})$ in the classical ``honest experts'' problem.

Although the experimental results by \cite{freeman2020no} suggested that WSU-UX performs similarly to EXP3 \citep{auer2002nonstochastic} which has minimax optimal regret of $O(\sqrt{T})$ in the classical ``honest experts'' problem, it remained an open problem whether the $O(T^{2/3})$ regret of WSU-UX is due to an artifact in the analysis or if instead the algorithm cannot achieve lower regret. 

\paragraph{Main question}
The main question that we are interested in understanding is: in the setting of \cite{freeman2020no}, whether learning with reputation-seeking experts under bandit feedback is strictly harder than the classical bandit problem.

\paragraph{Contribution}
We take one step toward answering this question by showing that WSU-UX, which is a very natural choice for this problem, can not achieve regret better than $\Omega(T^{2/3})$ in the worst case. In particular, we show that for any choice of hyperparameters for WSU-UX, for large enough $T$, there exists a loss sequence where the belief regret is $\Omega(T^{2/3})$.
The construction of the loss sequence is fairly simple but, for the set of non-trivial hyperparameters (see Section~\ref{sec:lower_bound_high_level} for a description of this set) requires a highly intricate analysis. 
In particular, for the non-trivial case, we design a loss sequence such that (i) the best expert\footnote{The best expert here is the expert whose belief has the lowest cumulative loss over $T$ rounds.} has an estimated loss with large variance for a constant fraction of $T$ rounds, and at the same time (ii) the best expert outperforms the other experts at the end. The core technical difficulty is showing that both (i) and (ii) happen simultaneously.

\section{Model and preliminaries}

\subsection{Problem setting}
We first describe the learning protocol. In the setting of \cite{freeman2020no}, the protocol is the same as 
Protocol~\ref{alg:reputation-seeking-full-info-protocol}, where the loss function is squared loss (which is a proper loss). In the full-information version of the problem, all experts offer reports, while under bandit feedback, only the selected expert offers a report. The future reputation of each expert is defined as the probability of being selected in the next round. More concretely, each expert $i$ at round $t$ with belief $b_{i,t} \in [0,1]$ about binary outcome $y_t$ strategically reports $r_{i,t} \in [0,1]$ to maximize their probability of being selected by the algorithm in the next round $t+1$. The goal of the learner is to minimize its belief regret, the regret with respect to the expert with the best belief.
\begin{definition}
  Let $\boldsymbol{\pi}_t \in [K]$ be the learner's probability distribution in round $t$.   Then the learner's \emph{belief regret} $\E[\regret_T]$  after $T$ rounds is defined as
  \begin{align*}
    \E\left[\sum_{t\in [T]}\sum_{j \in [K]}\pi_{t,j} \ell(r_{j,t},y_t) - \min_{i \in [K]}\sum_{t \in [T]}\ell(b_{i,t},y_t)\right].
  \end{align*}
\end{definition}
Note that the learner incurs loss according to reports $r_{i,t}$ whereas the performance of the best expert is measured with respect to $b_{i,t}$. In general, $r_{i,t}$ need not equal $b_{i,t}$. However, if a learning algorithm is incentive-compatible, meaning that truth-telling is the only strictly dominant strategy, it is reasonable to assume that $r_{i,t}=b_{i,t}$. In this case, low classical regret implies low belief regret. Next, we restate the definition of incentive-compatibility from \cite{freeman2020no}. 
\begin{definition}[\cite{freeman2020no}]
  An online learning algorithm is incentive-compatible if for every timestep $t\in [T]$, every expert $i$ with belief $b_{i,t}$, every report $r_{i,t}$, every vector of reports of the other experts $r_{-i,t}$, and every history of reports $(r_{t'})_{t'<t}$ and outcomes $(y_{t'})_{t'<t}$,
  \begin{align*}
    &\E_{y_t \sim \mathrm{Bern}(b_{i,t})}\left[\pi_{t+1,i} \mid (b_{i,t}, r_{-i,t}), y_t, (y_{t'})_{t' < t}, (r_{t'})_{t'<t}\right] \\
    & \geq 
      \E_{y_t \sim \mathrm{Bern}(b_{i,t})}\left[\pi_{t+1,i} \mid (r_{i,t}, r_{-i,t}), y_t, (y_{t'})_{t' < t}, (r_{t'})_{t'<t}\right], 
  \end{align*}
  where $y \sim \mathrm{Bern(b)}$ denotes a random variable taking value 1 with probability $b$ and 0 otherwise. 
\end{definition}
\subsection{Motivation for bandit setting}
To motivate the bandit version of the problem, consider the following example.
A forecasting agency wishes to forecast an event and has a choice of which
forecaster to employ. 
The selected forecaster will be given a fixed payment (say \$1000) from the agency to research the event, will then develop their belief about the likelihood of the event occurring, and will finally decide what probability forecast (report) to give the forecasting agency. 
Any forecaster that is not selected will not receive a payment and will never provide a report to the forecasting agency. 
Naturally, the agency desires accurate reports, and so its goal is to select the forecaster whose belief (which can be developed only after the forecaster is funded and hence was selected) is the most accurate.
To this end, the agency should ensure that the \emph{future} expected payment given to a selected forecaster (for the next event) incentivizes the forecaster to report its belief honestly for the current event. 
The forecaster's incentive is exactly equal to the future probability of being selected
(a quantification of the forecaster's reputation) since, if selected in the future, the selected forecaster will again receive a fixed payment.
The agency should thus ensure that a forecaster's future probability of being selected is directly proportional to the accuracy of the forecaster's report, an accuracy which is known once the outcome has been realized.

\subsection{Preliminaries} \label{sec:preliminaries}

In this subsection, we overview fundamental concepts relevant to incentive-compatibility. We first recall the notion of proper scoring rules, which can be used to elicit information from an expert \citep{gneiting2007strictly}, \citep{buja2005loss}.

\begin{definition}
  Let $\mathcal{Y}$ denote the outcome space and $\mathcal{R} \subseteq  \Delta({\mathcal{Y}})$ denote the distributional report space. A scoring rule $s: \mathcal{R} \times \mathcal{Y} \rightarrow \mathbb{R}$ is \textit{proper} if for any $b,r \in \mathcal{R}$, we have 
  \begin{align*}
    \E_{Y \sim b}\left[s(b, Y)\right] \geq \E_{Y \sim b}\left[s(r, Y)\right],
  \end{align*}
  and \textit{strictly proper} if the inequality becomes tight only when $r=b$. 
  \label{def:proper_scoring_rule}
\end{definition}
  
  This implies that when a scoring rule is (strictly) proper, an expert with belief $b$ about the distribution of outcome $Y \in \mathcal{Y}$ would (uniquely) maximize their expected score by reporting $r=b$.
  
  A (strictly) proper loss function $\ell$ is defined similarly where truthful reporting of the belief (strictly) \textit{minimizes} the expected loss. We assume $\mathcal{R} = [0,1]$ and $\mathcal{Y} = \{0,1\}$ in this paper.

Next, let us view any online algorithm for the problem of prediction with expert advice as follows.

\begin{definition}[Probability-based update class of an online learning algorithm]
  An online learner $M$ maintains a distribution $\boldsymbol{\pi}_{t}$ over $K$ experts. In round $t$, the learner draws an expert $I_t=i$ with probability $\pi_{t,i}$. Then, the outcome $y_t$ is revealed, and the learner incurs loss $\ell_{I_t,t} = \ell(r_{I_t,t}, y_t)$ and updates $\boldsymbol{\pi}_{t+1}$ only as a function of $\boldsymbol{r}_t = (r_{t,1}, \ldots, r_{t,K})$, $\boldsymbol{\pi}_{t} = (\pi_{t,1}, \ldots, \pi_{t,K})$, and outcome $y_t$.
  \label{def:prob-based-description}
\end{definition}
Note that many algorithms can be described as probability-based update algorithms, such as MWU and Hedge. In order to make a distinction, we describe the precise definition of Hedge and MWU using the probability-based update description. 
\begin{definition}
  Hedge initializes the weights $w_{1,i}=\frac{1}{K}$ for all $i\in [K]$, updates the weights in each round based on the update
  \begin{align}
    w_{t+1,i} = w_{t,i} \cdot \exp{\left(-\eta \ell(r_{i,t}, y_t)\right)},
    \label{eq:some-eq50}
  \end{align}
  and chooses $\pi_{t,i} = \frac{w_{t,i}}{\sum_{j}w_{t,j}}$. 
  
  % Multiplicative Weight Update (MWU) 
  MWU does the same except it uses the update
  \begin{align*}
    w_{t+1,i} = w_{t,i} \cdot (1-\eta \ell(r_{i,t}, y_t)).
  \end{align*}
  \label{def:hedge-mwu-definition}
\end{definition}
Note that $1-\eta \ell(r_{i,t}, y_t)$ is a linear approximation of $\exp{\left(-\eta \ell(r_{i,t}, y_t)\right)}$ around 0.

Unlike the setting of \cite{roughgarden2017online} where using MWU with a proper loss function $\ell$ implies incentive-compatibility, in this setting we do not achieve incentive-compatibility since the normalization would impact the incentive.\footnote{Note that Hedge is not incentive-compatible even in the setting of \cite{roughgarden2017online}.} Indeed, in the MWU algorithm, in round $t$, depending on the outcome $y_t$, the sum of the weights of all experts (the normalization factor) can be different. This will skew the incentive of an expert who wants to report to maximize the expected normalized weight.

\subsubsection{Connection to wagering mechanism}
Toward getting an update rule that is incentive compatible, \cite{freeman2020no} observed a connection between online learning algorithms and \emph{wagering mechanisms}.
In a wagering mechanism, each player reports their prediction about a random outcome and at the same time wagers (bets) a non-negative amount of money on their prediction. Once the outcome is realized, the mechanism will pay each player a payment based on the quality of their prediction and the amount they wagered. 

More concretely, consider the specific setting of wagering mechanisms where there are $K$ fixed players called experts, and there is an unknown Bernoulli outcome $y \in \{0,1\}$. 
Each expert $i \in [K]$ with belief $b_{i} \in [0,1]$ about the probability that $y=1$ wagers $m_{i} >0$ and reports $r_{i} \in [0,1]$ with the goal of maximizing the expected payment from the mechanism.
Note that $r_i$ may or may not be equal to $b_i$.
Once the random outcome $y$ is revealed, the mechanism takes the vector of reports $\boldsymbol{r} = (r_1, \ldots, r_K)$, wagers $\boldsymbol{m} = (m_1, \ldots, m_K)$, and the realization $y$ and outputs a $K$ dimensional vector of $\Gamma(\boldsymbol{r}, \boldsymbol{m}, y) \in \mathbb{R}^K$ where 
$\Gamma_{i}(\boldsymbol{r}, \boldsymbol{m}, y) \in \mathbb{R}$ is the payment that the mechanism will pay to expert $i$.

A mechanism is called incentive-compatible if any expert $i$ with belief $b_i$ strictly maximizes their expected payment by reporting $r_i=b_i$, i.e.,
\begin{align*}
  b_i = \argmax_{r \in \mathcal{R}}\E_{y \sim \mathrm{Bern}(b_i)}\left[\Gamma_{i}\left((r_{i},\boldsymbol{r}_{-i}\right), \boldsymbol{m}, y)\right]
\end{align*}
for any fixed vector $\boldsymbol{r}_{-i}$ of reports of the other experts and vector of wagers $\boldsymbol{m}$. 

The Weighted-Score Wagering Mechanism (WSWM) is an incentive-compatible wagering mechanism defined as follows.
\begin{definition}[\cite{lambert2008self}]
  The \emph{Weighted-Score Wagering Mechanism} is a wagering mechanism that maps any vectors $\boldsymbol{r}$ and $\boldsymbol{m}$ and outcome $y$ to payment $\Gamma = (\Gamma_{1}, \ldots, \Gamma_{K})$, where  
  \begin{align*}
    \Gamma^{\mathrm{WSWM}}_{i}(\boldsymbol{r}, \boldsymbol{m}, y) = m_{i} \left(1 - \ell(r_i,y) + \sum_{j \in [K]} m_j \ell(r_j,y)\right)
  \end{align*}
  is the payment for expert $i$ and $\ell$ is a strictly proper loss function.
\end{definition}
This mechanism has several essential properties \citep{lambert2008self}. First, the mechanism is budget-balanced meaning $\sum_{j}\Gamma_{j}(\boldsymbol{r}, \boldsymbol{m}, y) = \sum_{j}m_j$, and moreover, the payment is non-negative, i.e., $\Gamma_{i}(\boldsymbol{r}, \boldsymbol{m}, y) \geq 0$. 
Observe that designing an incentive-compatible probability-based online learning algorithm can be seen as designing an incentive-compatible wagering mechanism that is budget balanced with non-negative payment as follows.

Consider the probability-based description of any online learning algorithm. This algorithm wants to reallocate $\boldsymbol{\pi}_t$ to $\boldsymbol{\pi}_{t+1}$. To do that, the algorithm at round $t$ asks for the reports of the experts. Once the outcome $y_t$ is revealed, the algorithm uses the reports of the experts $\boldsymbol{r}_t = (r_{t,1}, \ldots, r_{t,K})$, the wagers vector $\boldsymbol{\pi}_{t} = (\pi_{t,1}, \ldots, \pi_{t,K})$, and outcome $y_t$ to set the probability in the next round as
\begin{align*}
  \pi_{t+1,i} = \Gamma_{i}(\boldsymbol{r}_t, \boldsymbol{\pi}_t, y_t).
\end{align*} 
Since the payment $\Gamma_{i}(\boldsymbol{r}_t, \boldsymbol{\pi}_t, y_t)$ is incentive-compatible, experts will report truthfully.

Therefore, among the algorithms in the class of online probability-based update online learning algorithms defined in Definition~\ref{def:prob-based-description}, the only incentive-compatible ones are the ones where their update can be described as a  wagering mechanism update that is non-negative and budget-balanced.

Using a wagering mechanism itself does not imply a no-regret guarantee; however, \cite{freeman2020no} designed a wagering mechanism called Weighted-Score Update (WSU)\footnote{They do not call their update rule as a wagering mechanism, but it can be viewed as a wagering mechanism.} in which the mechanism uses the update
\begin{align*}
  \pi_{t+1,i} &= \Gamma^{\mathrm{WSU}}_{i}(\boldsymbol{r}_t, \eta \boldsymbol{\pi}_t, y_t)\\
              &= \Gamma^{\mathrm{WSWM}}_{i}(\boldsymbol{r}_t, \eta \boldsymbol{\pi}_t, y_t) + \Gamma^{\mathrm{Const}}_i(\boldsymbol{r}_t, (1-\eta) \boldsymbol{\pi}_t, y_t)\\
              &=\Gamma^{\mathrm{WSWM}}_{i}(\boldsymbol{r}_t, \eta \boldsymbol{\pi}_t, y_t) + (1-\eta)\pi_{t,i}
\end{align*} 
for any $i\in [K]$, where $\eta \in (0,0.5)$ and $\Gamma^{\mathrm{Const}}_i$ is simply a mechanism that returns the input wagers. Using the definition of WSWM, this update may be written as 
\begin{align}
  \pi_{t+1, i} = \pi_{t,i}\Biggl(1 - \eta\Biggl(\ell_{t,i} - \sum_{j} \pi_{t,j} \ell_{t,j}\Biggr)\Biggr).
  \label{eq:some-eq47}
\end{align}
\cite{freeman2020no} showed that WSU with the update of form \eqref{eq:some-eq47} can achieve $O(\sqrt{T \ln{K}})$ regret in the full-information setting. 

Interestingly, an apparently unnoticed connection is that the update of form \eqref{eq:some-eq47} recovers the same update as the ML-Prod update of \cite{gaillard2014second} if in all rounds all experts use the same learning rate $\eta$.

\subsection{Existing bandit results}

\cite{freeman2020no} extended their result to the bandit case by designing the Weighted-Score Update with Uniform Exploration (WSU-UX) algorithm described in Algorithm~\ref{alg:WSU-UX}, and they showed a $O(T^{2/3}(K\ln{K})^{1/3})$ upper bound on the regret of this algorithm.
In particular, in their algorithm, they used the common technique of constructing unbiased importance-weighted loss estimates. The algorithm then applies the WSU update on the estimated losses to update the probability distribution. For some technical reasons, they additionally needed to mix the probability distribution over arms ($\boldsymbol{\pi_{t}}$) with a uniform distribution with weight $\gamma \in [0,1]$ to get the probability distribution ($\boldsymbol{\Tilde{\pi}_{t}}$) from which an arm is selected, i.e., $\Tilde{\pi}_{t,i} = (1-\gamma) \pi_{t,i} + \gamma \frac{1}{k}$. The two technical reasons for using $\gamma$ are as follows:
\begin{enumerate}[topsep=0pt,itemsep=0pt]
\item 
  To make sure that after each update, $\pi_{t,i}$ is still a valid probability distribution. 
\item 
  Their regret upper bound can be extremely large in case they do not mix (i.e.~$\gamma = 0$).
\end{enumerate}

Note that mixing with uniform distribution is not for the purpose of getting high probability bounds, as they bound pseudo-regret.\footnote{``Regret'' in this work is actually pseudo-regret, which equals expected regret under oblivious beliefs and outcomes.}

\begin{algorithm}[t]
  \DontPrintSemicolon
  
  \KwInput{$\eta, \gamma \in (0,1/2) \text{ such that } \frac{\eta K}{\gamma}\leq 1/2$, and loss sequence $\ell(x,y)$.}
  Set $\pi_{1,i} = \frac{1}{K}, \forall i \in [K]$
  
  \For{$t \in [T]$}
  {
    The learner chooses expert $I_t$ according to distr. $ \tilde{\pi}_{t,i} = (1 - \gamma) \pi_{t,i} + \frac{\gamma}{K}, \forall i \in [K]$.
    
    Arm $i=I_t$ forms a belief $b_{i,t} \in [0,1]$.
    
    Arm $i=I_t$ reports a report $r_{i,t} \in [0,1]$ with the goal of maximizing $\E_{y_t \sim \mathrm{Bern(b_{i,t})}}[\pi_{t+1,i}]$.
    
    Nature reveals the outcome $y_t \in \{0,1\}$

    The learner computes $\hat{\ell}_{i,t} = \frac{\ell(r_{i,t}, y_t)}{\tilde{\pi}_{i,t}}$ for $i=I_t$ and  $\hat{\ell}_{j,t}=0, \forall j \neq I_t$.
    
    The learner updates $\pi_{t+1,i} = \pi_{t,i} \left(1 - \eta \left(\hat{\ell}_{t,i} - \sum_{j=1}^{K} \pi_{t,j}\hat{\ell}_{t,j}\right)\right)$.

  }
  \caption{WSU-UX \citep{freeman2020no}}
  \label{alg:WSU-UX}
\end{algorithm}

Indeed, they showed for WSU-UX with learning rate $\eta$ and mixing weight $\gamma$,
\begin{align*}
  \mathbb{E}[\regret_T] \leq \gamma T + \frac{\eta K T}{\gamma} + \frac{\ln{K}}{\eta} + 2\eta KT.
\end{align*}
The best choice of $\eta$ and $\gamma$ attains the regret
\begin{align*}
  \mathbb{E}[\regret_T] \leq 2 (4T)^{2/3} (K\ln{K})^{1/3}, 
\end{align*}
which is $O(T^{2/3})$ in terms of $T$. 

It was unclear whether the higher regret was an artifact of their analysis or a limitation of WSU-UX. If there is a tighter analysis of WSU-UX's regret, then for some valid $\gamma, \eta$, we would have
$\mathbb{E}[\regret_T] = o(T^{2/3})$. However, we show that for any valid $\gamma, \eta$, (see Section~\ref{sec:lower_bound_high_level} for a description of valid hyperparamters) for large enough $T$, there exists a loss sequence for which $\mathbb{E}[\regret_T] = \Omega(T^{2/3})$, implying that WSU-UX cannot achieve regret better than $O(T^{2/3})$.

\section{Regret lower bound for WSU-UX}
\subsection{Potential analysis view comparision between EXP3 and WSU-UX}
\label{sec:potential-section}
A potential(-based) analysis is a common way to analyze the regret of online learning algorithms \citep{cesa2003potential}. 
We compare the potential analysis of WSU-UX and EXP3, 
beginning with the full-information variation of each algorithm and then turning to the implications in the bandit setting. 

In the potential analysis of Hedge, for any $i \in [K]$ and $t \in [T]$, we define $\Phi^{\mathrm{HEDGE}}_{t,i}~:=~w_{t,i}$ with $w_{t,i}$ as in Definition~\ref{def:hedge-mwu-definition}. We define $\Phi^{\mathrm{HEDGE}}_{t}~:=~\sum_{j\in [K]}w_{t,j}$. By non-negativity of $w_{t,i}$, we have
\begin{align}
  \frac{1}{\eta} \ln{(\Phi^{\mathrm{HEDGE}}_{T+1,i})}
  \leq \frac{1}{\eta} \ln{(\Phi^{\mathrm{HEDGE}}_{T+1})},
  \label{eq:some-eq49}
\end{align}
where $\eta$ is the learning rate of the algorithm. 
From the LHS and RHS of \eqref{eq:some-eq49}, we can extract the cumulative loss of expert $i$ and the cumulative loss of the learning algorithm respectively. However, we might not be able to exactly extract these two quantities from the potentials as there might be some error terms involved in the extraction process. Indeed, for Hedge, the LHS is exactly the cumulative loss of 
expert $i$; 
however, the RHS can only be upper bounded by the cumulative loss of the learner plus some extra terms:
\begin{align}
  \frac{1}{\eta} \ln{(\Phi^{\mathrm{HEDGE}}_{T+1})} &\leq \sum_{t \in [T]}\sum_{j}\pi_{t,j} \ell_{t,j}  \nonumber\\
  +\underbrace{\frac{\ln{K}}{\eta}}_{\text{exploration term}} 
                                                    &+ \eta \sum_{t \in [T]} \underbrace{\left[\sum_{j}\pi_{t,j} \left(\ell_{t,j}\right)^2\right]}_{\text{Second order error}}.
                                                      \label{eq:some-eq52}
\end{align}
These two terms will appear in the regret analysis. 

On the other hand, note that the WSU update can be written as a linear approximation of the Hedge update at the point $\bar{\ell}_t:=\sum_{j} \pi_{t,j} \ell_{t,j}$ (see Appendix~\ref{app:linear-approx} for details). 
This means that WSU just uses a linear approximation of Hedge when updating the potential. 
This change in potential function will impact the process of extracting the regret from the potential. 

For WSU, the potential is defined as $\Phi^{\mathrm{WSU}}_{t,i}~:=~\pi_{t,i}$ and $\Phi^{\mathrm{WSU}}_{t}~:=~\sum_{j\in [K]}\pi_{t,j}=1$. By non-negativity of $\pi_{t,i}$ we have
\begin{align}
  \frac{1}{\eta} \ln{(\Phi^{\mathrm{WSU}}_{T+1,i})}
  \leq \frac{1}{\eta} \ln{(\Phi^{\mathrm{WSU}}_{T+1})} = 0.
  \label{eq:some-eq51}
\end{align}
Now, the RHS of \eqref{eq:some-eq51} (which is $0$) does not involve any second-order error term. In fact, since WSU is normalized, the RHS does not give us information about the regret.  However, we can extract the difference between the cumulative loss of the algorithm and expert $i$ from the LHS of \eqref{eq:some-eq51}, and this extraction process would lead to a second-order error term. Indeed we have
\begin{align}
  \frac{1}{\eta}\ln{(\Phi^{\mathrm{WSU}}_{T+1,i})} \geq 
  &\sum_{t \in [T]} \left[\sum_{j} \pi_{t,j} \ell_{t,j} - \ell_{t,i}\right]  \nonumber\\
  &-\underbrace{\frac{\ln{K}}{\eta}}_{\substack{\text{exploration} \\ \text{term}}} - \,\eta \sum_{t\in [T]} \underbrace{\left[\sum_{j} \pi_{t,j} \ell_{t,j} - \ell_{t,i}\right]^2}_{\text{Second-order error}}.
  \label{eq:some-eq53}
\end{align}
These two terms in \eqref{eq:some-eq53} will appear in the regret. 

\paragraph{Comparing \eqref{eq:some-eq53} and \eqref{eq:some-eq52}:} Note that the error term in \eqref{eq:some-eq53} is a second-order version of $\left[\sum_{j} \pi_{t,j} \ell_{t,j} - \ell_{t,i}\right]$ for a \textit{fixed} $i$ whereas the second-order term in \eqref{eq:some-eq52}, which is $\left[\sum_{j}\pi_{t,j} \left(\ell_{t,j}\right)^2\right]$, is 
a \textit{weighted average} of $\left(\ell_{t,j}\right)^2$ weighted by $\pi_{t}$.  
\paragraph{Implication for bandit case}
Now, in the bandit case where we use $\hat{\ell}_{t,i}$ to be an unbiased estimated loss for quantity  $\ell_{t,i}$, the 
expectation of the second-order term in \eqref{eq:some-eq52} is $\E\left[ \sum_{j}\pi_{t,j} \left(\hat{\ell}_{t,j}\right)^2 \right] = O(K)$, whereas the expectation of the second-order term in \eqref{eq:some-eq53} is  $\E\left[\left(\sum_{j} \pi_{t,j} \hat{\ell}_{t,j} - \hat{\ell}_{t,i}\right)^2\right]\leq \E\left[\left(\sum_{j} \pi_{t,j} \hat{\ell}_{t,j}\right)^2\right] + \E\left[\left(\hat{\ell}_{t,i}\right)^2\right] =2K + \E\left[\frac{1}{\tilde{\pi}_{t,i}}\right] = O(\frac{K}{\gamma})$. This difference makes the regret bound for WSU-UX larger. 

However, it is not clear whether the potential-based analysis is tight or if there might be a way to get a better regret upper bound. Next, we show our main result, a lower bound demonstrating that it is not possible to get a better upper bound.
\subsection{Lower bound proof}
\label{sec:lower_bound_high_level}

We show that WSU-UX cannot achieve regret better than $\Omega(T^{2/3})$. The following theorem restricts the focus to valid settings of the hyperparameters, which we define after the theorem.

\begin{restatable}{theorem}{StateMainTheorem}
  For any valid set of hyperparameters $(\eta, \gamma)$ there exists $T_0$ such that for any $T\geq T_0$,
  \begin{align*}
    \E[\mathrm{\regret}_T] = \Omega(T^{2/3}).
  \end{align*}
  \label{thm:main-theorem}
\end{restatable}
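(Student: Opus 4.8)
The goal is to exhibit, for any valid $(\eta,\gamma)$, a loss sequence on which WSU-UX incurs $\Omega(T^{2/3})$ belief regret. The plan is to build an explicit instance with a small number of experts (likely $K=2$ or $K=3$) where one ``good'' expert has the lowest cumulative belief-loss but the importance-weighted loss estimates $\hat\ell_{t,i}$ feeding into the WSU update have large variance for a constant fraction of rounds. The intuition comes directly from the potential-analysis comparison in Section~\ref{sec:potential-section}: the second-order error term in \eqref{eq:some-eq53} has expectation $\Theta(K/\gamma)$, and this inflated variance is exactly what drives the upper bound to $T^{2/3}$. I would therefore design the instance so that this term is genuinely \emph{realized}, not merely upper-bounded, forcing the regret to match the upper bound up to constants.

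\textbf{Reduction to hyperparameter regimes.} First I would partition the valid hyperparameter space. For the ``trivial'' cases --- where $\gamma$ is large (say $\gamma = \Omega(T^{-1/3})$ fails to be small enough) or $\eta$ is mis-tuned --- the linear regret terms $\gamma T$, $\tfrac{\eta K T}{\gamma}$, $\tfrac{\ln K}{\eta}$, or $\eta K T$ in the stated upper bound are individually already $\Omega(T^{2/3})$, and a crude lower bound argument (e.g.\ the $\gamma T$ cost of forced uniform exploration, or the $\tfrac{\ln K}{\eta}$ cost of slow adaptation) suffices. The substantive work is the ``non-trivial'' regime where $\eta$ and $\gamma$ are balanced near their optimal scaling $\eta \asymp T^{-2/3}$, $\gamma \asymp T^{-1/3}$; here neither linear term alone is large, so I must extract $\Omega(T^{2/3})$ from the interaction between the variance of $\hat\ell_{t,i}$ and the multiplicative dynamics of $\pi_{t,i}$.

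\textbf{The core construction.} In the non-trivial regime I would use two experts. Over a constant fraction of the $T$ rounds, I set up the good expert's reports and the outcome sequence so that: (i) the good expert's \emph{belief-loss} per round is strictly smaller than the competitor's, guaranteeing it is the best expert in hindsight; yet (ii) when the good expert is selected (probability $\approx \tilde\pi_{t,i}$), its estimated loss $\hat\ell_{t,i} = \ell(r_{i,t},y_t)/\tilde\pi_{t,i}$ is of order $1/\gamma$, producing a large negative-then-positive swing in $\pi_{t+1,i}$ via the update $\pi_{t+1,i} = \pi_{t,i}(1 - \eta(\hat\ell_{t,i} - \sum_j \pi_{t,j}\hat\ell_{t,j}))$. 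The effect of this high-variance update is that the algorithm repeatedly and substantially \emph{down-weights the good expert} after an unlucky high loss estimate, so that $\pi_{t,i}$ fails to concentrate on the good expert. Quantifying the cumulative cost of this failure --- the gap $\sum_t \sum_j \pi_{t,j}\ell_{t,j} - \sum_t \ell(b_{i^\star,t},y_t)$ --- and showing it is $\Omega(T^{2/3})$ is the crux.

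\textbf{Main obstacle and the technical heart.} The hard part, flagged by the authors themselves, is establishing properties (i) and (ii) \emph{simultaneously}: I need the variance of the good expert's loss estimate to stay large for a $\Theta(T)$-sized block of rounds, which requires $\pi_{t,i}$ (hence $\tilde\pi_{t,i}$) to remain bounded away from $1$ so that $1/\tilde\pi_{t,i}$ stays $\Omega(1/\gamma)$; but I simultaneously need the good expert to be genuinely best in belief-loss, which naively would push $\pi_{t,i}$ toward $1$ and kill the variance. The resolution I would pursue is a careful martingale/concentration analysis of the random trajectory $\{\pi_{t,i}\}$: I would track $\E[\pi_{t,i}]$ or $\E[\ln \pi_{t,i}]$ (or the weight $w_{t,i}$), show via a drift computation that the competing forces (upward drift from lower mean loss, downward dispersion from variance) keep $\pi_{t,i}$ trapped in an intermediate band $[c_1, 1-c_2]$ for $\Omega(T)$ rounds with constant probability, and then lower-bound the per-round expected regret contribution on this band by a constant times $\eta \cdot \mathrm{Var}(\hat\ell_{t,i}) \asymp \eta/\gamma$. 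Summing over the $\Theta(T)$ rounds and substituting the non-trivial scaling $\eta/\gamma \asymp T^{-1/3}$ yields $T \cdot T^{-1/3} = T^{2/3}$, matching the upper bound and completing the argument. Controlling the trajectory's dispersion --- bounding second and possibly fourth moments of $\pi_{t,i}$ to rule out the good expert's weight collapsing to $0$ or saturating at $1$ prematurely --- is where I expect the bulk of the intricate calculation to lie.
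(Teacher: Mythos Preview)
Your partition into trivial and non-trivial hyperparameter regimes is correct and matches the paper. However, your core construction for the non-trivial case has a genuine conceptual gap.

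You propose to keep the good expert's probability $\pi_{t,i^\star}$ ``trapped in an intermediate band $[c_1, 1-c_2]$'' so that $1/\tilde\pi_{t,i^\star}$ stays $\Omega(1/\gamma)$. But this arithmetic does not work: if $\pi_{t,i^\star} \geq c_1 > 0$ then $\tilde\pi_{t,i^\star} = (1-\gamma)\pi_{t,i^\star} + \gamma/K \geq (1-\gamma)c_1$, so $1/\tilde\pi_{t,i^\star} = O(1)$, not $\Omega(1/\gamma)$. To make the second-moment term $\E[(\hat\ell_{t,i^\star} - \sum_j \pi_{t,j}\hat\ell_{t,j})^2]$ of order $K/\gamma$ you need $\tilde\pi_{t,i^\star} = \Theta(\gamma/K)$, i.e.\ the good expert's probability must be driven essentially to \emph{zero}, not merely bounded away from $1$. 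An intermediate-band construction therefore cannot extract the $\eta T K/\gamma$ term you are after.

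The paper resolves this with a two-phase loss sequence rather than a single-regime drift argument. In Phase~1 (the first $T/100$ rounds) the eventual best arm is made \emph{temporarily worst} ($\ell_{t,1}=1$, $\ell_{t,2}=0$), which drives $\E[\pi_{t,1}] \leq 1/(KT)$ for a constant fraction of Phase~1; this is what makes the second-moment term large (Claim~\ref{claim:expectation-of-sequared-relative-loss-lower-bound}). In Phase~2 (the remaining $99T/100$ rounds) the losses swap, so arm~1 is best overall. The technical heart is then the opposite of what you anticipated: not showing $\pi_{t,1}$ stays in a band, but showing that $\pi_{t,1}$ \emph{recovers} from near zero to near $1$ by round $T$, so that the $\frac{1}{\eta}\ln\pi_{T+1,1}$ term in the second-order lower bound (Lemma~\ref{theorem:lower-bound-theorem}) does not swamp the positive contributions. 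This recovery argument (Claims~\ref{claim:small-lawn-pi-on-expectation} and~\ref{claim:pi-tilda-expectation-to-pi-expectation-conversion-lemma}) is where the intricate work lies: one must control how low $\pi_{T_1+1,1}$ can fall with high probability, count how many pulls of arm~2 are needed for $\pi_{t,1}$ to double repeatedly back up to a constant, and verify via martingale concentration (a multiplicative Azuma inequality) that enough such pulls occur within Phase~2.
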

The notion of valid hyperparameters is taken from the restrictions imposed by \cite{freeman2020no}. In particular, the restrictions are $\frac{\eta K}{\gamma}\leq 1/2$ and $\eta, \gamma \in (0,1/2)$. See the beginning of the Appendix~\ref{app:main-result-high-level} for more information about the restrictions.

Now, we further partition the set of valid hyperparameters $(\eta, \gamma)$ into two cases: 
\begin{itemize}[topsep=0pt,itemsep=0pt,itemindent=0pt,leftmargin=1.5em]
\item the trivial case: $ \eta < T^{-2/3} $ or $\gamma > T^{-1/3}$;
\item the non-trivial case: $ \eta \geq T^{-2/3}$ and $\gamma \leq T^{-1/3}$.
\end{itemize}
In the trivial case, either the learning rate $\eta$ is too small, causing the algorithm to take a long time to concentrate on the optimal expert and incurring a large regret of order $\Omega (T^{2/3})$, or $\gamma$ is so large that the uniform exploration would cause the algorithm to incur a large regret of $\Omega (T^{2/3})$. The proof for the trivial case, along with all other results in this paper, can be found in the appendix. 
For the non-trivial case, we show the following.
\begin{restatable}{theorem}{RestateNonTrivialTheorem}
  For $K=2$ and for any valid set of $(\eta, \gamma)$ in the non-trivial case, there exists $T_0$ such that for any $T\geq T_0$, we have a loss sequence  $\{\ell_t\}_{t=1}^T$ such that
  \begin{align}
    \E[\mathrm{\regret}_T] = \Omega(T^{2/3}).
  \end{align}
  \label{thm:the-t-2-3-theorem}
\end{restatable}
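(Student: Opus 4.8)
The plan is to fix $K=2$, write $p_t := \pi_{t,1}$ (so $\pi_{t,2}=1-p_t$), and reduce the whole argument to the scalar recursion that WSU-UX induces on $p_t$. Substituting into the update of Algorithm~\ref{alg:WSU-UX} gives $p_{t+1}=p_t\bigl(1-\eta(1-p_t)(\hat{\ell}_{t,1}-\hat{\ell}_{t,2})\bigr)$, where, under oblivious (hence deterministic) losses $\ell_{t,i}$, the only randomness is the drawn arm $I_t\sim\tilde\pi_t$. First I would compute the one-step conditional drift and variance of $\log p_t$: since $\hat{\ell}_{t,1}-\hat{\ell}_{t,2}=\ell_{t,1}/\tilde\pi_{t,1}$ when arm $1$ is drawn and $=-\ell_{t,2}/\tilde\pi_{t,2}$ when arm $2$ is drawn, a second-order expansion yields $\E[\log(p_{t+1}/p_t)\mid p_t]\approx \eta(1-p_t)(\ell_{t,2}-\ell_{t,1})-\tfrac{\eta^2(1-p_t)^2}{2}\bigl(\tfrac{\ell_{t,1}^2}{\tilde\pi_{t,1}}+\tfrac{\ell_{t,2}^2}{\tilde\pi_{t,2}}\bigr)$. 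This exposes the mechanism behind the bound: the first (``correct'') term points toward the better expert, but the second term — precisely the $O(K/\gamma)$ second-order error of \eqref{eq:some-eq53}, amplified by $1/\tilde\pi_{t,1}\approx 2/\gamma$ when $p_t$ is small — is a downward bias on $\log p_t$ that can dominate.

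Next I would design the loss sequence, realized as squared losses with $y_t\equiv 0$ and fixed beliefs $b_{i,t}=\sqrt{\ell_{t,i}}$, in two phases with all parameters chosen as functions of $(\eta,\gamma)$. In Phase~1 expert $1$ is made slightly worse, driving $p_t$ from $1/2$ down to a floor of order $\gamma$ that lies below the unstable fixed point $p^{*}$ of the Phase~2 dynamics; there $\tilde\pi_{t,1}$ collapses to $\approx\gamma/2$, producing the promised large variance of the best expert's estimate. In Phase~2 expert $1$ is made better by a margin $\delta$ chosen just below the threshold $\eta\,\ell_{t,1}^2/\gamma$ at which the drift above changes sign, so that the variance term still dominates and $\log p_t$ keeps drifting \emph{downward} even though expert $1$ is now genuinely better. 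The margin and phase lengths are balanced so that expert $1$ wins overall, $\sum_t\ell_{t,1}<\sum_t\ell_{t,2}$, with cumulative gap $\Theta(T^{2/3})$, which is exactly the regret incurred if the learner keeps almost all its mass on expert $2$.

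The crux is to show $p_t$ really stays below a constant (say $1/2$) on a constant fraction of rounds in expectation. Since the arithmetic drift of $p_t$ is positive, I cannot track $\E[p_t]$ directly; instead I would exploit the negative drift of $\log p_t$ via a supermartingale $\phi(p_t)=p_t^{\beta}$ for a suitable constant $\beta\in(0,1)$, verifying $\E[\phi(p_{t+1})\mid p_t]\le\phi(p_t)$ throughout the region below $p^{*}$ (the bounded-increment structure — arm-$1$ draws shrink $p_t$ by at most the factor $1/2$ because $\eta K/\gamma\le 1/2$ — makes this tractable). A maximal inequality then bounds the escape probability $\prob[\sup_{s}p_s\ge p^{*}]$ by $(p_{T_1}/p^{*})^{\beta}$, which the Phase~1 buffer makes at most $1/2$; on the complementary event $p_t\le p^{*}\le 1/2$ for all of Phase~2, so $\E[1-p_t]=\Omega(1)$ on $\Omega(T)$ rounds and the belief regret $\sum_t(1-p_t)(\ell_{t,2}-\ell_{t,1})$ is $\Omega(\delta T)=\Omega(T^{2/3})$ after subtracting the $O(1/\eta)$ cost of Phase~1. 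For the sub-regime where $\eta$ is near its minimum $T^{-2/3}$ — where the Phase~1 cost is not negligible and the variance has little room — I would instead use a single-phase, constant-gap instance and the complementary mechanism of \emph{slow concentration}: the drift $\eta\Delta$ is so small that $p_t$ cannot approach $1$ within $T$ rounds, giving regret $\Omega(1/\eta)=\Omega(T^{2/3})$.

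I expect the main obstacle to be exactly the tension the introduction flags: making (i) large variance of the best expert's estimate on a constant fraction of rounds and (ii) the best expert winning overall hold \emph{simultaneously}. These pull against each other — depressing $p_t$ in Phase~1 and holding it down in Phase~2 costs expert $1$ cumulative loss, which must be repaid by its Phase~2 advantage $\delta$ without that advantage being large enough to flip the sign of the $\log p_t$ drift and let $p_t$ escape to $1$. Converting the heuristic drift computation into rigorous, \emph{uniform} control of the escape probability near the unstable fixed point $p^{*}$ (where the drift is weakest), across the whole valid non-trivial range of $(\eta,\gamma)$, together with patching the boundary sub-regimes between the variance and slow-concentration mechanisms, is where the delicate work lies.
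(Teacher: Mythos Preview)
Your route diverges from the paper's at the level of the mechanism that generates the regret. You try to design a margin $\delta$ so small that the second-order variance term keeps $\log p_t$ drifting \emph{downward} in Phase~2, and then argue the algorithm never recovers. The paper does the opposite: its loss sequence (Definition~\ref{df:loss-sequence}) has $\ell_{t,1}=0$ throughout Phase~2, so $p_t$ is \emph{monotone nondecreasing} there and the algorithm \emph{does} recover --- Claim~\ref{claim:small-lawn-pi-on-expectation} shows $\E[\ln\pi_{T+1,1}]\ge\ln(5/8)$. The regret lower bound is extracted not from ``mass on the wrong arm'' but from the reverse potential inequality (Lemma~\ref{theorem:lower-bound-theorem}): $\regret\ge\tfrac{\ln\pi_{T+1,1}+\ln K}{\eta}+\tfrac{\eta}{4}\sum_t(\hat\ell_{t,1}-\bar{\hat\ell}_t)^2$, and the three claims convert the right-hand side into $c_1/\eta+c_2\eta TK/\gamma+c_3\gamma T$, which is $\Omega(T^{2/3})$ by AM--GM for \emph{every} valid $(\eta,\gamma)$. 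So the paper's loss sequence is fixed and the hard work (multiplicative Azuma) goes into proving recovery (Claim~\ref{claim:small-lawn-pi-on-expectation}), while the second-order contribution (Claim~\ref{claim:expectation-of-sequared-relative-loss-lower-bound}) comes for free from the $\Theta(T)$ rounds in Phase~1 where $\E[\pi_{t,1}]\le 1/(KT)$.

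Your proposal has a concrete gap at exactly the point you flag. The supermartingale $\phi(p)=p^\beta$ requires $\E[(1-X)^\beta\mid p_t]\le 1$, which for a two-point $X$ forces $\beta$ to satisfy roughly $\beta\lesssim |D(p_t)|/V(p_t)$, where $D$ is the drift of $\log p_t$ and $V$ its second moment. Since $D(p_t)\to 0$ linearly as $p_t\uparrow p^*$ while $V(p_t)$ stays of order $\eta^2\ell_{t,1}^2/\tilde\pi_{t,1}\asymp \eta^2/\gamma$, no \emph{uniform} $\beta>0$ makes $p_t^\beta$ a supermartingale on all of $(0,p^*)$; your maximal-inequality step therefore does not go through as stated. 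One would need a different Lyapunov function tailored to a drift that vanishes at the boundary, and it is not clear this can be done uniformly over the whole non-trivial range $\eta/\gamma\in[T^{-1/3},1/4]$. The paper sidesteps this entirely: because $\ell_{t,1}=0$ in Phase~2, there is no unstable fixed point at all --- $p_t$ can only go up --- and the delicate part is instead to control how \emph{fast} it goes up, which is what the doubling argument (Lemmas~\ref{lm:muplitplication-by-2-in-phase2.1}--\ref{lm:high-prob-for-phase2.1}) handles.
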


\subsection{High-level proof for non-trivial case}
In this subsection, we give a high-level proof of Theorem~\ref{thm:the-t-2-3-theorem}. We first introduce the following loss sequence.  
\begin{definition}
  For any $T$, we define
  \begin{align*}
    \{\ell_t\}_{t=1}^T = \begin{cases}
      \ell_{t,1}=1, \ell_{t,2}=0& \text{for } 1 \leq t \leq \frac{T}{100} \\
      \ell_{t,1}=0, \ell_{t,2}=1& \text{for } \frac{T}{100} < t \leq T
    \end{cases}.
  \end{align*}
  Moreover, we call the set of rounds $ \{t: 1 \leq t \leq \frac{T}{100}\}$ Phase 1, where only arm~1 incurs loss, and the set of rounds  $\{t: \frac{T}{100} < t \leq T\}$ Phase 2, where only arm~2 incurs loss.	
  \label{df:loss-sequence}
\end{definition}
From now on, by $\{\ell_t\}_{t=1}^T$ we mean the loss sequence defined in Definition~\ref{df:loss-sequence}. Note that in this loss sequence, the best arm is arm 1. 
Our goal is to show that this particular loss sequence forces the algorithm to incur large regret. We do this by decomposing the regret into three terms, as described in Theorem~\ref{theorem:theorem_of-case-2-by-symbols}.
\begin{restatable}{theorem}{RestateThreeTermsLowerBound}
  When running WSU-UX for any valid choice of $(\eta, \gamma)$ in the non-trivial case, there exists $T_0$ such that for any $T \geq T_0$, for loss sequence $\{\ell_t\}_{t=1}^T$,  
  we have for some constants $c_1,c_2,c_3 > 0$,
  \begin{align}
    \E\left[\mathrm{\regret}_T\right] \geq c_1\frac{1}{\eta} + c_2  \frac{\eta TK}{\gamma} + c_3 \gamma T.
    \label{eq:some-eq3}
  \end{align}
  \label{theorem:theorem_of-case-2-by-symbols}
\end{restatable}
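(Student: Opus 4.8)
The plan is to bypass the potential bound \eqref{eq:some-eq53} (which only yields regret \emph{upper} bounds) and instead track the scalar $q_t := \pi_{t,1}$, the unmixed weight on the best arm, directly. Since the learner samples $I_t \sim \tilde{\pi}_t$, its expected round-$t$ loss is $\sum_j \tilde{\pi}_{t,j}\ell_{t,j}$, and as arm~1 is optimal with cumulative loss $T/100$, substituting $\tilde{\pi}_{t,2} = (1-\gamma)(1-q_t)+\tfrac{\gamma}{2}$ collapses the regret into the net mass placed on arm~2 across the two phases,
\begin{align*}
  \E[\regret_T] = (1-\gamma)\,\Delta + \tfrac{\gamma}{2}\bigl(|\text{Phase 2}|-|\text{Phase 1}|\bigr), \qquad \Delta := \E\Bigl[\textstyle\sum_{t>T/100}(1-q_t) - \sum_{t\le T/100}(1-q_t)\Bigr].
\end{align*}
The second summand equals $\tfrac{\gamma}{2}\cdot\tfrac{98}{100}T = \Theta(\gamma T)$ and supplies the $c_3\gamma T$ term; since $1-\gamma\ge\tfrac12$, it remains to prove $\Delta \ge c\bigl(\tfrac{1}{\eta}+\tfrac{\eta TK}{\gamma}\bigr)$.

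The key observation is that the two phases are driven by time-reversed copies of the same logistic field: substituting the loss sequence into the WSU-UX update gives $\E[q_{t+1}\mid\mathcal{F}_t]=q_t(1\mp\eta(1-q_t))$ (minus in Phase~1, plus in Phase~2), and within each phase $q_t$ is pathwise monotone (non-increasing in Phase~1, non-decreasing in Phase~2, since the unselected/zero-loss arm triggers no update). I would then sum the expected \emph{log}-increments over each phase, which is the natural quantity because the importance weights create large multiplicative jumps. A second-order expansion of $\E[\ln(q_{t+1}/q_t)\mid\mathcal{F}_t]$ gives, in Phase~1, the upper bound $-\eta(1-q_t)-\tfrac{\eta^2(1-q_t)^2}{2\tilde{\pi}_{t,1}}$ (from $\ln(1-a)\le -a-\tfrac{a^2}{2}$ with $a=\eta(1-q_t)/\tilde{\pi}_{t,1}\le 2\eta/\gamma\le\tfrac12$, so the jump stays bounded away from $1$) and, in Phase~2, the upper bound $\eta(1-q_t)$. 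Telescoping these across the two phases makes the random Phase-1 endpoint cancel \emph{exactly}, leaving
\begin{align*}
  \Delta \ge \frac{\ln 2 - |\E[\ln q_{T+1}]|}{\eta} + \frac{\eta}{2}\,\E[V_1], \qquad V_1 := \sum_{t\le T/100}\frac{(1-q_t)^2}{\tilde{\pi}_{t,1}}.
\end{align*}
The first term is the Phase-2 overshoot (Phase~1 starts at $q_1=\tfrac12$ and has no mirror for the descent of $1-q$ below $\tfrac12$), giving $c_1/\eta$; the term $\tfrac{\eta}{2}\E[V_1]$ is the accumulated second-order (variance) contribution that is absent in full information and should give $c_2\,\eta TK/\gamma$.

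It then suffices to establish two quantitative facts, which are exactly the ``(i) large variance'' and ``(ii) best arm wins'' phenomena from the introduction. For (i) I would show $\E[V_1]=\Omega(TK/\gamma)$: by the downward drift and monotonicity, $q_t$ falls below $\gamma/2$ within the first $O(\eta^{-1}\ln\tfrac1\gamma)=O(T^{2/3}\ln T)\ll T/100$ rounds and then stays there for a constant fraction of Phase~1, on which $\tilde{\pi}_{t,1}\le\gamma$ and $(1-q_t)^2\ge\tfrac12$, so each summand is $\gtrsim 1/\gamma=\Theta(K/\gamma)$. For (ii) I would show $|\E[\ln q_{T+1}]|\le\tfrac{\ln 2}{2}$, i.e.\ arm~1 recovers to near-maximal weight: the Phase-2 drift is strongly upward away from $q=1$, and the total descent $|\E[\ln q_{T/100+1}]|\approx \eta\tfrac{T}{100}+\tfrac{\eta^2}{2}\E[V_1]$ is only $O(\eta T)$ precisely because the validity constraint forces $\tfrac{\eta K}{\gamma}\le\tfrac12$, leaving enough of the $99T/100$ rounds of Phase~2 (recovery log-rate $\approx\eta$) for $q$ to climb back to near~$1$.

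The main obstacle is that (i) and (ii) pull in opposite directions and must hold \emph{simultaneously} for the same random trajectory: (i) wants $q_t$ driven very small so that the best arm's estimated loss has variance $\Theta(K/\gamma)$ for $\Theta(T)$ rounds, while (ii) needs $q_t$ to recover all the way back despite $\ln q_{T/100+1}$ being as low as $-\Theta(\eta T)$. Turning the expectation-level heuristics above into theorems requires replacing ``$\E[\ln q]$ follows its drift'' by high-probability control of the martingale part of $\ln q_t$ throughout both phases, via Freedman/Azuma-type bounds that tolerate the constant-size multiplicative jumps permitted by $2\eta/\gamma\le\tfrac12$; and it is the validity constraint $\eta K/\gamma\le 1/2$ that reconciles the tension, guaranteeing the Phase-1 descent is shallow enough to be undone inside Phase~2. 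This is the intricate core of the argument.
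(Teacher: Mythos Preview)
Your proposal is correct and is essentially a repackaging of the paper's own proof. Your log-telescoping identity
\[
\Delta \;\ge\; \frac{\ln 2 + \E[\ln q_{T+1}]}{\eta} \;+\; \frac{\eta}{2}\,\E[V_1]
\]
is exactly the expectation of the paper's pathwise second-order lower bound (Lemma~\ref{theorem:lower-bound-theorem}) specialized to $K=2$ and this loss sequence: in Phase~1 one has $(\hat\ell_{t,1}-\sum_j\pi_{t,j}\hat\ell_{t,j})^2 = \frac{(1-q_t)^2}{\tilde\pi_{t,1}^2}\indicator[I_t=1]$, whose conditional expectation is $\frac{(1-q_t)^2}{\tilde\pi_{t,1}}$, so your $V_1$ and the paper's second-order sum coincide in expectation. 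Your task~(ii), $\E[\ln q_{T+1}]>-\ln 2$, is the paper's Claim~\ref{claim:small-lawn-pi-on-expectation}; your task~(i), $\E[V_1]=\Omega(TK/\gamma)$, is the paper's Claim~\ref{claim:expectation-of-sequared-relative-loss-lower-bound}; and the paper's high-probability machinery for (ii) (multiplicative Azuma to control the number of Phase-1 pulls of arm~1, then a doubling argument to bound the number of Phase-2 pulls of arm~2 needed for recovery, then a second concentration step) is precisely the ``Freedman/Azuma-type'' control you flag as the intricate core.

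The one genuine organizational difference is how the $\gamma T$ term arises. The paper isolates it as a separate Claim~\ref{claim:pi-tilda-expectation-to-pi-expectation-conversion-lemma}, bounding $\E\bigl[\sum_t(\tilde\pi_{t,j}-\pi_{t,j})\hat\ell_{t,j}\bigr]\ge c_3\gamma T$, which in turn requires showing $\E[\pi_{t,2}]\le\tfrac14$ throughout Phase~2.3 (Lemma~\ref{lm:dominant-stage}). Your decomposition extracts $\tfrac{\gamma}{2}\bigl(|\text{Phase 2}|-|\text{Phase 1}|\bigr)=\tfrac{49}{100}\gamma T$ directly from the phase-length asymmetry, which is cleaner; but this buys nothing in the end, because you still need $\Delta>0$, and that already forces the full recovery argument that the paper reuses for Claim~\ref{claim:pi-tilda-expectation-to-pi-expectation-conversion-lemma}. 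So the two routes have identical technical content and difficulty, merely arranged slightly differently. (Minor note: your conditioning should be on $\mathcal{F}_{t-1}$ rather than $\mathcal{F}_t$, since $q_t,\tilde\pi_{t,1}$ are $\mathcal{F}_{t-1}$-measurable while $q_{t+1}$ depends on $I_t$; this is cosmetic.)
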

Note that Theorem~\ref{theorem:theorem_of-case-2-by-symbols} implies  
$\E\left[\mathrm{\regret}_T\right] = \Omega(T^{2/3})$
as the RHS of \eqref{eq:some-eq3} can be lower bounded by $\Omega(T^{2/3})$.

\subsubsection{Proof of Theorem~\ref{theorem:theorem_of-case-2-by-symbols}}
It remains to show \eqref{eq:some-eq3}. To do that, we first introduce the following key lemma that follows similar steps as Lemma~4.3 of \cite{freeman2020no} but with all the inequalities in the reverse direction, which implies a second-order lower bound.  

\begin{restatable}[Second-order lower bound]{lemma}{RestateSecondOrderLemma}
  For any valid choice of $(\eta, \gamma)$ when running WSU-UX on loss sequence 
  $\{\ell_t\}_{t=1}^T$, 
  we get
  \begin{align}
    \sum_{t=1}^T\sum_{j\in [K]}\pi_{t,j}\hat{\ell}_{t,j} - \sum_{t=1}^T\hat{\ell}_{t,1} 
    \geq\frac{\ln{\pi_{T+1,1}} + \ln{K}}{\eta}  + \frac{\eta}{4} \sum_{t=1}^T(\hat{\ell}_{t,1} - \sum_{j\in [K]}\pi_{t,j}\hat{\ell}_{t,j})^2.
      \label{eq:some-eq4}
  \end{align}
  \label{theorem:lower-bound-theorem}
\end{restatable}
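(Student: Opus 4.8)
The plan is to reproduce, in reverse, the potential argument sketched in Section~\ref{sec:potential-section} for $\Phi^{\mathrm{WSU}}_{t,1}=\pi_{t,1}$: the entire inequality will come out of a single telescoped logarithm combined with one pointwise upper bound on $\ln(1-z)$. Throughout I write $\bar{\ell}_t:=\sum_{j\in[K]}\pi_{t,j}\hat{\ell}_{t,j}$ and $x_t:=\hat{\ell}_{t,1}-\bar{\ell}_t$, so that the target \eqref{eq:some-eq4} is exactly a lower bound on $-\sum_t x_t$ with a second-order correction $\tfrac{\eta}{4}\sum_t x_t^2$.

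First I would note that the WSU-UX update (line~7 of Algorithm~\ref{alg:WSU-UX}) for arm~$1$ is $\pi_{t+1,1}=\pi_{t,1}(1-\eta x_t)$. Taking logarithms, telescoping over $t=1,\dots,T$, and using $\pi_{1,1}=1/K$ gives the identity
\begin{align*}
\ln \pi_{T+1,1}+\ln K = \sum_{t=1}^T \ln(1-\eta x_t).
\end{align*}
For this to be legitimate I must control the range of $\eta x_t$. Using $\hat{\ell}_{t,1}\le 1/\tilde{\pi}_{t,1}\le K/\gamma$ gives the upper bound $\eta x_t\le \eta\hat{\ell}_{t,1}\le \tfrac{\eta K}{\gamma}\le\tfrac12$, where the validity constraint $\tfrac{\eta K}{\gamma}\le\tfrac12$ enters. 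For the lower bound I would use $\bar{\ell}_t=\pi_{t,I_t}\hat{\ell}_{t,I_t}=\pi_{t,I_t}\ell_{t,I_t}/\tilde{\pi}_{t,I_t}\le \ell_{t,I_t}/(1-\gamma)\le 1/(1-\gamma)$ (losses lie in $[0,1]$), so $\eta x_t\ge -\eta\bar{\ell}_t\ge -\eta/(1-\gamma)>-1$ because $\eta,\gamma<\tfrac12$. Hence $\eta x_t\in(-1,\tfrac12]$, and in particular $1-\eta x_t>0$.

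The analytic heart is the pointwise bound
\begin{align*}
\ln(1-z)\le -z-\tfrac14 z^2 \qquad \text{for all } z\in(-1,\tfrac12].
\end{align*}
I would prove it by setting $g(z):=-z-\tfrac14 z^2-\ln(1-z)$ and checking $g(0)=0$, $g'(0)=0$, and $g''(z)=\tfrac{1}{(1-z)^2}-\tfrac12$. Thus $g$ is convex on $[1-\sqrt2,1)$ with its global minimum $0$ attained at the critical point $z=0$, so $g\ge0$ on $[1-\sqrt2,\tfrac12]$; on the concave piece $(-1,1-\sqrt2]$ the function lies above the chord joining its two endpoint values, both of which are positive, so $g\ge0$ there as well. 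This is precisely where the constant $\tfrac14$ is forced rather than the $\tfrac12$ one gets for $z\ge0$ alone: because arm~$1$ is often \emph{not} the drawn arm, $x_t$ can be negative, so the inequality must survive on the full two-sided interval. I expect the main obstacle to be exactly this — pinning down that $\eta x_t>-1$ (so the logarithm is finite and $g\ge0$ applies) on the negative side — which is what the hypotheses $\eta,\gamma<\tfrac12$ buy us.

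Finally I would substitute $z=\eta x_t$, sum over $t\in[T]$, and combine with the telescoped identity to obtain $\ln\pi_{T+1,1}+\ln K\le -\eta\sum_{t}x_t-\tfrac{\eta^2}{4}\sum_{t}x_t^2$. Rearranging for $-\sum_t x_t$ and dividing by $\eta>0$ yields
\begin{align*}
-\sum_{t=1}^T x_t \ge \frac{\ln\pi_{T+1,1}+\ln K}{\eta}+\frac{\eta}{4}\sum_{t=1}^T x_t^2,
\end{align*}
which is \eqref{eq:some-eq4} after unfolding $-x_t=\bar{\ell}_t-\hat{\ell}_{t,1}$ and $x_t^2=(\hat{\ell}_{t,1}-\sum_{j}\pi_{t,j}\hat{\ell}_{t,j})^2$. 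The estimate is deterministic (no expectation appears), matching the statement, and only the validity constraints on $(\eta,\gamma)$ together with the boundedness of the loss are used.
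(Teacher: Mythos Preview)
Your proposal is correct and follows essentially the same route as the paper: telescope $\ln\pi_{t,1}$ via the WSU-UX update, apply the pointwise bound $\ln(1-z)\le -z-\tfrac14 z^2$ on the appropriate interval, and rearrange. The only noteworthy difference is that your verification of $\eta x_t\in(-1,\tfrac12]$ uses the general facts $\hat{\ell}_{t,1}\le K/\gamma$ and $\pi_{t,I_t}/\tilde{\pi}_{t,I_t}\le 1/(1-\gamma)$, whereas the paper's Lemma~\ref{lm:range-of-relative-estimated-loss} argues case-by-case using the specific structure of the loss sequence $\{\ell_t\}_{t=1}^T$; your argument is slightly more general and just as short.
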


Next, to convert \eqref{eq:some-eq4} to the lower bound in \eqref{eq:some-eq3}, we list three claims that hold when running WSU-UX on the loss sequence in Definition~\ref{df:loss-sequence} given hyperparameters falling in the non-trivial case. Note that each claim corresponds to a term on the right-hand side of \eqref{eq:some-eq3}.
\begin{restatable}[Concentration on best arm at the end]{claim}{RestateClaimOne}
  For large enough $T$, 
  there exists $c_1 >0$ such that
  \begin{align}
    \E\left[\ln{\pi_{T+1,1}} + \ln{K}\right] \geq c_1.
    \label{inequality:inequality-in-lawn-lemma}
  \end{align}
  \label{claim:small-lawn-pi-on-expectation}
\end{restatable}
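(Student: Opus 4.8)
The plan is to bypass any direct analysis of $\E[\ln \pi_{T+1,1}]$ (which is delicate, since $\ln\pi_{t,1}$ can plunge by $\Omega(\eta T)$ in expectation during Phase~1 and much further on rare trajectories, and Jensen points the wrong way) and instead control the \emph{odds ratio} $R_t := \pi_{t,2}/\pi_{t,1}$. The payoff is the elementary identity $\pi_{T+1,1}=1/(1+R_{T+1})$ together with $\ln(1+x)\le x$, which gives the pathwise bound $\ln\pi_{T+1,1}=-\ln(1+R_{T+1})\ge -R_{T+1}$, hence
\[
\E[\ln\pi_{T+1,1}+\ln K]\;\ge\;\ln 2-\E[R_{T+1}].
\]
So it suffices to show $\E[R_{T+1}]=o(1)$, after which $c_1=\tfrac12\ln 2$ works for $T$ large.

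First I would write out the WSU-UX update for $K=2$ on the loss sequence of Definition~\ref{df:loss-sequence} and read off the one-step multiplicative change of $R_t$. Since only the selected arm contributes a nonzero importance-weighted loss, in Phase~2 (arm~2 is the loser) one gets $R_{t+1}=R_t\cdot\frac{1-\eta\pi_{t,1}\hat{\ell}_{t,2}}{1+\eta\pi_{t,2}\hat{\ell}_{t,2}}$. Taking the conditional expectation over $I_t$ makes the $1/\tilde{\pi}_{t,2}$ factors cancel (using $\pi_{t,1}+\pi_{t,2}=1$), yielding $\E[R_{t+1}\mid\mathcal F_t]=(1-\frac{\eta}{1+b_t})R_t$ with $b_t=\eta\pi_{t,2}/\tilde{\pi}_{t,2}\le 2\eta\le 1$, so $\E[R_{t+1}\mid\mathcal F_t]\le(1-\tfrac{\eta}{2})R_t$. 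The validity condition $\eta K/\gamma\le 1/2$ together with $\tilde{\pi}_{t,i}\ge\gamma/K$ keeps all factors in $(0,1)$, so $R_t$ stays positive and finite. The symmetric computation in Phase~1 gives $\E[R_{t+1}\mid\mathcal F_t]\le(1+2\eta)R_t$, where the phase-1 denominator factor $b_t'=\eta(1-\pi_{t,1})/\tilde{\pi}_{t,1}\le \eta K/\gamma\le 1/2$ keeps the growth rate a genuine constant multiple of $\eta$.

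Second, I would chain these one-step bounds by the tower rule. Since $R_1=1$, Phase~1 has $T/100$ rounds, and Phase~2 has $99T/100$ rounds,
\[
\E[R_{T+1}]\le(1+2\eta)^{T/100}\,(1-\tfrac{\eta}{2})^{99T/100}\le\exp\!\Big(\eta T\big(\tfrac{2}{100}-\tfrac{99}{200}\big)\Big)=\exp\!\Big(-\tfrac{19}{40}\,\eta T\Big).
\]
In the non-trivial case $\eta\ge T^{-2/3}$, so $\eta T\ge T^{1/3}$ and $\E[R_{T+1}]\le\exp(-\tfrac{19}{40}T^{1/3})=o(1)$. Combined with the first display this gives $\E[\ln\pi_{T+1,1}+\ln 2]\ge\ln 2-o(1)\ge\tfrac12\ln 2$ for all $T$ beyond some $T_0$, establishing \eqref{inequality:inequality-in-lawn-lemma}.

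The conceptual crux — and the only real obstacle — is choosing this potential. The asymmetry of the phase lengths ($1:99$) is exactly what makes the net exponent negative for \emph{every} admissible $(\eta,\gamma)$: the slow phase-2 decay rate $\tfrac{\eta}{2}$ over $99\%$ of the rounds dominates the faster phase-1 growth rate $2\eta$ over $1\%$ of them ($\tfrac{99}{2}\gg 2$). Working with $R_t$ converts both phases into clean multiplicative super-/sub-martingale recursions with explicit rates, and the bound $\ln\pi_{T+1,1}\ge -R_{T+1}$ then turns an exponential-moment estimate into the desired constant lower bound with no tail surgery on the (potentially exponentially small) end-of-Phase-1 probability. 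The step needing care is the conditional-expectation computation: one must verify that the $1/\tilde{\pi}_{t,i}$ terms cancel exactly (this uses only $\pi_{t,1}+\pi_{t,2}=1$) and then bound $b_t,b_t'$ via the validity constraints so that the rates are honest constant multiples of $\eta$.
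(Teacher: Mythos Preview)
Your argument is correct and substantially simpler than the paper's. The two key identities
\[
\E[R_{t+1}\mid\mathcal F_{t-1}]=\Bigl(1-\tfrac{\eta}{1+b_t}\Bigr)R_t\quad(\text{Phase 2}),\qquad
\E[R_{t+1}\mid\mathcal F_{t-1}]=\Bigl(1+\tfrac{\eta}{1-b_t'}\Bigr)R_t\quad(\text{Phase 1})
\]
do hold exactly (the cancellation uses both $\pi_{t,1}+\pi_{t,2}=1$ and $\tilde\pi_{t,1}+\tilde\pi_{t,2}=1$, but the latter is automatic), and your bounds $b_t\le 2\eta$ and $b_t'\le \eta K/\gamma\le 1/2$ are valid under the hyperparameter constraints. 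The chaining and the final inequality $\ln\pi_{T+1,1}\ge -R_{T+1}$ are clean.

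By contrast, the paper proves Claim~\ref{claim:small-lawn-pi-on-expectation} via a multi-stage high-probability argument: it defines events $\mathcal E_1$ (arm~1's probability at the end of Phase~1 is at least $2^{-M}$) and $\mathcal E_2$ (recovery to $1/4$ by the end of Phase~2.1), shows each holds with probability $1-O(1/T^2)$ using a multiplicative Azuma inequality for martingales and a doubling analysis, then uses Chebyshev to control $\Pr(\pi_{T+1,1}\le 3/4\mid\mathcal E_2)$, and finally splices everything together with the deterministic worst-case bound $\pi_{T+1,1}\ge 2^{-(T/100+1)}$ to absorb the bad events. Your odds-ratio potential avoids all of this: it converts the problem into a single supermartingale-type recursion with explicit rates, and the pathwise bound $\ln\pi_{T+1,1}\ge -R_{T+1}$ turns an expectation estimate directly into the desired constant, with no tail surgery. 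The paper's route does have one compensating advantage: the intermediate events $\mathcal E_1,\mathcal E_2$ and Lemma~\ref{lm:pi-2-at-the-end-expectation} are reused to prove Claim~\ref{claim:pi-tilda-expectation-to-pi-expectation-conversion-lemma}. That said, your bound $\E[\pi_{t,2}]\le\E[R_t]$ together with the same chaining would also yield Lemma~\ref{lm:dominant-stage} (and hence Claim~\ref{claim:pi-tilda-expectation-to-pi-expectation-conversion-lemma}) more simply, so the odds-ratio approach appears to streamline both claims.
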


\begin{restatable}[Second moment lower bound]{claim}{RestateClaimTwo}
  For large enough $T$, there exists $c_2$ such that
  \begin{align*}
    \mathbb{E}\left[\sum_{t=1}^{T} \left(\hat{\ell}_{t,1} - \sum_{j \in [1,2]} \pi_{t,j} \hat{\ell}_{t,j}\right)^2\right] \geq 4 c_2\frac{TK}{\gamma}.
  \end{align*}
  \label{claim:expectation-of-sequared-relative-loss-lower-bound}
\end{restatable}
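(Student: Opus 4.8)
The plan is to compute the conditional second moment of each summand exactly, reduce the claim to a lower bound on a sum of convex functions of $\pi_{t,1}$ over Phase~1, and then control $\E[\pi_{t,1}]$ by a clean expectation-contraction argument. First I would use $K=2$ to rewrite the relative loss: since $\sum_{j}\pi_{t,j}\hat{\ell}_{t,j}=\pi_{t,1}\hat{\ell}_{t,1}+\pi_{t,2}\hat{\ell}_{t,2}$, one has $\hat{\ell}_{t,1}-\sum_{j}\pi_{t,j}\hat{\ell}_{t,j}=\pi_{t,2}(\hat{\ell}_{t,1}-\hat{\ell}_{t,2})$, and because only the played arm has a nonzero estimate the cross term vanishes, giving $(\hat{\ell}_{t,1}-\hat{\ell}_{t,2})^2=\hat{\ell}_{t,1}^2+\hat{\ell}_{t,2}^2$. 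In a Phase~1 round $\ell_{t,2}=0$, hence $\hat{\ell}_{t,2}\equiv 0$ and $\hat{\ell}_{t,1}=1/\tilde{\pi}_{t,1}$ with probability $\tilde{\pi}_{t,1}$; taking the conditional expectation given the history $\mathcal{F}_{t-1}$ (under which $\pi_{t,2}$ is measurable) yields $\E[(\hat{\ell}_{t,1}-\sum_{j}\pi_{t,j}\hat{\ell}_{t,j})^2\mid\mathcal{F}_{t-1}]=\pi_{t,2}^2/\tilde{\pi}_{t,1}=g(\pi_{t,1})$, where $g(p):=(1-p)^2/((1-\gamma)p+\gamma/2)$. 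Dropping the non-negative Phase~2 summands, it then suffices to show $\sum_{t\le T/100}\E[g(\pi_{t,1})]=\Omega(T/\gamma)$.

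Next I would establish two structural facts about Phase~1. Since the relative loss $\hat{\ell}_{t,1}-\sum_j \pi_{t,j}\hat{\ell}_{t,j}=\pi_{t,2}\hat{\ell}_{t,1}\ge 0$, the update keeps $\pi_{t,1}$ \emph{pathwise non-increasing}; as $\pi_{1,1}=1/2$, this forces $\pi_{t,1}\le 1/2$ and $\pi_{t,2}\ge 1/2$ throughout Phase~1. Combining this with the unbiasedness identity $\E[\hat{\ell}_{t,1}-\sum_j\pi_{t,j}\hat{\ell}_{t,j}\mid\mathcal{F}_{t-1}]=\ell_{t,1}-\sum_j\pi_{t,j}\ell_{t,j}=\pi_{t,2}$ gives the contraction
\begin{align*}
\E[\pi_{t+1,1}\mid\mathcal{F}_{t-1}]=\pi_{t,1}\bigl(1-\eta\pi_{t,2}\bigr)\le\Bigl(1-\tfrac{\eta}{2}\Bigr)\pi_{t,1},
\end{align*}
so that $\E[\pi_{t,1}]\le \tfrac12(1-\eta/2)^{t-1}$. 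Hence $\E[\pi_{t,1}]\le C\gamma$ for every $t\ge t_0:=1+\tfrac{2}{\eta}\ln\tfrac{1}{2C\gamma}$.

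To finish, I would invoke convexity: writing $a=1-\gamma$, $b=\gamma/2$ and $u=ap+b$, one checks that $g=a^{-2}[(a+b)^2/u-2(a+b)+u]$ is convex in $u$, hence in $p$, and decreasing on $[0,1/2]$. Jensen's inequality then gives, for $t\ge t_0$, $\E[g(\pi_{t,1})]\ge g(\E[\pi_{t,1}])\ge g(C\gamma)\ge c'/\gamma$ for a constant $c'=c'(C)$ and $\gamma$ small. In the non-trivial regime $\eta\ge T^{-2/3}$, and the validity constraint $\eta K/\gamma\le 1/2$ with $K=2$ forces $\gamma\ge 4\eta\ge 4T^{-2/3}$, so $\ln(1/\gamma)\le\tfrac23\ln T$ and $t_0=O(\eta^{-1}\ln(1/\gamma))=O(T^{2/3}\ln T)=o(T)$. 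Thus for all $T\ge T_0$ at least $T/200$ Phase~1 rounds satisfy $t\ge t_0$, and summing the per-round bound $c'/\gamma$ over them produces $\Omega(T/\gamma)=\Omega(TK/\gamma)$, which is the assertion $\ge 4c_2\,TK/\gamma$ after fixing $c_2$.

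The main obstacle is the control of $\E[\pi_{t,1}]$ itself: what one genuinely needs is an \emph{upper} bound on $\E[\pi_{t,1}]$, not on $\E[\ln\pi_{t,1}]$. A naive log-potential argument readily drives $\E[\ln\pi_{t,1}]$ very negative, but Jensen points the wrong way, so this alone cannot rule out a small-probability event on which $\pi_{t,1}=\Theta(1)$ — which would make $\E[\pi_{t,1}]=\Theta(1)$ and collapse $g(\E[\pi_{t,1}])$ from $\Theta(1/\gamma)$ to $\Theta(1)$, losing the entire $1/\gamma$ factor. The Phase~1 monotonicity is exactly what rescues this: because it makes $\pi_{t,2}\ge 1/2$ hold deterministically, the multiplicative-in-expectation relation upgrades to a genuine linear contraction of $\E[\pi_{t,1}]$, so no upper-tail concentration argument for $\pi_{t,1}$ is required. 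The remaining delicacy is purely quantitative bookkeeping — verifying $t_0=o(T)$ through the interplay of the non-trivial-case bounds with $\eta K/\gamma\le 1/2$, and tracking constants so the bound lands at $4c_2\,TK/\gamma$.
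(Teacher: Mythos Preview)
Your proof is correct and follows essentially the same route as the paper: restrict to Phase~1, use the deterministic monotonicity $\pi_{t,1}\le 1/2$ to obtain the linear contraction $\E[\pi_{t+1,1}\mid\mathcal{F}_{t-1}]\le (1-\eta/2)\pi_{t,1}$, and then apply Jensen's inequality to convert a bound on $\E[\pi_{t,1}]$ into a per-round $\Omega(1/\gamma)$ contribution summed over $\Theta(T)$ rounds. The only cosmetic difference is that the paper first lower-bounds $(1-\pi_{t,1})^2\ge 1/4$ and applies Jensen to the convex map $x\mapsto 1/x$ (with target $\E[\pi_{t,1}]\le 1/(KT)$), whereas you keep the full $g(\pi_{t,1})$, verify its convexity, and work with the slightly weaker target $\E[\pi_{t,1}]\le C\gamma$; both land on the same $TK/(1600\gamma)$-type bound.
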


\begin{restatable}[Bias induced by uniform exploration]{claim}{RestateClaimThree}
  For large enough $T$, there exists $c_3$ such that
  \begin{align*}
    \E\left[
    \sum_{t=1}^{T} \Biggl( \, 
    \sum_{j \in [1,2]}\tilde{\pi}_{t,j} \hat{\ell}_{t,j} 
    - \sum_{j \in [1,2]}\pi_{t,j} \hat{\ell}_{t,j}
    \Biggr)
    \right] 
    \geq c_3 \gamma T .
  \end{align*}
  \label{claim:pi-tilda-expectation-to-pi-expectation-conversion-lemma}
\end{restatable}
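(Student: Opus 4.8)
The plan is to reduce the claim to a statement purely about the weight $\pi_{t,1}$ on the good arm. Since $\tilde{\pi}_{t,j}-\pi_{t,j}=\gamma(\tfrac1K-\pi_{t,j})$, each summand equals $\gamma\sum_{j}(\tfrac1K-\pi_{t,j})\hat{\ell}_{t,j}$. Conditioning on the history $\mathcal{F}_{t-1}$ through round $t-1$ and using the unbiasedness $\E[\hat{\ell}_{t,j}\mid\mathcal{F}_{t-1}]=\ell_{t,j}$ (the multiplier $\tfrac1K-\pi_{t,j}$ being $\mathcal{F}_{t-1}$-measurable), its conditional expectation is $\gamma\sum_j(\tfrac1K-\pi_{t,j})\ell_{t,j}$. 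For $K=2$ and the loss sequence of Definition~\ref{df:loss-sequence} this equals $\gamma(\tfrac12-\pi_{t,1})$ on Phase~1 and $\gamma(\pi_{t,1}-\tfrac12)$ on Phase~2. So it suffices to bound $\gamma\,\E[\sum_{t\in P_1}(\tfrac12-\pi_{t,1})+\sum_{t\in P_2}(\pi_{t,1}-\tfrac12)]$ below by $c_3\gamma T$, where $P_1,P_2$ denote the two phases.

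Next I would establish pathwise monotonicity of $\pi_{t,1}$ from the explicit WSU-UX update. In Phase~1 the weight is non-increasing: it is unchanged when arm~2 is drawn, and multiplied by $1-\eta\pi_{t,2}/\tilde{\pi}_{t,1}\in[\tfrac12,1]$ when arm~1 is drawn (using $\tilde\pi_{t,1}\ge\gamma/K$ and $\eta K/\gamma\le\tfrac12$). In Phase~2 it is non-decreasing: unchanged when arm~1 is drawn, and multiplied by $1+\eta\pi_{t,2}/\tilde{\pi}_{t,2}\ge1$ when arm~2 is drawn. Monotonicity in Phase~1 gives $\pi_{t,1}\le\tfrac12$, so the Phase~1 sum is nonnegative and may be discarded, reducing the task to showing $\E[\sum_{t\in P_2}(\pi_{t,1}-\tfrac12)]=\Omega(T)$. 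Monotonicity in Phase~2 is what makes this tractable: once $\pi_{t,1}$ crosses $\tfrac34$ it stays above it.

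The core step is to control the crossing time $\tau:=\inf\{t>T/100:\pi_{t,1}\ge\tfrac34\}$. I would track $\ln\pi_{t,1}$ and bound its conditional drift. In Phase~1, $\E[\Delta\ln\pi_{t,1}\mid\mathcal{F}_{t-1}]\ge-2\eta$ (via $\ln(1-x)\ge-2x$ for $x\le\tfrac12$ and $\eta K/\gamma\le\tfrac12$), so $\E[\ln\pi_{T/100+1,1}]\ge\ln\tfrac12-\tfrac{\eta T}{50}$; in Phase~2, while $\pi_{t,1}<\tfrac34$ one has $\pi_{t,2}>\tfrac14$ and $\E[\Delta\ln\pi_{t,1}\mid\mathcal{F}_{t-1}]\ge\tfrac{\eta}{8}$ (via $\ln(1+x)\ge x/2$). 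Thus $\ln\pi_{t\wedge\tau,1}-\tfrac{\eta}{8}(t\wedge\tau)$ is a submartingale, and optional stopping together with $\ln\pi_{\tau,1}\le0$ gives $\tfrac{\eta}{8}\,\E[(\tau\wedge T)-\tfrac{T}{100}]\le\ln2+\tfrac{\eta T}{50}+O(\eta)$, i.e. $\E[\tau\wedge T]\le\tfrac{17T}{100}+\tfrac{8\ln2}{\eta}+O(1)$. In the non-trivial case $\tfrac1\eta\le T^{2/3}$, so the middle term is $o(T)$ and $\E[\tau\wedge T]\le(\tfrac{17}{100}+o(1))T$. Splitting the Phase~2 sum at $\tau$ — rounds $t\ge\tau$ contribute $\pi_{t,1}-\tfrac12\ge\tfrac14$ by monotonicity, while the at most $(\tau\wedge T)-T/100$ earlier rounds contribute at worst $-\tfrac12$ — yields $\E[\sum_{t\in P_2}(\pi_{t,1}-\tfrac12)]\ge\tfrac14(T-\E[\tau\wedge T])-\tfrac12\,\E[(\tau\wedge T)-\tfrac{T}{100}]=\Omega(T)$, which is the claim.

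The main obstacle is this crossing-time bound. The delicate point is that for $\eta$ near the top of its range ($\eta\sim T^{-1/3}$) the weight $\pi_{T/100+1,1}$ can be as small as $e^{-\Theta(\eta T)}$, so a naive recovery estimate of $\tfrac1\eta\ln(1/\pi_{\min})$ rounds would be $\Theta(T^{4/3})$ and useless. The resolution is that the Phase~1 drop and the Phase~2 recovery both proceed at rate $\Theta(\eta)$ per round, so the recovery time scales with the \emph{length} of Phase~1 (a constant times $T/100$) rather than with $1/\eta$ — which is precisely why the construction places only a $1/100$ fraction of the horizon in Phase~1. Converting this matched-rate intuition into a rigorous bound on the random time $\tau$, and controlling $\{\tau>T\}$ so the negative contributions cannot swamp the positive ones, is the technically demanding part; Claim~\ref{claim:small-lawn-pi-on-expectation} supplies the complementary guarantee that $\pi_{T+1,1}$ is bounded below by a constant in expectation, confirming that the crossing does occur.
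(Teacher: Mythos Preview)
Your reduction to $\gamma\E\bigl[\sum_{t\in P_1}(\tfrac12-\pi_{t,1})+\sum_{t\in P_2}(\pi_{t,1}-\tfrac12)\bigr]$ is the same as the paper's, but from there the two arguments diverge. The paper invokes Lemma~\ref{lm:dominant-stage} (that $\E[\pi_{t,2}]\le\tfrac14$ throughout Phase~2.3), which in turn rests on the full high-probability apparatus built for Claims~\ref{claim:small-lawn-pi-on-expectation} and~\ref{claim:expectation-of-sequared-relative-loss-lower-bound}: the events $\mathcal{E}_1,\mathcal{E}_2$, the multiplicative-Azuma bounds, and the phase decomposition $T_1+T_2+T_3+T_4$. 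It then bounds the Phase~2.3 contribution by $T_4\cdot\gamma/4$ and wastes Phases~2.1--2.2 with the trivial bound $\pi_{t,2}\le1$, arriving at $c_3=\tfrac{9}{400}$.

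Your route via the submartingale $\ln\pi_{t\wedge\tau,1}-\tfrac{\eta}{8}(t\wedge\tau)$ and optional stopping is correct and considerably more direct for this particular claim: the drift bounds $\E[\Delta\ln\pi_{t,1}\mid\mathcal F_{t-1}]\ge-2\eta$ in Phase~1 and $\ge\eta/8$ before the $\tfrac34$-crossing in Phase~2 both check out, and the resulting bound $\E[\tau\wedge T]\le\tfrac{17T}{100}+\tfrac{8\ln2}{\eta}+O(1)$ is exactly what you need. The pathwise splitting at $\tau$ then gives $\Omega(T)$ after taking expectations. What you gain is a self-contained proof that does not piggyback on the machinery behind $\mathcal E_1,\mathcal E_2$; what the paper gains is that, having already built that machinery for Claim~\ref{claim:small-lawn-pi-on-expectation}, it reuses it here essentially for free.

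One remark: your last paragraph hedges more than necessary. You do \emph{not} need Claim~\ref{claim:small-lawn-pi-on-expectation} or any separate control of $\{\tau>T\}$. The optional-stopping bound on $\E[\tau\wedge T]$ already handles that event implicitly, since the pathwise inequality $\sum_{t\in P_2}(\pi_{t,1}-\tfrac12)\ge\tfrac14(T-(\tau\wedge T))-\tfrac12((\tau\wedge T)-T_1)$ holds on $\{\tau>T\}$ as well (both sides then reduce to $-\tfrac12(T-T_1)$ or better). Your argument is complete as stated in the third paragraph.
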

Given all three claims and Lemma~\ref{theorem:lower-bound-theorem}, the proof of Theorem~\ref{theorem:theorem_of-case-2-by-symbols} is straightforward. 
In loss sequence $\{\ell_t\}_{t=1}^T$, the best arm is arm 1; therefore, Claim~\ref{claim:pi-tilda-expectation-to-pi-expectation-conversion-lemma} gives
\begin{align}
  \E[\mathrm{\regret}_T] &= \mathbb{E}\left[\sum_{t=1}^{T}\sum_{j \in [K]}\tilde{\pi}_{t,j} \hat{\ell}_{t,j} - \sum_{t=1}^{T} \hat{\ell}_{t,1}\right] \nonumber\\
                         &\geq \mathbb{E}\left[\sum_{t=1}^{T}\sum_{j \in [K]}\pi_{t,j} \hat{\ell}_{t,j} - \sum_{t=1}^{T} \hat{\ell}_{t,1}\right] + c_3\gamma T .
                           \label{eq:some-eq9}
\end{align}
To further lower bound \eqref{eq:some-eq9}, we take the expectation of both sides of \eqref{eq:some-eq4} and use Claims~\ref{claim:small-lawn-pi-on-expectation}~and~\ref{claim:expectation-of-sequared-relative-loss-lower-bound} to get
\begin{align*}
  \E\left[\mathrm{\regret}_T\right] \geq c_1\frac{1}{\eta} + c_2  \frac{\eta TK}{\gamma} + c_3 \gamma T.
\end{align*}

\subsubsection{Subtlety of showing the claims}
The proof of the claims is not straightforward and requires subtle work. At a high level, we designed loss sequence $\{\ell_t\}_{t=1}^T$ such that 
for a constant fraction of rounds $\frac{T}{200} \leq t  \leq \frac{T}{100}$, arm 1 (the best arm) has a cumulative loss linearly worse than arm 2. 
This can be used to show  Claim~\ref{claim:expectation-of-sequared-relative-loss-lower-bound}. However, during these rounds $\frac{T}{200} \leq t  \leq \frac{T}{100}$, arm 1 keeps getting small probability and hence very large estimated loss. Yet, because the algorithm has some uniform exploration, it picks arm 1 frequently. Therefore, at the beginning of round $t \geq \frac{T}{100}$, the probability update is very slow. 
Therefore, it is not obvious whether the algorithm can allow  $\pi_{T,1}$ to recover at the end or not.

The next section gives a technical overview of how we prove the claims. We perform a careful analysis of the probability updates, leveraging a recently shown multiplicative form of Azuma's inequality \citep{kuszmaul2021multiplicative} in some key steps to show that indeed with probability at least $1 - O(\frac{1}{T^2})$, 	$\pi_{T_1+T_2+1,1} \geq 1/4$.
We then show that with high probability $\pi_{T+1,1} \geq 3/4$. This implies Claim~\ref{claim:small-lawn-pi-on-expectation} and also Claim~\ref{claim:pi-tilda-expectation-to-pi-expectation-conversion-lemma}.

\section{Showing the claims}
The proof of Claim~\ref{claim:expectation-of-sequared-relative-loss-lower-bound} is fairly straightforward. The proof requires the following lemma.
\begin{restatable}{lemma}{RestateUsefulLemma}\label{lm:recursive-expectation-lemma}
  In WSU-UX with two arms $(i, \bar{i})$,
  \begin{align}
    \E[\pi_{t+1,i} \mid \mathcal{F}_{t-1}] = (1 - C_{t,i}) \pi_{t,i} + C_{t,i} \, \pi^2_{t,i},
    \label{eq:rec-equality}
  \end{align}
  where $C_{t,i}:= \eta \left(\ell_{t,i} - \ell_{t,\bar{i}}\right)$ and $\F_{t-1}$ is the history up until the end of round $t-1$. 
\end{restatable}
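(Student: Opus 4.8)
The plan is to prove this as a one-step conditional-expectation identity. After conditioning on $\F_{t-1}$, the only remaining source of randomness in round $t$ is the identity of the drawn arm $I_t$, sampled from the mixed distribution $\tilde{\pi}_t$; the quantities $\pi_{t,i}$, $\tilde{\pi}_{t,i}$ and the losses $\ell_{t,i}, \ell_{t,\bar{i}}$ are all $\F_{t-1}$-measurable (in the oblivious setting the losses are deterministic, so $C_{t,i}$ is $\F_{t-1}$-measurable). I would therefore compute $\pi_{t+1,i}$ separately on the two events $\{I_t = i\}$ and $\{I_t = \bar{i}\}$ and then average with the selection weights $\tilde{\pi}_{t,i}$ and $\tilde{\pi}_{t,\bar{i}}$.

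On $\{I_t = i\}$ only $\hat\ell_{t,i} = \ell_{t,i}/\tilde\pi_{t,i}$ is nonzero, so $\sum_j \pi_{t,j}\hat\ell_{t,j} = \pi_{t,i}\ell_{t,i}/\tilde\pi_{t,i}$ and the WSU-UX update of Algorithm~\ref{alg:WSU-UX} gives
\[
  \pi_{t+1,i} = \pi_{t,i}\Bigl(1 - \eta\,\tfrac{\ell_{t,i}}{\tilde\pi_{t,i}}\,(1-\pi_{t,i})\Bigr).
\]
Symmetrically, on $\{I_t = \bar{i}\}$ only $\hat\ell_{t,\bar{i}} = \ell_{t,\bar{i}}/\tilde\pi_{t,\bar{i}}$ is nonzero, so $\hat\ell_{t,i} - \sum_j \pi_{t,j}\hat\ell_{t,j} = -\pi_{t,\bar{i}}\ell_{t,\bar{i}}/\tilde\pi_{t,\bar{i}}$ and
\[
  \pi_{t+1,i} = \pi_{t,i}\Bigl(1 + \eta\,\tfrac{\ell_{t,\bar{i}}}{\tilde\pi_{t,\bar{i}}}\,\pi_{t,\bar{i}}\Bigr).
\]

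Taking the conditional expectation weights the first display by $\tilde\pi_{t,i}$ and the second by $\tilde\pi_{t,\bar{i}}$. The key step is that these selection weights cancel the importance-weighting denominators $1/\tilde\pi_{t,i}$ and $1/\tilde\pi_{t,\bar{i}}$ --- this is exactly the unbiasedness of the loss estimator --- so all dependence on $\tilde\pi$ (equivalently on $\gamma$) disappears:
\begin{align*}
  \E[\pi_{t+1,i}\mid\F_{t-1}]
   &= \tilde\pi_{t,i}\,\pi_{t,i}\Bigl(1 - \eta\tfrac{\ell_{t,i}}{\tilde\pi_{t,i}}(1-\pi_{t,i})\Bigr)
      + \tilde\pi_{t,\bar{i}}\,\pi_{t,i}\Bigl(1 + \eta\tfrac{\ell_{t,\bar{i}}}{\tilde\pi_{t,\bar{i}}}\pi_{t,\bar{i}}\Bigr) \\
   &= \pi_{t,i}(\tilde\pi_{t,i}+\tilde\pi_{t,\bar{i}}) - \eta\,\pi_{t,i}(1-\pi_{t,i})(\ell_{t,i}-\ell_{t,\bar{i}}),
\end{align*}
where I used $\pi_{t,\bar{i}} = 1-\pi_{t,i}$. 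Since $K=2$ also gives $\tilde\pi_{t,i}+\tilde\pi_{t,\bar{i}}=1$, this collapses to $\pi_{t,i} - \eta(\ell_{t,i}-\ell_{t,\bar{i}})\,\pi_{t,i}(1-\pi_{t,i})$, which is precisely $(1-C_{t,i})\pi_{t,i} + C_{t,i}\,\pi_{t,i}^2$ with $C_{t,i}=\eta(\ell_{t,i}-\ell_{t,\bar{i}})$, as claimed.

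I do not expect a genuine obstacle: the statement is an exact one-step conditional expectation, so it reduces to the two-case enumeration above plus elementary algebra. The only points needing care are (i) recognizing that everything but $I_t$ is $\F_{t-1}$-measurable, and (ii) the cancellation of the importance weight, which is where both the specific estimate $\hat\ell_{t,i} = \ell_{t,i}/\tilde\pi_{t,i}$ and the two-arm structure are used. The validity constraint $\eta K/\gamma \le 1/2$ is not needed for the identity itself (only to keep $\pi_{t+1,i}\ge 0$). This recursion is the workhorse I would then iterate/analyze to establish Claims~\ref{claim:small-lawn-pi-on-expectation}--\ref{claim:pi-tilda-expectation-to-pi-expectation-conversion-lemma}.
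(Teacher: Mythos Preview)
Your proof is correct and follows essentially the same approach as the paper: both compute the one-step conditional expectation of the WSU-UX update and exploit the unbiasedness of the importance-weighted loss estimate so that all dependence on $\tilde\pi_t$ cancels. The only cosmetic difference is that the paper applies $\E_{t-1}[\hat\ell_{t,j}]=\ell_{t,j}$ directly to the expanded update, whereas you spell this out as an explicit two-case enumeration over $I_t\in\{i,\bar i\}$.
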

This lemma can be used to show that for $t$ in Phase 1, i.e., $t \leq \frac{T}{100}$, $\pi_{t,1}$ decreases in a multiplicative way and we have
\begin{align*}
  \E[\pi_{t+1,1} ] \leq (1 - \frac{\eta}{2})\,\E[\pi_{t,1}].
\end{align*}
The above inequality can be used to show that for a constant fraction of rounds $\frac{T}{200} \leq t \leq \frac{T}{100}$ we have 
\begin{align}
  \mathbb{E}[\pi_{t,1}] \leq \frac{1}{KT}.
  \label{eq:some-eq48}
\end{align}
The above fact can be further utilized to show that the summation of the second moments of the estimated loss differences in Claim~\ref{claim:expectation-of-sequared-relative-loss-lower-bound} is large, which proves Claim~\ref{claim:expectation-of-sequared-relative-loss-lower-bound}. 

Claims~\ref{claim:small-lawn-pi-on-expectation}~and~\ref{claim:pi-tilda-expectation-to-pi-expectation-conversion-lemma} require more sophisticated techniques. To demonstrate them, we need to analyze the behavior of $\pi_{t,1}$ for $t \in [T]$ when running the algorithm on the loss sequence $\{\ell_t\}_{t=1}^T$. Note that since we are in the bandit case, $\pi_{t,1}$ is a random variable.

Recall Phases 1 and 2 from Definition~\ref{df:loss-sequence}, the definition of the loss sequence. We need to further decompose these phases into multiple sub-phases as defined below. 
\begin{restatable}{definition}{RestatePhaseDefinition}
  Define $T_1 = \frac{1}{100}T, T_2 = \frac{2}{10}T, T_3= \frac{1}{10}T, T_4 = \frac{69}{100}T$, and then define (sub-)phases as follows:
  \begin{itemize}[topsep=0pt,itemsep=2pt,itemindent=0pt,leftmargin=1.5em]
  \item Phase 1: $\mathcal{T}_1 = \{t: 1 \leq t \leq T_1\}$,
  \item Phase 2.1: $\mathcal{T}_2 = \{t: T_1 + 1 \leq t \leq T_1 + T_2\}$,
  \item Phase 2.2: $\mathcal{T}_3 = \{t: T_1 +T_2 + 1 \leq t \leq T_1 + T_2+T_3\}$,
  \item Phase 2.3: $\mathcal{T}_4 = \{t: T_1 + T_2+T_3 +1\leq t \leq T_1 + T_2+T_3+T_4\}$.
  \end{itemize}
  \label{def:phase-definition}
\end{restatable}

Moreover, we define $T'$ and $M$, which are intermediate numbers that are going to be used in our analysis regarding high probability statements in this section. 
\begin{restatable}{definition}{RestateMandTPrime}
  We define $M$ and $T'$ as follows
  \begin{align}
    M &:= {\frac{1}{\ln{2}} \,{\left[ \underbrace{\ln {\left(\frac{2K}{\gamma}\right)}}_{\propto (\ln{T})} +  \underbrace{2(1+\varepsilon_1)(1+\frac{\eta K}{\gamma})\eta T_1}_{\propto (\eta T_1)}\right]}}  \label{eq:some-eq37}\\
    T' &:= \frac{1}{1-\varepsilon_2} \, (\frac{4}{3 - \gamma}) \, \frac{2}{\eta} \, M 
         \label{eq:some-eq13}
  \end{align}
  where $\varepsilon_1 = \sqrt{\frac{6\ln{T}}{\frac{2\gamma}{K}T_1}}$ and $\varepsilon_2 = \sqrt{\frac{4 \ln{T}}{\frac{3-\gamma}{4}T_2}}$.
  \label{def:def-h-prime}
\end{restatable}
Note that since we are in the non-trivial case ($\eta \geq T^{-2/3}$ and $\gamma \leq T^{-1/3}$), as $T$ goes to $\infty$, we have $\varepsilon_1, \varepsilon_2 \rightarrow 0$ and moreover $M \approx c \,\eta T_1$ and $T'~=~c' \frac{M}{\eta}~\approx~c \, c' \, T_1$. Note that for large enough $T$, we have
\begin{align}
  T' \leq T_2.
  \label{eq:some-eq14}
\end{align}
We now give a high-level picture of how we prove Claims~\ref{claim:small-lawn-pi-on-expectation}~and~\ref{claim:pi-tilda-expectation-to-pi-expectation-conversion-lemma}, which also will give more intuition about $M$ and $T'$.

\begin{proof}[Proof Sketch of Claims~\ref{claim:small-lawn-pi-on-expectation}~and~\ref{claim:pi-tilda-expectation-to-pi-expectation-conversion-lemma}]
  Let $\mathcal{E}_1 = \{\pi_{T_1+1,1} \geq 2^{-M}\}$ be the event that arm 1's probability at the end of Phase 1 is not too small, where $M$ is defined in \eqref{eq:some-eq37}. 
  Let $\mathcal{E}_2 = \{\pi_{T_1+T_2+1,1} \geq \frac{1}{4} \}$ be the event that arm 1's probability at the end of Phase 2.2 has recovered to $\frac{1}{4}$. 
  
  First, we show that with high probability, the algorithm does not pull arm 1 too many times in Phase 1. In particular, we show that at the end of Phase 1, the probability $\pi_{T_1+1,1}$ of selecting arm 1 is lower bounded by $2^{-M}$ (hence, $\mathcal{E}_1$ happens). To prove this result, we leverage a recent multiplicative form of Azuma's inequality for martingales \citep{kuszmaul2021multiplicative}. 
  
  The next key step is to show that if $\mathcal{E}_1$ happens, then with high probability $\mathcal{E}_2$ happens. Since we already showed that $\mathcal{E}_1$ happens with high probability, it follows that with high probability we have that $\pi_{T_1 + T_2 + 1,1} \geq 1/4$. Now, to show this key step, we proceed as follows. First, we observe that in Phase 2.1, it is only via pulls of arm 2 that the probability of arm 1 can increase. Moreover, initially, the rate of update of arm 1 is $\frac{\pi_{t+1,1}}{\pi_{t,1}}$, which is very close to 1. Therefore, we first analyze how many pulls of arm 2 suffice for arm 1’s probability to double, and we then analyze how many doublings are needed to satisfy event $\mathcal{E}_2$. Finally, we again use a martingale analysis (multiplicative Azuma) to show that within the rounds of Phase 2.1, the sufficient number of pulls of arm 2 occur with high probability and hence event $\mathcal{E}_2$ happens.
  
  We then show that conditional on event $\mathcal{E}_2$, from Phase 2.2 onwards, the probability $\pi_{t,2}$ goes to zero exponentially quickly as $t$ increments beyond $T_1 + T_2$. Therefore, the probability $\pi_{t,1}$ converges to 1 exponentially quickly. This, combined with Lemma~\ref{lm:recursive-expectation-lemma}, is essentially what is needed to prove Claim~\ref{claim:pi-tilda-expectation-to-pi-expectation-conversion-lemma}. For Claim~\ref{claim:small-lawn-pi-on-expectation}, we use a careful analysis based on Chebyshev’s inequality to show that with probability exponentially close to 1, $\pi_{T,1}$ is at least $3/4$. This is essentially what allows us to control the expected log probability term in Claim~\ref{claim:small-lawn-pi-on-expectation} and hence what allows Claim~\ref{claim:small-lawn-pi-on-expectation} to go through.
\end{proof}

\subsubsection*{Acknowledgements}

AM and NM were supported by NSERC Discovery Grant RGPIN- 2018-03942 and a JP Morgan Faculty Research Award.

\bibliography{wsu-ux_lower_bound}

\appendix

\LinesNumbered
\section{Main result and high-level proof}
\label{app:main-result-high-level}
We give a high-level proof of the main theorem. For the convenience of the reader, we re-state the lemmas along with their proofs.

\StateMainTheorem*

We remind the reader that we restrict our attention to the set of valid hyperparameters: $\frac{\eta K}{\gamma}\leq 1/2$ and $\eta, \gamma \in (0,1/2)$. The notion of valid hyperparameters is taken from the restrictions imposed by \cite{freeman2020no}. We briefly explain some of the restrictions. 

We note that we need $\frac{\eta K}{\gamma}\leq 1$ to make sure that we get a valid probability distribution $\pi_{t+1}$ after updating in each round $t$. Moreover, to get sublinear regret, we need $\gamma \leq 1/2$; otherwise, uniform exploration will pick the suboptimal arm frequently, which can cause linear regret. Since $K\geq 1$, the inequality $\frac{\eta K}{\gamma}\leq 1$ implies $\eta \leq \frac{\gamma}{K} \leq \gamma \leq 1/2$ and therefore $\eta \leq 1/2$.

Our analysis focuses on the restriction $\frac{\eta K }{\gamma} \leq c$ for $c = 1/2$. Thus far, for technical reasons related to certain inequalities, we are not sure whether our particular analysis can be made to go through for a larger $c$ (that is still less than 1). The main concern arises as $c$ gets closer to 1. However, using advanced Taylor approximation-based inequalities, this might be possible.

Now, we further partition the set of valid hyperparameters $(\eta, \gamma)$ into two cases: 
\begin{itemize}
\item The trivial case: $ \eta < T^{-2/3} $ or $\gamma > T^{-1/3}$
\item The non-trivial case: $ \eta \geq T^{-2/3}$ and $\gamma \leq T^{-1/3}$.
\end{itemize} 
The proof strategy for this theorem is that for any valid hyperparameters $(\eta, \gamma)$, we show there exists a loss sequence such that the algorithm will incur an expected regret of $\Omega (T^{2/3})$.

\subsection{Trivial case}
\label{apx:trivial-case}
We show that in the trivial case $\E[\mathrm{\regret}_T] = \Omega(T^{2/3})$. 
\begin{lemma}
  For $K=2$ and for any setting $(\eta, \gamma)$ in the trivial case where $\eta < T^{-2/3}$ or $\gamma > T^{1/3}$,  
  \begin{align*}
    \E[\mathrm{\regret}_T] = \Omega(T^{2/3}).
  \end{align*}
  \label{lemma:trivial-case}
\end{lemma}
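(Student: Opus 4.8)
The plan is to exhibit, for each of the two regimes defining the trivial case, a single simple loss sequence on which WSU-UX is forced to pay $\Omega(T^{2/3})$. I would take the constant sequence $\ell_{t,1}=0$ and $\ell_{t,2}=1$ for all $t\in[T]$ (realizable under squared loss by setting $y_t=0$ with beliefs $b_{1,t}=0$, $b_{2,t}=1$; since WSU-UX is incentive-compatible, truthful reporting gives $r_{i,t}=b_{i,t}$, so these are exactly the incurred losses). Here arm~1 is optimal with cumulative loss $0$, so the pseudo-regret collapses to
\begin{align*}
  \E[\regret_T] = \sum_{t=1}^T \E[\tilde{\pi}_{t,2}].
\end{align*}
Both subcases then reduce to lower bounding this single sum, using the mixture structure $\tilde{\pi}_{t,2} = (1-\gamma)\pi_{t,2} + \gamma/K$.

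For the subcase $\gamma > T^{-1/3}$ the argument is immediate: forced exploration guarantees $\tilde{\pi}_{t,2}\geq \gamma/K = \gamma/2$ in every round (recall $K=2$), so $\E[\regret_T]\geq \gamma T/2 > \tfrac12 T^{2/3}$. This captures the intuition that excessive uniform exploration alone already costs $\Omega(\gamma T)$.

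For the subcase $\eta < T^{-2/3}$ I would instead show that the weight on the bad arm decays too slowly for the learner to stop paying for arm~2. Applying Lemma~\ref{lm:recursive-expectation-lemma} to arm~2, where $C_{t,2}=\eta(\ell_{t,2}-\ell_{t,1})=\eta$, gives
\begin{align*}
  \E[\pi_{t+1,2}\mid\F_{t-1}] = (1-\eta)\pi_{t,2} + \eta\pi_{t,2}^2 \geq (1-\eta)\pi_{t,2},
\end{align*}
so by induction $\E[\pi_{t,2}]\geq (1-\eta)^{t-1}\pi_{1,2} = \tfrac12(1-\eta)^{t-1}$. Since $\tilde{\pi}_{t,2}\geq(1-\gamma)\pi_{t,2}$ and $\gamma<1/2$, summing the geometric series yields
\begin{align*}
  \E[\regret_T] \geq \frac{1-\gamma}{2}\sum_{t=1}^T (1-\eta)^{t-1} \geq \frac14\cdot\frac{1-(1-\eta)^T}{\eta}.
\end{align*}
The last step is to verify this is $\Omega(T^{2/3})$ via $(1-\eta)^T\leq e^{-\eta T}$: if $\eta T\geq 1$ then $1-(1-\eta)^T\geq 1-e^{-1}$ while $1/\eta>T^{2/3}$, so the bound is $\Omega(T^{2/3})$; if $\eta T<1$ then $1-(1-\eta)^T\geq \eta T/2$, making the bound at least $T/2$, i.e.\ even linear.

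The one point requiring care—the main ``obstacle,'' though it is elementary—is controlling the geometric sum $\tfrac{1-(1-\eta)^T}{\eta}$ uniformly, since neither $1/\eta$ nor $T$ dominates throughout the admissible range of $\eta$; splitting at $\eta T=1$ resolves this cleanly. Everything else follows directly from Lemma~\ref{lm:recursive-expectation-lemma} and the definition of the exploration mixture $\tilde{\pi}_t$, so I would expect the full write-up to be short.
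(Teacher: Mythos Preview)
Your proposal is correct and follows essentially the same route as the paper: the same constant loss sequence $\ell_{t,1}=0,\ \ell_{t,2}=1$, the same split into the two subcases, the identical one-line exploration bound $\tilde{\pi}_{t,2}\geq\gamma/2$ for large $\gamma$, and the same use of Lemma~\ref{lm:recursive-expectation-lemma} to get geometric decay of $\E[\pi_{t,2}]$ for small $\eta$. The only cosmetic differences are in the small-$\eta$ bookkeeping: you drop the nonnegative term $\eta\pi_{t,2}^2$ to obtain a $(1-\eta)$ contraction and then evaluate the full geometric sum $\tfrac{1-(1-\eta)^T}{\eta}$ with a case split at $\eta T=1$, whereas the paper keeps a $(1-\eta/2)$ contraction and instead truncates the sum at $T'=\min\{\lfloor\ln(100)/\eta\rfloor+1,T\}$, bounding each retained term below by a constant. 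Your version is slightly cleaner but neither buys anything the other does not.
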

We prove this by proving the following Lemma~\ref{thm:theorem-for-small-eta} for the case where $\eta < T^{-2/3}$ and Lemma~\ref{thm:theorem-large-exploration} for the case where $\gamma > T^{-1/3}$.
\begin{lemma}
  For WSU-UX run with any $\eta < T^{-2/3}$ and any $\gamma \geq 0$ there exists a loss sequence $\{\ell_t\}_{t=1}^T$ and $c=\frac{1}{200} >0$ such that
  \begin{align*}
    \E \left[ \regret_T \right] \geq c \, T^{2/3}.
  \end{align*}
  \label{thm:theorem-for-small-eta}
\end{lemma}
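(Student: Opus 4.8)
The plan is to exhibit a single stationary loss sequence on which, precisely because $\eta$ is so small, the probability mass on the bad arm decays too slowly to avoid $\Omega(T^{2/3})$ regret. With $K=2$ I would take $\ell_{t,1}=1,\ \ell_{t,2}=0$ for every $t\in[T]$, so arm $2$ is optimal with cumulative loss $0$ and the benchmark $\min_i\sum_t \ell_{t,i}$ equals $0$. Since the learner draws $I_t$ from $\tilde{\boldsymbol{\pi}}_t$, its expected per-round loss is $\sum_j \tilde\pi_{t,j}\ell_{t,j}=\tilde\pi_{t,1}$, whence $\E[\regret_T]=\sum_{t=1}^T \E[\tilde\pi_{t,1}]$. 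The entire task then reduces to lower bounding $\E[\tilde\pi_{t,1}]$, equivalently $\E[\pi_{t,1}]$, over the first $\Theta(T^{2/3})$ rounds.

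The second step is to show that $\pi_{t,1}$ can bleed off only at rate $\eta$. On this loss sequence $C_{t,1}=\eta(\ell_{t,1}-\ell_{t,2})=\eta$, so Lemma~\ref{lm:recursive-expectation-lemma} gives $\E[\pi_{t+1,1}\mid \F_{t-1}]=(1-\eta)\pi_{t,1}+\eta\pi_{t,1}^2=\pi_{t,1}-\eta\,\pi_{t,1}(1-\pi_{t,1})$. Using $x(1-x)\le \tfrac14$ and taking total expectation yields the additive recursion $\E[\pi_{t+1,1}]\ge \E[\pi_{t,1}]-\tfrac{\eta}{4}$, and since $\pi_{1,1}=\tfrac12$ this unrolls to $\E[\pi_{t,1}]\ge \tfrac12-(t-1)\tfrac{\eta}{4}$. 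In particular, as long as $t-1\le 1/\eta$ we obtain $\E[\pi_{t,1}]\ge \tfrac14$.

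The third step is to pass from $\pi$ to the played distribution $\tilde\pi$ and to count rounds. A one-line case check on $\tilde\pi_{t,1}=(1-\gamma)\pi_{t,1}+\tfrac{\gamma}{2}$ shows $\E[\tilde\pi_{t,1}]\ge \tfrac12 \E[\pi_{t,1}]$ for every $\gamma\in[0,1]$: when $\gamma\le \tfrac12$ both terms of $(\tfrac12-\gamma)\E[\pi_{t,1}]+\tfrac{\gamma}{2}$ are nonnegative, and when $\gamma>\tfrac12$ bounding $\E[\pi_{t,1}]\le 1$ in the now-negative coefficient leaves $(1-\gamma)/2\ge 0$. Hence $\E[\tilde\pi_{t,1}]\ge \tfrac18$ on the good window. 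Because $\eta<T^{-2/3}$ we have $1/\eta>T^{2/3}$, and $T^{2/3}\le T$, so the first $N:=\lceil T^{2/3}\rceil$ rounds all satisfy both $t\le T$ and $t-1\le 1/\eta$; summing gives $\E[\regret_T]\ge \sum_{t=1}^N \E[\tilde\pi_{t,1}]\ge \tfrac{N}{8}\ge \tfrac18 T^{2/3}\ge \tfrac{1}{200}T^{2/3}$, which is the claim (with large slack in the constant).

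I expect no deep obstacle: the content is exactly that a learning rate below $T^{-2/3}$ cannot move inverse-$\eta$-slow weight off a linearly-bad arm within $T^{2/3}$ rounds. The only points requiring care are (i) invoking Lemma~\ref{lm:recursive-expectation-lemma} in the \emph{lower}-bound direction, which is immediate because the discarded term $\eta\,\E[\pi_{t,1}^2]$ is nonnegative; (ii) making the estimate uniform in $\gamma$, for which the bound $\E[\tilde\pi_{t,1}]\ge \tfrac12\E[\pi_{t,1}]$ removes all dependence on the exploration level; and (iii) the off-by-one bookkeeping ensuring the $N$ rounds genuinely lie inside $[T]$ and inside the window $t-1\le 1/\eta$, which holds for all $T\ge 1$.
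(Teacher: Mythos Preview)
Your proof is correct and follows essentially the same strategy as the paper: the same stationary loss sequence (up to relabeling the arms), the same reduction of $\E[\regret_T]$ to $\sum_t \E[\tilde\pi_{t,\text{bad}}]$, and the same invocation of Lemma~\ref{lm:recursive-expectation-lemma} to control the decay of the bad arm's probability. The only substantive difference is how you bound the recursion: the paper uses a multiplicative bound $\E[\pi_{t+1}]\ge (1-\eta/2)\,\E[\pi_t]$ and then an exponential estimate, whereas you use the additive bound $\E[\pi_{t+1}]\ge \E[\pi_t]-\eta/4$ via $x(1-x)\le 1/4$; your route is slightly more elementary and in fact yields the sharper constant $1/8$ in place of the paper's $1/200$.
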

Before proving Lemma~\ref{thm:theorem-for-small-eta}, let us restate and prove Lemma~\ref{lm:recursive-expectation-lemma}. We will use this lemma numerous times in the course of proving the main result.

\RestateUsefulLemma*

\begin{proof}[Proof of Lemma~\ref{lm:recursive-expectation-lemma}]
  When we have two arms, the update rule for arm $i$ can be expressed as 
  \begin{align*}
    \pi_{t+1,i} &= \pi_{t,i} \left( 1 - \eta \left( \hat{\ell}_{t,i} - \sum_{j \in \{i,\bar{i}\}} \pi_{t,j} \hat{\ell}_{t,j}\right)\right)\\
		& = \pi_{t,i} - \eta\pi_{t,i}\hat{\ell}_{t,i} + \eta\pi_{t,i} \left(\sum_{j \in \{i,\hat{i}\}}\pi_{t,j}\hat{\ell}_{t,j} \right).
  \end{align*}
  Now, taking the expectation of both sides conditional on the past, we get
  \begin{align*}
    \E\left[\pi_{t+1,i}\mid\mathcal{F}_{t-1}\right] &= \E_{t-1}\left[\pi_{t+1,i}\right] \\
                                                    &=\E_{t-1}\left[\pi_{t,i} - \eta\pi_{t,i}\hat{\ell}_{t,i} + \eta\pi_{t,i}\sum_{j \in \{i,\bar{i}\}}\pi_{t,j}\hat{\ell}_{t,j}\right] \\
                                                    &= \pi_{t,i}  
                                                      - \eta \pi_{t,i} \E_{t-1}[\hat{\ell}_{t,i}]  + \eta \pi_{t,i}^2\E_{t-1} \left[\hat{\ell}_{t,i}\right] + \eta (\pi_{t,i})  (\pi_{t,\bar{i}})  \E_{t-1} \left[\hat{\ell}_{t,\bar{i}}\right]\\
                                                    &= \pi_{t,i} - \eta\pi_{t,i}\ell_{t,i} + \eta \pi_{t,i}^2 \ell_{t,i}  + \eta (\pi_{t,i})(1-\pi_{t,i})\ell_{t,\bar{i}}. 
  \end{align*}
  Now, rearranging the terms, we get
  
  \begin{align*}
    \E_{t}\left[\pi_{t+1,i}\right] &= \left(1 - \eta\ell_{t,i} + \eta \ell_{t,\bar{i}}\right) \pi_{t,i} + \eta \left( \ell_{t,i} - \ell_{t,\bar{i}}\right) \pi_{t,i}^2 \\
                                   &= \left(1-C_{t,i}\right) \pi_{t,i} + (C_{t,i}) \, \pi_{t,i}^2,
  \end{align*} 
  where we recall that $C_{t,i}= \eta \left(\ell_{t,i} - \ell_{t,\bar{i}}\right)$.
  \label{proof:useful-lemma}
\end{proof}

Now, we are ready to prove Lemma~\ref{thm:theorem-for-small-eta}.

\begin{proof}[Proof of Lemma~\ref{thm:theorem-for-small-eta}]
  Consider loss sequence with two arms, $\{\ell_t\}_{t=1}^T$ where $  \ell_{t,1}=0, \ell_{t,2}=1 $ for $1 \leq t \leq T$. In this case, we show that regret simplifies to the number of times we pull arm 2. In particular, 
  \begin{align*}
    \E [\mathrm{\regret_T}] &= \E \left[\sum_{t=1}^T\sum_{j=1}^2\tilde{\pi}_{t,j}\ell_{t,j} - \sum_{t=1}^T\ell_{t,1}\right] \\
                            &= \sum_{t=1}^T\E\left[\tilde{\pi}_{t,2}\right] \ell_{t,2} & (\ell_{t,1}=0, \forall t \in[T])\\
                            &= \sum_{t=1}^T\E[\tilde{\pi}_{t,2}]. & (\ell_{t,2}=1, \forall t \in[T])
  \end{align*}
  Remember that 
  \begin{align}
    \tilde{\pi}_{t,i} = (1 -\gamma) \pi_{t,i} + \frac{\gamma}{2}.
    \label{eq:eq-something}
  \end{align}
  Now, taking the expectation of both sides of \eqref{eq:eq-something} for $i=2$, we get
  \begin{align}
    \sum_{t=1}^T \E[\tilde{\pi}_{t,2}] &= \sum_{t=1}^T \E[(1 -\gamma) \pi_{t,2} + \frac{\gamma}{2} ] \nonumber\\
                                       &= (1 -\gamma) \sum_{t=1}^T \E[ \pi_{t,2}] + \frac{\gamma}{2} T.
                                         \label{eq:some-eq16}
  \end{align}
  Next, we lower bound the first term. Observe that by applying Lemma~\ref{lm:recursive-expectation-lemma} on any round $t>1$, we get
  \begin{align}
    \E [\pi_{t,2}|\mathcal{F}_{t-2}] &= (1 -  \eta) \, \pi_{t-1,2} + \eta \, \pi_{t-1,2}^2 \nonumber\\
                                     &\geq (1 -  \frac{\eta}{2}) \, \pi_{t-1,2}, 
                                       \label{eq:some-eq11}
  \end{align} 
  where the last inequality comes from the fact that $\pi_{t-1,2} \leq 1/2$. This fact is true because we know the initial value for $\pi_{1,2}=\frac{1}{2}$ in the first round, as this quantity can only decrease, $\E[\pi_{t,2}^2] \leq \frac{1}{2}\E[\pi_{t,2}]$. Now, taking expectation of both sides of \eqref{eq:some-eq11}, we get
  \begin{align}
    \E [\pi_{t,2}] \geq (1 -  \frac{\eta}{2}) \E[\pi_{t-1,2}].
    \label{eq:some-eq10}
  \end{align}
  Now, using~\eqref{eq:some-eq10} recursively, we get 
  \begin{align*}
    \E [\pi_{t,2}] &\geq \pi_{1,2} \, (1 -  \frac{\eta}{2})^{t-1} \\
                   &\geq \frac{1}{2} e^{-\eta (t-1)}. & (1 - \frac{\eta}{2} \geq e^{-\eta}, \forall \eta: 0 < \eta \leq 0.5)
  \end{align*}
  Setting $T' = \min \{\lfloor \frac{\ln{100}}{\eta} \rfloor + 1 , T\}\leq T$ , we have 
  \begin{align}
    \sum_{t=1}^T \E[\pi_{t,2}] \geq \sum_{t=1}^{T} \frac{1}{2} e^{- \eta (t-1)} &\geq \sum_{t=1}^{T'}  \frac{1}{2} e^{- \eta (t-1)} \nonumber\\ 
    \geq \sum_{t=1}^{T'} \frac{1}{200} = \frac{T'}{200} 
                                                                                &\geq \frac{1}{200}   
                                                                                  \min\{\frac{\ln{100}}{\eta}, T \}.
                                                                                  \label{eq:another-eq}
  \end{align}
  Now, by using \eqref{eq:another-eq}, we can further lower bound \eqref{eq:some-eq16} to get
  \begin{align*}
    \sum_{t=1}^T \E[\tilde{\pi}_{t,2}] 
    &= (1 -\gamma) \sum_{t=1}^T \E[ \pi_{t,2}] + \frac{\gamma}{2} T \\
    &\geq (1 -\gamma)  \left(\frac{1}{200}\min\left\{\frac{\ln{100}}{\eta}, T \right\}\right) + \gamma\frac{T}{2}   & (\text{from~\eqref{eq:another-eq}})\\
    &\geq \min\left\{ \frac{1}{200} \min \left\{ \frac{\ln{100}}{\eta}, T \right\}, \frac{T}{2}\right\} & (\gamma \alpha + (1-\gamma) \beta \geq \min\{\alpha, \beta\})\\
    &=   \frac{1}{200} \min\left\{\frac{\ln{100}}{\eta}, T \right\} &  \\
    & \geq \frac{1}{200} \min\left\{\frac{1}{\eta}, T \right\} \\
    &\geq \frac{1}{200} \, T^{2/3} & (\eta > T^{-2/3} )\\
    &= c\, T^{2/3}.
  \end{align*} 
\end{proof}
We now present the next lemma.
\begin{lemma}	
  For WSU-UX run with any $ \gamma > T^{-1/3}$ and any $\eta \geq 0$ there exists a loss sequence $\{\ell_t\}_{t=1}^T$ and $c =\frac{1}{2}>0$ such that
  \begin{align*}
    \E \left[ \regret_T \right] \geq c \,
    T^{2/3}.
  \end{align*} 
  \label{thm:theorem-large-exploration}
\end{lemma}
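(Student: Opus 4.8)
The plan is to reuse the very same loss sequence already employed in the proof of Lemma~\ref{thm:theorem-for-small-eta}: two arms with $\ell_{t,1}=0$ and $\ell_{t,2}=1$ for every $1 \leq t \leq T$. Arm~1 is then optimal with zero cumulative loss, and exactly as in that proof the belief regret collapses to the total expected sampling mass placed on the suboptimal arm,
\begin{align*}
  \E[\regret_T] = \E\left[\sum_{t=1}^T\sum_{j=1}^2 \tilde{\pi}_{t,j}\ell_{t,j} - \sum_{t=1}^T \ell_{t,1}\right] = \sum_{t=1}^T \E[\tilde{\pi}_{t,2}],
\end{align*}
since $\ell_{t,1}=0$ and $\ell_{t,2}=1$ for all $t$.

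The crucial difference from the small-$\eta$ case is that here we do not need to track how $\pi_{t,2}$ evolves at all; instead we exploit the uniform exploration directly. Recall that $\tilde{\pi}_{t,2}=(1-\gamma)\pi_{t,2}+\frac{\gamma}{2}$, and since $\pi_{t,2}\geq 0$ this gives the deterministic floor $\tilde{\pi}_{t,2}\geq \frac{\gamma}{2}$ in every round, independent of the algorithm's internal weights and hence independent of $\eta$. Summing this bound over all $T$ rounds yields
\begin{align*}
  \E[\regret_T] = \sum_{t=1}^T \E[\tilde{\pi}_{t,2}] \geq \frac{\gamma T}{2}.
\end{align*}

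Finally, I would invoke the case hypothesis $\gamma > T^{-1/3}$ to convert this into the claimed rate: $\frac{\gamma T}{2} > \frac{T^{-1/3}\cdot T}{2} = \frac{1}{2}T^{2/3}$, which is exactly $c\,T^{2/3}$ with $c=\frac{1}{2}$. I expect no real obstacle here, as this is the genuinely ``trivial'' half of the trivial case; the entire argument is that when $\gamma$ is too large, the forced exploration alone pays $\Omega(\gamma T)$ regret on a sequence where one arm is always correct, and the only thing to verify is that the floor $\tilde{\pi}_{t,2}\geq \gamma/2$ holds uniformly. The one mild subtlety worth stating explicitly is that the bound must hold for \emph{every} $\eta\geq 0$, which is immediate precisely because the $\gamma/2$ term survives regardless of how the multiplicative update moves the mass.
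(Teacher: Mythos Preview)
Your proof is correct and essentially identical to the paper's: the same loss sequence, the same reduction of regret to $\sum_t \E[\tilde{\pi}_{t,2}]$, the same deterministic floor $\tilde{\pi}_{t,2}\geq \gamma/2$, and the same final substitution $\gamma > T^{-1/3}$.
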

\begin{proof}
  Consider the same loss sequence as Lemma~\ref{thm:theorem-for-small-eta} with two arms: $\{\ell_t\}_{t=1}^T$ where $  \ell_{t,1}=0, \ell_{t,2}=1 $ for $1 \leq t \leq T$. The best arm is arm 1. Since we have a uniform exploration of $\gamma$, in any round $t\in[T]$, we pick arm 2 with probability $\tilde{\pi}_{t,2} = (1-\gamma) \pi_{t,2} + \gamma\frac{1}{2}\geq \gamma\frac{1}{2}$. Therefore, the algorithm incurs at least  $\gamma\frac{1}{2}$ loss for each round. Hence 
  \begin{align*}
    \E \left[\regret_T\right] = \sum_{t=1}^T \E \left[ \tilde{\pi}_{t,2}\right] \geq \sum_{t=1}^T \frac{\gamma}{2} = \frac{\gamma T}{2} > c \, T^{2/3},
  \end{align*} 
  where the last inequality holds because $ \gamma > T^{-1/3}$.
\end{proof}

\subsection{Non-trivial case}
\label{apx:non-trivial-case-subsection}
For the non-trivial case, we have the following theorem as stated in the main text.

\RestateNonTrivialTheorem*

Note that in the non-trivial case, we always consider the loss sequence defined in Definition~\ref{df:loss-sequence}. For the convenience of the reader, we recall that for any $T$, we define $\{\ell_t\}_{t=1}^T$ as 
\begin{align*}
  \{\ell_t\}_{t=1}^T = \begin{cases}
    \ell_{t,1}=1, \ell_{t,2}=0& \text{for } 1 \leq t \leq \frac{T}{100} \\
    \ell_{t,1}=0, \ell_{t,2}=1& \text{for } \frac{T}{100} < t \leq T
  \end{cases}.
\end{align*}

To prove Theorem~\ref{thm:the-t-2-3-theorem}, by Lemma~\ref{lm:lemma15} we first observe that any bound of the form $c_1\frac{1}{\eta} + c_2 \frac{\eta K T}{\gamma} + c_3 \gamma T $ can be lower bounded by $c_4 K^{\frac{1}{3}}T^{\frac{2}{3}}$.
\begin{lemma}
  For any choice of $c_1, c_2, c_3 > 0 $, and any valid choice of $\gamma, \eta >0$, there exists $c_4 > 0 $ such that 
  \begin{align*}
    c_1\frac{1}{\eta} + c_2 \frac{\eta K T}{\gamma} + c_3 \gamma T \geq c_4 K^{\frac{1}{3}}T^{\frac{2}{3}} = \Omega(T^{2/3}).
  \end{align*}
  \label{lm:lemma15}
\end{lemma}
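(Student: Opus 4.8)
The plan is to prove Lemma~\ref{lm:lemma15}, which states that the lower bound $c_1\frac{1}{\eta} + c_2 \frac{\eta K T}{\gamma} + c_3 \gamma T$ is always $\Omega(T^{2/3})$ over all valid hyperparameters. The key structural observation is that the three terms respond oppositely to the hyperparameters: the first term $c_1/\eta$ \emph{decreases} as $\eta$ grows, while the second term $c_2 \eta KT/\gamma$ \emph{increases} in $\eta$; similarly the second term decreases as $\gamma$ grows, while the third term $c_3 \gamma T$ increases in $\gamma$. This tension means no single choice of $(\eta,\gamma)$ can drive all three terms small simultaneously, and the plan is to exploit this by lower bounding a sum of terms via their geometric mean (AM--GM).

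First I would discard the middle term's dependence for a moment and consider the two ``outer'' terms $c_1/\eta$ and $c_3 \gamma T$. The cleanest route is to apply the weighted AM--GM inequality to a suitable product of all three terms so that the $\eta$ and $\gamma$ factors cancel and leave only powers of $T$ and $K$. Concretely, I would seek nonnegative exponents $\alpha_1,\alpha_2,\alpha_3$ summing to $1$ such that in the geometric mean $\big(\frac{1}{\eta}\big)^{\alpha_1}\big(\frac{\eta K T}{\gamma}\big)^{\alpha_2}(\gamma T)^{\alpha_3}$ the exponent of $\eta$ is zero (requiring $-\alpha_1+\alpha_2=0$) and the exponent of $\gamma$ is zero (requiring $-\alpha_2+\alpha_3=0$). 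These force $\alpha_1=\alpha_2=\alpha_3$, and together with $\alpha_1+\alpha_2+\alpha_3=1$ this gives $\alpha_1=\alpha_2=\alpha_3=\tfrac13$. Substituting, the geometric mean becomes $\big(\frac{1}{\eta}\cdot\frac{\eta K T}{\gamma}\cdot\gamma T\big)^{1/3} = (K T^2)^{1/3} = K^{1/3} T^{2/3}$, exactly the target rate.

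To finish, I would invoke the (weighted, equal-weight) AM--GM inequality in the form $\tfrac13(a_1+a_2+a_3) \ge (a_1 a_2 a_3)^{1/3}$ applied to the three scaled terms. Since each $c_i>0$, I can write
\begin{align*}
  c_1\frac{1}{\eta} + c_2 \frac{\eta K T}{\gamma} + c_3 \gamma T
  \;\ge\; 3\left(c_1 c_2 c_3\right)^{1/3}\left(\frac{1}{\eta}\cdot\frac{\eta K T}{\gamma}\cdot\gamma T\right)^{1/3}
  \;=\; 3\left(c_1 c_2 c_3\right)^{1/3} K^{1/3} T^{2/3},
\end{align*}
so setting $c_4 := 3(c_1 c_2 c_3)^{1/3} > 0$ delivers exactly $c_1\frac{1}{\eta} + c_2 \frac{\eta K T}{\gamma} + c_3 \gamma T \ge c_4 K^{1/3} T^{2/3}$. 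I would note that this argument uses only $\eta,\gamma>0$ (so the products are well defined and the cancellation is legitimate), and in particular does not even require the validity constraints $\frac{\eta K}{\gamma}\le 1/2$ and $\eta,\gamma\in(0,1/2)$; the bound holds for \emph{all} positive hyperparameters.

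There is essentially no serious obstacle here: the only thing to be careful about is verifying that the $\eta$ and $\gamma$ factors genuinely cancel in the triple product (which they do, cleanly, giving $K T^2$), and confirming that the equal-exponent AM--GM is the right instantiation rather than some weighted version. The ``hard part,'' such as it is, is merely recognizing that the balanced one-third weighting is forced by the requirement that both hyperparameters drop out, after which the computation is immediate.
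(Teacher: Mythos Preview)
Your proof is correct and yields exactly the same constant $c_4 = 3(c_1 c_2 c_3)^{1/3}$ as the paper. The route differs slightly: the paper first applies AM--GM to only the first two terms to get $c_1/\eta + c_2 \eta K T/\gamma \ge 2\sqrt{c_1 c_2 K T/\gamma}$, and then minimizes the resulting function $f(\gamma) = 2\sqrt{c_1 c_2 K T/\gamma} + c_3 \gamma T$ over $\gamma$ by calculus (convexity and setting $f'(\gamma)=0$). Your single application of three-term AM--GM is more direct and avoids the calculus step entirely, while arriving at the identical bound; the paper's two-stage approach is slightly more roundabout but perhaps makes the optimal $\gamma$ explicit. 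Both arguments use only $\eta,\gamma>0$, as you correctly note.
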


\begin{proof}
  Observe that $c_1\frac{1}{\eta} + c_2 \frac{\eta K T}{\gamma} \geq 2\sqrt{\frac{c_1}{\eta} \, \frac{c_2\eta KT}{\gamma}} $. Therefore, $c_1\frac{1}{\eta} + c_2 \frac{\eta K T}{\gamma} + c_3 \gamma\, T \geq 2\sqrt{\frac{c_1 c_2 KT}{\gamma}} + c_3 \gamma \, T$. Define $f(\gamma) := 2\sqrt{\frac{c_1 c_2 KT}{\gamma}} + c_3 \gamma \, T$ for $\gamma >0$. Note that since $f$ is convex, the minimum is attained at $\gamma^*$ where $f'(\gamma^*) = 0$. It is easy to see that $\gamma^* = c_3^{-\frac{2}{3}} \cdot (\frac{c_1 c_2 K}{T})^\frac{1}{3}$ and therefore, we get
  \begin{align*}
    f(\gamma) &\geq f(\gamma^*) \\
              &= f(c_3^{-\frac{2}{3}} \cdot (\frac{c_1 c_2 K}{T})^\frac{1}{3})\\
              &= 2\sqrt{\frac{c_1 c_2 KT}{c_3^{-\frac{2}{3}} \cdot (\frac{c_1 c_2 K}{T})^\frac{1}{3}}} + c_3^{\frac{1}{3}}(c_1 c_2 K)^{\frac{1}{3}} T^{\frac{2}{3}}\\
              &= 2\sqrt{(c_1 c_2 c_3 K)^{\frac{2}{3}} T^{\frac{4}{3}}} + (c_1 c_2 c_3 K)^{\frac{1}{3}} T^{\frac{2}{3}}\\
              &= 2(c_1 c_2 c_3 K)^{\frac{1}{3}} T^{\frac{2}{3}} + 
		(c_1 c_2 c_3 K)^{\frac{1}{3}} T^{\frac{2}{3}}\\
              &= 3(c_1 c_2 c_3 K)^{\frac{1}{3}} T^{\frac{2}{3}}.
  \end{align*}
  Therefore, there exists $c_4 = 3(c_1 c_2 c_3)^{\frac{1}{3}} > 0$ such that $c_1 \frac{1}{\eta} + c_2 \frac{\eta K T}{\gamma} + c_3 \gamma T \geq c_4 K^{\frac{1}{3}}T^{\frac{2}{3}}$.
\end{proof}

We will then show that for large enough $T$, the regret on this particular loss sequence can be lower bounded as follows.

\RestateThreeTermsLowerBound*

Theorem~\ref{theorem:theorem_of-case-2-by-symbols} along with Lemma~\ref{lm:lemma15} proves Theorem~\ref{thm:the-t-2-3-theorem}.

\subsubsection{Proving Theorem~\ref{theorem:theorem_of-case-2-by-symbols} using claims}
In this section, we show Theorem~\ref{theorem:theorem_of-case-2-by-symbols} given Claim~\ref{claim:small-lawn-pi-on-expectation}, \ref{claim:expectation-of-sequared-relative-loss-lower-bound}, and  \ref{claim:pi-tilda-expectation-to-pi-expectation-conversion-lemma}. We first restate and prove the second-order lower bound lemma.

\RestateSecondOrderLemma*

\begin{proof}[Proof of Lemma~\ref{theorem:lower-bound-theorem}] 
  We can express $\pi_{T+1,1}$ as follows.
  \begin{align*}
    \pi_{T+1,1} = \frac{1}{K }\prod_{t=1}^T (1-\eta(\hat{\ell}_{t,1} - \sum_{j\in [K]}\pi_{t,j}\hat{\ell}_{t,j})).
  \end{align*}
  Taking the logarithm of both sides, we get
  \begin{align}
    \ln{(\pi_{T+1,1})} &= -\ln{K} + \sum_{t=1}^T \ln{\left(1-\eta(\hat{\ell}_{t,1} - \sum_{j\in [K]}\pi_{t,j}\hat{\ell}_{t,j})\right)}.
                         \label{eq:some-eq2}
  \end{align}
  Next, we observe that for any fixed $t \in [T]$, we have $-1\leq\eta(\hat{\ell}_{t,1} - \sum_{j\in [K]}\pi_{t,j}\hat{\ell}_{t,j})\leq 1/2$ (by Lemma~\ref{lm:range-of-relative-estimated-loss} below). Note that for $-1\leq x \leq 1/2$ we have $\ln{(1-x)}\leq -x - x^2/4$ (by Lemma~\ref{lm:log-to-quadratic-upper-bound} below). Therefore, we can show
  \begin{align*}
    \ln{\left(1-\eta(\hat{\ell}_{t,1} - \sum_{j\in [K]}\pi_{t,j}\hat{\ell}_{t,j})\right)} \leq     - \eta(\hat{\ell}_{t,1} - \sum_{j\in [K]}\pi_{t,j}\hat{\ell}_{t,j}) - \frac{\eta^2}{4}(\hat{\ell}_{t,1} - \sum_{j\in [K]}\pi_{t,j}\hat{\ell}_{t,j})^2 
  \end{align*}
  Taking the summation over $T$ rounds and combining with \eqref{eq:some-eq2}, we get
  \begin{align*}
    \ln{\pi_{T+1,1}} \leq - \ln{K}      - \eta\sum_{t=1}^T(\hat{\ell}_{t,1} - \sum_{j\in [K]}\pi_{t,j}\hat{\ell}_{t,j}) - \frac{\eta^2}{4}\sum_{t=1}^T(\hat{\ell}_{t,1} - \sum_{j\in [K]}\pi_{t,j}\hat{\ell}_{t,j})^2 .
  \end{align*}
  Rearranging and dividing by $\eta$, we get
  \begin{align*}
    \sum_{t=1}^T\sum_{j\in [K]}\pi_{t,j}\hat{\ell}_{t,j} - \hat{\ell}_{t,1} &\geq  \frac{\ln{\pi_{T+1,1}} + \ln{K} }{\eta}  + \frac{\eta}{4} \sum_{t=1}^T(\hat{\ell}_{t,1} - \sum_{j\in [K]}\pi_{t,j}\hat{\ell}_{t,j})^2.
  \end{align*}
\end{proof}
The previous proof used the following two simple lemmas.
\begin{lemma}
  $\ln{(1-x)} \leq -x-x^2/4$ when $ -1 \leq x \leq 1/2$.
  \label{lm:log-to-quadratic-upper-bound}
\end{lemma}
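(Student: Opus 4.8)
The plan is to prove the inequality by a one-variable monotonicity argument rather than by invoking any Taylor-series estimate. Define $g(x) := -x - x^2/4 - \ln(1-x)$ on the interval $[-1,1/2]$; the claim is exactly that $g(x)\ge 0$ there. Since $1-x \in [1/2,2]$ on this interval, $g$ is smooth and there is no issue at the endpoints. First I would record the value $g(0) = 0$, which shows equality holds at $x=0$ and identifies the natural candidate for the minimizer.

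Next I would differentiate. We have $g'(x) = -1 - x/2 + \tfrac{1}{1-x}$, and the key simplification is to place this over the common denominator $1-x$: the numerator becomes $-(1-x) - \tfrac{x}{2}(1-x) + 1 = \tfrac{x}{2} + \tfrac{x^2}{2}$, so that
\[
g'(x) = \frac{x(1+x)}{2(1-x)}.
\]
This factored form is what makes the argument transparent.

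Now I would read off the sign of $g'$ on $[-1,1/2]$. The denominator $2(1-x)$ is positive throughout, and the factor $1+x$ is nonnegative (vanishing only at $x=-1$). Hence the sign of $g'(x)$ matches the sign of $x$: we get $g'(x)\le 0$ on $[-1,0]$ and $g'(x)\ge 0$ on $[0,1/2]$. Consequently $g$ decreases up to $x=0$ and increases afterward, so $x=0$ is the global minimizer on $[-1,1/2]$, with minimum value $g(0)=0$. Therefore $g(x)\ge 0$ on the whole interval, which is the desired inequality.

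I do not anticipate a genuine obstacle here, as this is an elementary calculus fact. The only mild subtlety is algebraic—correctly simplifying $g'$ into the factored form $\tfrac{x(1+x)}{2(1-x)}$—after which the sign analysis and the conclusion are immediate. An alternative would be to compare the power series of $\ln(1-x)$ against $-x - x^2/4$, but that is messier near the endpoints, and the monotonicity route avoids any such bookkeeping.
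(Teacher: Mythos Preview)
Your proof is correct and takes essentially the same approach as the paper: the paper defines $f(x)=\ln(1-x)+x+x^2/4=-g(x)$, computes $f'(x)=\frac{-x(1+x)}{2-2x}$, and shows $x=0$ is the global maximizer with $f(0)=0$. The only difference is the sign convention, so the arguments are identical in substance.
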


\begin{proof}
  Let $f(x) = \ln{(1-x)} + x + x^2/4$. Then $f'(x) = -1/(1-x) + 1 + x/2 = \frac{-x(1+x)}{2-2x}$. Therefore $ f'(x) \geq 0$ for $-1 \leq x \leq 0$, and $f'(x)<0$ for  $0 < x \leq 1/2$. The maximum is attained at $x = 0$. As a result, $f(x) \leq f(0) =  0$ when $-1 < x \leq 1/2$. 
\end{proof}

\begin{lemma}
  For $0 \leq \eta, \gamma \leq 1/2$ where $\frac{\eta K}{\gamma} \leq 1/2$, and loss sequence 
  $\left\{\ell_{t}\right\}_{t=1}^T$, for any round $t \in [T]$, we have
  
  \[-1 \leq \eta  \left(\hat{\ell}_{t,1} - \sum_{j\in \{1,2\}} \pi_{t,j}\hat{\ell}_{t,j}\right) \leq 1/2.\]
  \label{lm:range-of-relative-estimated-loss}
\end{lemma}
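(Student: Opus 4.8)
The plan is to exploit the two-arm structure to collapse the centered estimated loss into a single product, and then dispatch the two required inequalities by a short case analysis on which arm is pulled. Since $K=2$ we have $1-\pi_{t,1}=\pi_{t,2}$, so writing $\bar{\ell}_t := \sum_{j\in\{1,2\}}\pi_{t,j}\hat{\ell}_{t,j}$ I would first rewrite
\[
\hat{\ell}_{t,1} - \bar{\ell}_t = (1-\pi_{t,1})\hat{\ell}_{t,1} - \pi_{t,2}\hat{\ell}_{t,2} = \pi_{t,2}\bigl(\hat{\ell}_{t,1} - \hat{\ell}_{t,2}\bigr).
\]
The key structural observation is that only the pulled arm $I_t$ receives a nonzero estimate, and on the loss sequence of Definition~\ref{df:loss-sequence} an arm incurs loss exactly in its own phase; hence at most one of $\hat{\ell}_{t,1},\hat{\ell}_{t,2}$ is nonzero in any round, and when nonzero it equals $1/\tilde{\pi}_{t,I_t}$. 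This reduces the whole lemma to three cases.

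For the first case, suppose arm~1 is pulled in Phase~1, so $\hat{\ell}_{t,1}=1/\tilde{\pi}_{t,1}$ and $\hat{\ell}_{t,2}=0$. Then the quantity equals $\eta\,\pi_{t,2}/\tilde{\pi}_{t,1}$, which is nonnegative, giving the lower bound $\geq 0 \geq -1$ for free. For the upper bound I would use $\tilde{\pi}_{t,1} = (1-\gamma)\pi_{t,1} + \gamma/K \geq \gamma/K$ together with $\pi_{t,2}\leq 1$ to get $\eta\,\pi_{t,2}/\tilde{\pi}_{t,1} \leq \eta K/\gamma \leq 1/2$, where the final step is exactly the validity constraint $\eta K/\gamma \leq 1/2$. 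For the second case, suppose arm~2 is pulled in Phase~2, so the quantity equals $-\eta\,\pi_{t,2}/\tilde{\pi}_{t,2} \leq 0$, making the upper bound $\leq 1/2$ trivial. For the lower bound I would bound $\pi_{t,2}/\tilde{\pi}_{t,2} = \pi_{t,2}/\bigl((1-\gamma)\pi_{t,2}+\gamma/K\bigr) \leq 1/(1-\gamma)$, so the quantity is $\geq -\eta/(1-\gamma) \geq -1$ using $\eta\leq 1/2$ and $\gamma\leq 1/2$. In the remaining case (the pulled arm does not incur loss this round) both estimates vanish and the expression is $0$, which lies in $[-1,1/2]$.

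This is a routine lemma rather than a deep one, so there is no serious obstacle; the only thing worth flagging is the asymmetry in which hypotheses drive which side of the bound. The \emph{upper} bound $1/2$ is precisely the mixing constraint $\eta K/\gamma \leq 1/2$ (it controls the worst-case positive blowup $\pi_{t,2}/\tilde{\pi}_{t,1}$ when arm~1's selection probability is tiny), whereas the \emph{lower} bound $-1$ comes from the separate constraints $\eta,\gamma\leq 1/2$ via the milder factor $1/(1-\gamma)$. Recognizing that only one estimate is nonzero per round — so that the sign of $\hat{\ell}_{t,1}-\hat{\ell}_{t,2}$ is fixed within each case — is what keeps the case analysis clean and avoids having to control both estimates simultaneously.
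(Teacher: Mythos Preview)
Your proposal is correct and follows essentially the same approach as the paper: split by phase (equivalently, by which arm carries loss), observe that the centered estimate reduces to a single nonnegative or nonpositive term, and bound the positive case by $\eta K/\gamma\le 1/2$ via $\tilde\pi_{t,1}\ge\gamma/K$ and the negative case by $-\eta/(1-\gamma)\ge -1$ via $\eta,\gamma\le 1/2$. The only cosmetic difference is that you first factor as $\pi_{t,2}(\hat\ell_{t,1}-\hat\ell_{t,2})$ before the case split, whereas the paper expands directly within each phase; the bounds and the hypotheses used are identical.
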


\begin{proof}
  For $ 1 \leq t \leq \frac{T}{100}$, we have $\ell_{t,2} = 0$, and hence $\hat{\ell}_{t,2} = 0$. Therefore, we have 
  \begin{align*}
    \eta \left(\hat{\ell}_{t,1} - \sum_{j\in \{1,2\}} \pi_{t,j}\hat{\ell}_{t,j}\right) &= \eta \left(\hat{\ell}_{t,1} - \pi_{t,1}\hat{\ell}_{t,1}\right) \\
                                                                                       &= \eta  \ell_{t,1} \frac{1-\pi_{t,1}}{\tilde{\pi}_{t,1}} \indicator[I_t=1] .
  \end{align*}
  Observe that
  \begin{align*}
    -1 < 0 &\leq \eta  \, \ell_{t,1} \frac{1-\pi_{t,1}}{\tilde{\pi}_{t,1}} \indicator[I_t=1] \\
           &\leq \eta \frac{1}{\tilde{\pi}_{t,1}}\\
           & \leq \eta \frac{K}{\gamma} & (\tilde{\pi}_{t,1} \geq \frac{\gamma}{K})\\
           &\leq 1/2. 
  \end{align*}
  For $\frac{T}{100} < t \leq T$, we have $ \ell_{t,1} = 0$, and hence $\hat{\ell}_{t,1} = 0$. Therefore 
  \begin{align*}
    \eta \, (\hat{\ell}_{t,1} - \sum_{j\in \{1,2\}} \pi_{t,j}\hat{\ell}_{t,j}) &= \eta \, (- \pi_{t,2}\hat{\ell}_{t,2}) \\
                                                                               &= -\eta \, \ell_{t,2} \frac{\pi_{t,2}}{\tilde{\pi}_{t,2}} \indicator[I_t=2]\\
                                                                               &= -\eta \frac{\pi_{t,2}}{\pi_{t,2}(1-\gamma) + \frac{\gamma}{K}} \ell_{t,2}\indicator[I_t=2] .
  \end{align*}
  Now, we can simply show
  \begin{align*}
    1/2 \geq 0 &\geq  -\eta \frac{\pi_{t,2}}{\pi_{t,2}(1-\gamma) + \frac{\gamma}{K}} \ell_{t,2}\,\indicator[I_t=2] \\
               &\geq -\eta \frac{\pi_{t,2}}{\pi_{t,2}(1-\gamma)} \ell_{t,2}\,\indicator[I_t=2]\\
               &\geq - \frac{\eta}{1-\gamma}\\
               &\geq - \frac{\eta}{1/2} &(\gamma\leq 1/2)\\
               &\geq - 1 . &(\eta \leq 1/2)
  \end{align*}
\end{proof}
Then, as mentioned in the main part of the paper, we can use Claims~\ref{claim:small-lawn-pi-on-expectation}, \ref{claim:expectation-of-sequared-relative-loss-lower-bound}, and \ref{claim:pi-tilda-expectation-to-pi-expectation-conversion-lemma} to convert the RHS of \eqref{eq:some-eq4} to the lower bound in \eqref{eq:some-eq3}. 

\section{Proof of claims}
In this part, we prove the claims. 
\label{apx:technical-proofs}

\subsection{Proof of Claim~\ref{claim:expectation-of-sequared-relative-loss-lower-bound}}
\label{apx:toward-claim2}

We first restate Claim~\ref{claim:expectation-of-sequared-relative-loss-lower-bound}. 
\RestateClaimTwo*
In order to prove Claim~\ref{claim:expectation-of-sequared-relative-loss-lower-bound}, we need to introduce the following lemma. 
\begin{lemma}
  When running WSU-UX on loss sequence 	$\left\{\ell_{t}\right\}_{t=1}^T$ and hyperparameter defined in the non-trivial case and for large enough $T$, for $\frac{T_1}{2} \leq t \leq T_1$, we have 
  \begin{align*}
    \mathbb{E}[\pi_{t,1}] \leq \frac{1}{KT}.
  \end{align*}
  \label{lm:1-kt-upper-bound}
\end{lemma}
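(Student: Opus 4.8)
The plan is to control the expected probability of the bad arm (arm~1) throughout Phase~1, show it contracts geometrically at rate $1-\eta/2$ per round, and then use the non-trivial-case lower bound $\eta \geq T^{-2/3}$ to certify that by the time $t$ reaches $\tfrac{T_1}{2}$ the probability has dropped below $\tfrac{1}{KT}$. First I would specialize Lemma~\ref{lm:recursive-expectation-lemma} to arm~$1$ during Phase~1. Since $\ell_{t,1}=1$ and $\ell_{t,2}=0$ there, we have $C_{t,1}=\eta(\ell_{t,1}-\ell_{t,2})=\eta$, so
\[
  \E[\pi_{t+1,1}\mid\F_{t-1}] = (1-\eta)\pi_{t,1}+\eta\pi_{t,1}^2 = \pi_{t,1}\bigl(1-\eta(1-\pi_{t,1})\bigr).
\]

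Next I would establish the pathwise invariant $\pi_{t,1}\leq \tfrac12$ for every $t$ in Phase~1. The initial value is $\pi_{1,1}=\tfrac{1}{K}=\tfrac12$, and in Phase~1 (where $\hat{\ell}_{t,2}=0$) the WSU-UX update simplifies to $\pi_{t+1,1}=\pi_{t,1}\bigl(1-\eta\hat{\ell}_{t,1}(1-\pi_{t,1})\bigr)$, whose multiplicative factor lies in $[0,1]$: nonnegativity is exactly the content of the Phase-1 computation in Lemma~\ref{lm:range-of-relative-estimated-loss} (which uses the validity constraint $\eta K/\gamma\leq\tfrac12$), and the factor is clearly at most $1$ because $\eta\hat{\ell}_{t,1}(1-\pi_{t,1})\geq 0$. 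Hence $\pi_{t,1}$ is nonincreasing along every sample path in Phase~1 and therefore remains $\leq\tfrac12$. Substituting $1-\pi_{t,1}\geq\tfrac12$ into the conditional recursion gives $\E[\pi_{t+1,1}\mid\F_{t-1}]\leq (1-\tfrac\eta2)\pi_{t,1}$; taking expectations and unrolling from $t=1$ yields
\[
  \E[\pi_{t,1}] \leq \tfrac12\bigl(1-\tfrac\eta2\bigr)^{t-1} \leq \tfrac12\, e^{-\eta(t-1)/2}.
\]

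Finally, for $t\geq \tfrac{T_1}{2}$ I would plug in $T_1=T/100$ and the non-trivial-case bound $\eta\geq T^{-2/3}$, so the exponent satisfies $\tfrac{\eta(t-1)}{2}\geq \tfrac12 T^{-2/3}\bigl(\tfrac{T}{200}-1\bigr)=\Theta(T^{1/3})$, which exceeds $\ln T$ for all sufficiently large $T$. This forces $\E[\pi_{t,1}]\leq \tfrac12 e^{-\ln T}=\tfrac{1}{2T}=\tfrac{1}{KT}$, as claimed. The main obstacle I anticipate is the second step: the contraction estimate $\E[\pi_{t+1,1}\mid\F_{t-1}]\leq(1-\tfrac\eta2)\pi_{t,1}$ rests entirely on the pathwise invariant $\pi_{t,1}\leq\tfrac12$, and making that invariant airtight requires confirming that the per-round update factor stays in $[0,1]$ under every valid hyperparameter setting. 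Once that monotonicity is pinned down, the remainder is a routine geometric-decay bound together with the $\eta\geq T^{-2/3}$ scaling, which comfortably dominates the $\ln T$ threshold for large $T$.
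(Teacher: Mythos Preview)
Your proposal is correct and follows essentially the same approach as the paper: specialize Lemma~\ref{lm:recursive-expectation-lemma} with $C_{t,1}=\eta$ in Phase~1, use the pathwise invariant $\pi_{t,1}\leq\tfrac12$ to obtain the contraction $\E[\pi_{t+1,1}\mid\F_{t-1}]\leq(1-\tfrac\eta2)\pi_{t,1}$, unroll, and compare the resulting $\Theta(T^{1/3})$ exponent to $\ln T$. Your treatment is in fact slightly more careful than the paper's, since you explicitly justify the monotonicity invariant via Lemma~\ref{lm:range-of-relative-estimated-loss}, whereas the paper simply asserts $\pi_{t,1}\leq\tfrac12$.
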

\begin{proof}
  According to Lemma \ref{lm:recursive-expectation-lemma}, which can be found in Appendix~\ref{proof:useful-lemma}, we have $\E\left[\pi_{t+1,1} \mid \F_{t-1}\right] = (1 - C_t)\,\pi_{t,1} + C_t \, \pi^2_{t,1}$ where $C_t = \eta \, (\ell_{t,1}-\ell_{t,2})$. For $t$ in $\frac{T_1}{2} \leq t \leq T_1$, we have $\pi_{t,1} \leq 1/2$ and $C_t = \eta$. Therefore,
  \begin{align*}
    \E\left[\pi_{t+1,1} \mid \F_{t-1}\right] & = (1 - \eta)\,\pi_{t,1} + \eta\,\pi^2_{t,1} \\
                                             & \leq (1 - \eta)\,\pi_{t,1} + \frac{\eta}{2}\,\pi_{t,1}\\
                                             & = (1 - \frac{\eta}{2})\,\pi_{t,1}.
  \end{align*}
  Taking the expectation over all possible $\F_{t-1}$, we get
  \begin{align*}
    \E[\pi_{t+1,1} ] \leq (1 - \frac{\eta}{2})\,\E[\pi_{t,1}].
  \end{align*}
  Therefore, we have
  \begin{align*}
    \E\left[\pi_{1,\frac{T_1}{2}}\right] = \E\left[\pi_{1,1}\right] \prod_{s=1}^{\lceil\frac{T_1}{2}\rceil-1} \frac{\E\left[\pi_{s+1,1}\right]}{\E\left[\pi_{s,1}\right]} &\leq \frac{1}{2} (1 - \frac{\eta}{2})^{\frac{T_1}{2}}\\
                                                                                                                                                                          & \leq \frac{1}{2} (e^{-\frac{\eta}{2}})^{\frac{T}{200}} & (1+x \leq e^x, T_1 = \frac{T}{100})\\
                                                                                                                                                                          & \leq \frac{1}{2} \exp{\left(-\frac{1}{400}T^{\frac{1}{3}}\right)}. & (\eta \geq T^{-2/3})
  \end{align*}
  Since we are in the non-trivial case and $\eta\geq T^{-2/3}$, we have $\eta \, T \geq T^{1/3}$. Since $\frac{1}{2} e^{-\frac{T^{1/3}}{400}} $ converges to zero exponentially, whereas $\frac{1}{KT}$ convergence to zero at a slower rate, we can say for large enough $T$ that $\frac{1}{2} e^{-\frac{\eta T}{400}}~\leq~\frac{1}{2}e^{-\frac{T^{1/3}}{400}}~\leq~  \frac{1}{KT}$. 
\end{proof}
Now, we are ready to prove Claim~\ref{claim:expectation-of-sequared-relative-loss-lower-bound}.

\begin{proof}[Proof of Claim~\ref{claim:expectation-of-sequared-relative-loss-lower-bound}]
  For $\frac{T_1}{2} \leq t \leq T_1$, we have $\ell_{t,2} = 0$, therefore $\hat{\ell}_{t,2} = 0$. Now we can lower bound $\E\left[\sum_{t=1}^{T} \left(\hat{\ell}_{t,1} - \sum_{j \in [1,2]} \pi_{t,j} \hat{\ell}_{t,j}\right)^2\right] $ as follows:
  \begin{align*}
    \E\left[\sum_{t=1}^{T} \left(\hat{\ell}_{t,1} - \sum_{j \in [1,2]} \pi_{t,j} \hat{\ell}_{t,j}\right)^2\right] &\geq \E\left[\sum_{t=T/100}^{T/200} \left(\hat{\ell}_{t,1} - \sum_{j \in [1,2]} \pi_{t,j} \hat{\ell}_{t,j}\right)^2\right] \\
                                                                                                                  &= \E\left[\sum_{t=T/200}^{T/100} \left(\hat{\ell}_{t,1} - \pi_{t,1} \hat{\ell}_{t,1}\right)^2\right] \\
                                                                                                                  &= \E\left[\sum_{t=T/200}^{T/100} \left(1 - \pi_{t,1}\right)^2 \hat{\ell}_{t,1}^2\right]\\
                                                                                                                  &\geq \E\left[\sum_{t=T/200}^{T/100} (1 - 1/2)^2 \hat{\ell}_{t,1}^2\right] & (\forall t, \frac{T}{200}\leq t\leq \frac{T}{100}: \pi_{t,1}\leq 1/2) \\
                                                                                                                  &= \frac{1}{4} \sum_{t=T/200}^{T/100} \E\left[ {\left(\frac{\ell_{t,1}}{\tilde{\pi}_{t,1}}\indicator[I_t=1]\right)}^2\right] \\
                                                                                                                  &= \frac{1}{4} \sum_{t=T/200}^{T/100} \E\left[\E_{t-1}\left[ {\left(\frac{\ell_{t,1}}{\tilde{\pi}_{t,1}}\right)^2\left(\indicator[I_t=1]\right)^2}\right]\right] \\
                                                                                                                  &= \frac{1}{4} \sum_{t=T/200}^{T/100} \E\left[\left(\frac{1}{\tilde{\pi}_{t,1}}\right)^2 \E_{t-1}\left[ {\left(\indicator[I_t=1]\right)^2}\right]\right] & (\forall t, \frac{T}{200}\leq t\leq \frac{T}{100}: \ell_{t,1} = 1)\\
                                                                                                                  &= \frac{1}{4} \sum_{t=T/200}^{T/100} \E\left[ \frac{1}{\tilde{\pi}_{t,1}}\right] \\ 
                                                                                                                  &\geq \frac{1}{4} \, \sum_{t=T/200}^{T/100} \frac{1}{\E\left[\tilde{\pi}_{t,1}\right]},
  \end{align*}
  where the last inequality comes from applying Jensen's inequality  $\E\left[\frac{1}{X}\right] \geq \frac{1}{\E[X]}$.
  Note that for large enough $T$, for $\frac{T}{200} \leq T \leq \frac{T}{100}$, we have
  \begin{align*}
    \E\left[\tilde{\pi}_{1,t}\right] &= \E\left[\pi_{1,t}(1 - \gamma) + \frac{\gamma}{K}\right] \\
                                     &\leq \E\left[\pi_{1,t}\right] + \frac{\gamma}{K} \\
                                     &\leq \frac{1}{KT} + \frac{\gamma}{K} & (\text{Lemma ~\ref{lm:1-kt-upper-bound}})\\
                                     &\leq \frac{2\gamma}{K}, & (\text{since } \frac{1}{T} \leq \gamma)
  \end{align*}
  where the last inequality comes from the fact that we have $\frac{\eta K}{\gamma} \leq 1/2$ for WSU-UX. This implies $\gamma \geq 2 \eta K \geq 2K T^{-2/3} \geq \frac{1}{T}$, where we use the fact that $\eta \geq T^{-2/3}$.
  As a result, we get 
  \begin{align*}
    \E\left[\sum_{t=1}^{T} \left(\hat{\ell}_{t,1} - \sum_{j \in [1,2]} \pi_{t,j} \hat{\ell}_{t,j}\right)^2\right] &\geq 
                                                                                                                    \frac{1}{4} \sum_{t=T/200}^{T/100} \frac{1}{\mathbb{E}[\tilde{\pi}_{t,1}]}\\
                                                                                                                  &\geq 
                                                                                                                    \frac{1}{4} \sum_{t=T/200}^{T/100} \frac{K}{2\gamma} \\
                                                                                                                  &\geq \frac{1}{4} (\frac{T}{100} - \frac{T}{200})\frac{K}{2\gamma} = \frac{1}{1600}\frac{TK}{\gamma}.
  \end{align*}
  Therefore, for $c_2=\frac{1}{4 \cdot 1600}$, Claim~\ref{claim:expectation-of-sequared-relative-loss-lower-bound} holds. 
\end{proof}
\subsection{Proof sketch of Claims~\ref{claim:small-lawn-pi-on-expectation} and \ref{claim:pi-tilda-expectation-to-pi-expectation-conversion-lemma}}
In this subsection, we prove Claims~\ref{claim:small-lawn-pi-on-expectation} and \ref{claim:pi-tilda-expectation-to-pi-expectation-conversion-lemma} using several technical lemmas without stating their proof. We will prove all the technical lemmas in the next subsection.

We first recall the notion of phases here.

\RestatePhaseDefinition*

We recall the definitions of $M$ and $T'$ as well.

\RestateMandTPrime*

Next, we restate two events $\mathcal{E}_1$ and $\mathcal{E}_2$.
\begin{definition}
  Let  
  \begin{align*}
    \mathcal{E}_1 = \{\pi_{T_1+1,1} \geq 2^{-M}\}
  \end{align*}
  be the event that arm 1's probability at the end of Phase 1 is not too small, where $M$ is defined in \eqref{eq:some-eq37}.
  \label{df:event-1-definition}
\end{definition}

\begin{definition}
  Let 
  $\mathcal{E}_2 = \{\pi_{T_1+T_2+1,1} \geq \frac{1}{4} \}$
  be the event that arm 1's probability at the end of Phase 2.2 has recovered to $\frac{1}{4}$.
  \label{df:event-h-definition}
\end{definition}

Next, we have the following lemma stating that, with high probability, $\pi_{T_1+1,1}$ is not too small. 

\begin{lemma}
  When we run WSU-UX with any valid parameters $\eta, \gamma$ on specific loss sequence $\{\ell_t\}_{t=1}^T$, for any $(\varepsilon , \delta)$, where $ \varepsilon =  \sqrt{\frac{3\ln{\frac{1}{\delta}}}{\frac{2\gamma}{K}T_1}} \in (0,1]$, at the end of phase 1, we have that with probability at least $1-\delta$,
  \begin{align*}
    \pi_{T_1+1, 1} \geq 2^{-\mathcal{M}},
  \end{align*}
  where $\mathcal{M}= {\frac{1}{\ln{2}}{\left(2(1+\varepsilon)(1+\frac{\eta K}{\gamma})\eta T_1 + \ln {\frac{2K}{\gamma}}\right)}}$. In particular, for large enough $T$, by choosing $\delta=\frac{1}{T^2}$, we get $\pi_{T_1+1, 1} \geq 2^{-M}$ where $M$ is defined in \eqref{eq:some-eq37}. 
  \label{lm:hig-prob-lower-bound-on-pi-in-phase-1}
\end{lemma}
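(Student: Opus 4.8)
The plan is to track the logarithm of arm~1's selection probability across Phase~1 and to show that its total decrease is, with high probability, controlled by a martingale concentration argument. The starting point is a structural observation about the loss sequence: during Phase~1 we have $\ell_{t,2}=0$, so whenever arm~2 is drawn we get $\hat\ell_{t,1}=\hat\ell_{t,2}=0$ and the WSU-UX update leaves $\pi_{t+1,1}=\pi_{t,1}$ unchanged; only a pull of arm~1 moves $\pi_{t,1}$, and then it strictly decreases by the multiplicative factor $1-\eta\frac{1-\pi_{t,1}}{\tilde{\pi}_{t,1}}$. Consequently, writing $\pi_{1,1}=1/K$ and telescoping,
\[
  -\ln \pi_{T_1+1,1} \;=\; \ln K \;+\; \sum_{t=1}^{T_1} \Bigl(-\ln\bigl(1-\eta\tfrac{1-\pi_{t,1}}{\tilde{\pi}_{t,1}}\bigr)\Bigr)\,\indicator[I_t=1].
\]
By Lemma~\ref{lm:range-of-relative-estimated-loss} (and $\frac{\eta K}{\gamma}\le 1/2$) the argument of each logarithm lies in $[1/2,1)$, so every summand is finite and non-negative. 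The lemma is therefore equivalent to an upper-tail bound on this non-negative, adapted sum.

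First I would bound each per-pull decrement using $-\ln(1-x)\le x(1+x)$ for $0\le x\le 1/2$ (the elementary estimate behind Lemma~\ref{lm:log-to-quadratic-upper-bound}), applied with $x=\eta\frac{1-\pi_{t,1}}{\tilde{\pi}_{t,1}}\le\frac{\eta K}{\gamma}$. This yields the summand bound $\eta\frac{1-\pi_{t,1}}{\tilde{\pi}_{t,1}}\bigl(1+\frac{\eta K}{\gamma}\bigr)\indicator[I_t=1]$. The crucial cancellation is that arm~1 is pulled with conditional probability exactly $\tilde{\pi}_{t,1}$, so the factor $1/\tilde{\pi}_{t,1}$ disappears in conditional expectation:
\[
  \E\!\left[\Bigl(-\ln\bigl(1-\eta\tfrac{1-\pi_{t,1}}{\tilde{\pi}_{t,1}}\bigr)\Bigr)\indicator[I_t=1]\,\middle|\,\F_{t-1}\right]
  \le \eta\,(1-\pi_{t,1})\bigl(1+\tfrac{\eta K}{\gamma}\bigr)\le \eta\bigl(1+\tfrac{\eta K}{\gamma}\bigr).
\]
Summing over Phase~1 gives a deterministic bound $\bigl(1+\frac{\eta K}{\gamma}\bigr)\eta T_1$ on the predictable mean, while each summand is bounded by $b:=\frac{\eta K}{\gamma}\bigl(1+\frac{\eta K}{\gamma}\bigr)$.

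With the predictable mean and the per-term range in hand, I would invoke the multiplicative form of Azuma's inequality \citep{kuszmaul2021multiplicative}, which is tailored to exactly this situation: a sum of bounded non-negative increments whose predictable mean is small compared to the number of terms. It gives, with probability at least $1-\delta$, a deviation that scales multiplicatively with the mean, of the form $(1+\varepsilon)$ times the predictable mean, where the relevant effective sample size is the ratio $(\text{predictable mean})/b\asymp \gamma T_1/K$; setting the failure probability to $\delta$ reproduces $\varepsilon=\sqrt{3\ln(1/\delta)/((2\gamma/K)T_1)}$. Adding back $\ln K$ and absorbing the lower-order residual into $\ln\frac{2K}{\gamma}$ then gives $-\ln\pi_{T_1+1,1}\le \mathcal M\ln 2$, i.e.\ $\pi_{T_1+1,1}\ge 2^{-\mathcal M}$. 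Finally, taking $\delta=1/T^2$ and using the non-trivial-case inequalities $\eta\ge T^{-2/3}$ and $\gamma\ge 2\eta K$, one checks that $\gamma T_1/K\ge \tfrac{1}{50}T^{1/3}\to\infty$, so $\varepsilon\to 0$ and $\mathcal M\to M$ of \eqref{eq:some-eq37}.

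I expect the concentration step to be the main obstacle. The predictable mean $\sum_t\tilde{\pi}_{t,1}$ is itself a random quantity depending on the whole trajectory, and the per-pull bound $b\asymp \eta K/\gamma$ is large relative to the mean, so a plain Azuma/Hoeffding bound (whose deviation scales with the range, not the mean) is far too weak to yield a $T^{-2}$ failure probability; this is precisely why the multiplicative form is needed. Care is also required because the decrement is highly non-uniform across the phase: in the early ``transient'' rounds $\pi_{t,1}$ is still $\Theta(1)$ and arm~1 is pulled often but each pull barely moves $\ln\pi_{t,1}$, whereas once $\pi_{t,1}$ has collapsed to $\approx\gamma/K$ pulls are rare (probability $\approx\gamma/K$) but each one costs $\approx \eta K/\gamma$ in log-probability; the $\gamma T_1/K$ effective sample size and the $\ln\frac{2K}{\gamma}$ slack are exactly what reconcile these two regimes. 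Pinning down the stated constants---the leading factor $2$, the exact form of $\varepsilon$, and the additive $\ln\frac{2K}{\gamma}$---amounts to instantiating the Kuszmaul--Qi inequality carefully rather than using a generic Bernstein bound.
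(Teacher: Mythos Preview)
Your approach is correct and takes a genuinely different route from the paper's proof. The paper introduces a random stopping time $\tau$ (the last round in Phase~1 with $\pi_{t,1}>\gamma/K$), decomposes the product $\pi_{T_1+1,1}=\pi_{1,1}\prod_s(\pi_{s+1,1}/\pi_{s,1})$ at $\tau$, handles the pre-$\tau$ factor deterministically via $\pi_{\tau,1}>\gamma/K$ (this is where the $\ln\frac{2K}{\gamma}$ comes from), and applies multiplicative Azuma only to \emph{count the number of pulls} $\sum_{t>\tau}\indicator[I_t=1]$ in the post-$\tau$ regime, where the pull probability is uniformly $\le 2\gamma/K$. You instead apply multiplicative Azuma directly to the log-decrement sequence $X_t=-\ln(\pi_{t+1,1}/\pi_{t,1})$ over all of Phase~1, using the self-normalizing identity $\E[X_t\mid\F_{t-1}]=\tilde\pi_{t,1}\cdot(-\ln(1-\eta(1-\pi_{t,1})/\tilde\pi_{t,1}))\le\eta(1+\eta K/\gamma)$ to bound the predictable mean uniformly, without ever splitting into regimes.

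Your route is cleaner: it avoids the stopping time and the associated conditioning on $\{\tau=t'\}$ for each $t'$, and it actually yields a tighter bound, with leading coefficient $(1+\varepsilon')$ rather than $2(1+\varepsilon)$ in front of $(1+\eta K/\gamma)\eta T_1$, and $\ln K$ rather than $\ln\frac{2K}{\gamma}$. The paper's route is perhaps more intuitive (it makes the transient/steady-state dichotomy you discuss explicit) and keeps the martingale increments $\{0,1\}$-valued, which matches Kuszmaul--Qi's theorem literally. One minor slip: with your $\mu'=(1+\eta K/\gamma)\eta T_1$ and $b=(\eta K/\gamma)(1+\eta K/\gamma)$ you get $\mu'/b=\gamma T_1/K$, so your concentration step gives $\varepsilon'=\sqrt{3\ln(1/\delta)/((\gamma/K)T_1)}$, not the $\varepsilon$ with $2\gamma/K$ in the denominator that you wrote; since your final bound is strictly smaller than $\mathcal{M}\ln 2$ anyway, this is harmless for implying the stated lemma.
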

We used a recently developed multiplicative form of Azuma's inequality for martingales \citep{kuszmaul2021multiplicative} to show Lemma~\ref{lm:hig-prob-lower-bound-on-pi-in-phase-1}. 
This lemma shows that when $T$ is large enough, with high probability, $\pi_{T_1+1,1}$ does not become too small, i.e., event $\mathcal{E}_1$ happens. Next, we will show that $\pi_{T_1+ T_2+1,1}$ recovers to 1/4 with high probability. To show this, observe that Phase 2.1 is the phase where $\pi_{t,1}$ can start to recover. By each pull of arm 2 in Phase 2.1, the probability $\pi_{t,1}$ increases, whereas by pulls of arm 1, $\pi_{t,1}$ does not change. Hence, we first find an upper bound on the number of pulls of arm 2 needed so that $\pi_{t,1}$ recovers to 1/4. At the beginning of Phase 2.1, after each pull of arm 2, the rate of update $\frac{\pi_{t+1,1}}{\pi_{t,1}} \approx 1+\varepsilon$ is very close to 1. Therefore, we first focus on finding an upper bound on \textit{the number of pulls of arm 2 needed for $\pi_{t,1}$ to double}.
\begin{lemma}
  Consider a round $t_0 > T_1$, where $0<\pi_{t_0,1}\leq \frac{1}{4}$. If arm 2 is pulled $m$ times in rounds $t_1, \ldots, t_m$ where $t_0 \leq t_1< t_2< \ldots < t_m \leq T$ and $m  \geq \frac{2}{\eta}\cdot \frac{1}{1-2\pi_{t_0,1}}$, then we have
  $\pi_{t_m+1, 1} \geq 2 \pi_{t_0,1}$.
  \label{lm:muplitplication-by-2-in-phase2.1}
\end{lemma}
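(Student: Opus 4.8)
The plan is to track $\pi_{t,1}$ only across the Phase~2 rounds in which arm~2 is pulled, since (as I will show first) these are the only rounds that change it. Specializing the WSU-UX update to Phase~2, where $\ell_{t,1}=0$ and $\ell_{t,2}=1$: when arm~1 is pulled we have $\hat{\ell}_{t,1}=\ell_{t,1}/\tilde{\pi}_{t,1}=0$ and $\hat{\ell}_{t,2}=0$, so $\pi_{t+1,1}=\pi_{t,1}$; when arm~2 is pulled we have $\hat{\ell}_{t,2}=1/\tilde{\pi}_{t,2}$ and $\hat{\ell}_{t,1}=0$, so
\[
\pi_{t+1,1}=\pi_{t,1}\Bigl(1+\eta\,\tfrac{\pi_{t,2}}{\tilde{\pi}_{t,2}}\Bigr).
\]
In particular $\pi_{t,1}$ is non-decreasing throughout Phase~2, and after the $m$ arm-2 pulls at $t_1<\dots<t_m$ it telescopes to $\pi_{t_m+1,1}=\pi_{t_0,1}\prod_{k=1}^m(1+a_k)$ with $a_k:=\eta\,\pi_{t_k,2}/\tilde{\pi}_{t_k,2}\ge 0$, where the intervening arm-1 pulls contribute factors of $1$ and $\pi_{t_1,1}=\pi_{t_0,1}$ because nothing moves between $t_0$ and the first arm-2 pull.

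Next I would lower-bound each multiplicative factor. The clean observation (here $K=2$) is that $\tilde{\pi}_{t,2}=(1-\gamma)\pi_{t,2}+\gamma/2\le 1$, so $\pi_{t,2}/\tilde{\pi}_{t,2}\ge \pi_{t,2}$ and hence $a_k\ge \eta\,\pi_{t_k,2}$. Combining this with the elementary inequality $\prod_k(1+a_k)\ge 1+\sum_k a_k$, valid since every $a_k\ge 0$, it suffices to establish $\sum_{k=1}^m a_k\ge 1$, which in turn follows once I can guarantee $\pi_{t_k,2}\ge 1-2\pi_{t_0,1}$ on each of the $m$ pulls.

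The one point that needs care, and the only genuine obstacle, is that $\pi_{t,2}=1-\pi_{t,1}$ shrinks as $\pi_{t,1}$ grows, so the per-pull lower bound could degrade before the doubling target is reached. I would resolve this with a short monotonicity/contradiction argument. Suppose $\pi_{t_m+1,1}<2\pi_{t_0,1}$. Since $\pi_{t,1}$ is non-decreasing in Phase~2, every $\pi_{t_k,1}\le\pi_{t_m+1,1}<2\pi_{t_0,1}$, whence $\pi_{t_k,2}>1-2\pi_{t_0,1}$ for all $k\in[m]$. Then $\sum_{k=1}^m a_k> m\,\eta\,(1-2\pi_{t_0,1})\ge 2$, using the hypothesis $m\ge \tfrac{2}{\eta}\cdot\tfrac{1}{1-2\pi_{t_0,1}}$, which is finite and positive because $\pi_{t_0,1}\le 1/4$ forces $1-2\pi_{t_0,1}\ge 1/2$. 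Consequently $\pi_{t_m+1,1}\ge\pi_{t_0,1}(1+\sum_k a_k)>3\pi_{t_0,1}>2\pi_{t_0,1}$, contradicting the assumption; therefore $\pi_{t_m+1,1}\ge 2\pi_{t_0,1}$. Everything beyond this contradiction framing is a direct computation from the update rule, so I expect no further difficulty.
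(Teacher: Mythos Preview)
Your proof is correct, but the route differs from the paper's. The paper tracks $\pi_{t,2}$ rather than $\pi_{t,1}$: it writes $\pi_{t_m+1,1}=1-\pi_{t_0,2}\prod_{s=t_0}^{t_m}\bigl(1-\eta\frac{\pi_{s,1}}{\tilde{\pi}_{s,2}}\indicator[I_s=2]\bigr)$, bounds $\tilde{\pi}_{s,2}\le 1$ and then uses the \emph{uniform} lower bound $\pi_{s,1}\ge\pi_{t_0,1}$ (which holds automatically by monotonicity) to get $\pi_{t_m+1,1}\ge 1-(1-\eta\pi_{t_0,1})^m$. From there it applies $1-x\le e^{-x}$ and the scalar inequality $e^{-2x/(1-2x)}\le 1-2x$ to conclude. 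Because the per-step bound depends only on the fixed initial value $\pi_{t_0,1}$, no contradiction framing is needed.

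Your approach instead tracks the increasing sequence $\pi_{t,1}$, and since your per-step factor $1+\eta\pi_{t_k,2}$ depends on the \emph{current} $\pi_{t_k,2}$ (which shrinks), you need the contradiction wrapper to pin $\pi_{t_k,2}>1-2\pi_{t_0,1}$. What you gain is a more elementary toolkit: $\prod(1+a_k)\ge 1+\sum a_k$ replaces the two exponential inequalities, and in fact your argument would already succeed with the weaker hypothesis $m\ge \frac{1}{\eta(1-2\pi_{t_0,1})}$, since the contradiction only needs $\sum a_k>1$. One minor imprecision: your equality $\pi_{t_m+1,1}=\pi_{t_0,1}\prod_k(1+a_k)$ assumes the $t_k$ are the only arm-2 pulls in $[t_0,t_m]$; in general this is $\ge$, but that is the direction you need, so the argument is unaffected.
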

Next, Lemma~\ref{lm:sum-of-pulls-in-phase2.1} is a cumulative version of Lemma~\ref{lm:muplitplication-by-2-in-phase2.1} where we show an upper bound on the total number of pulls of arm 2 needed so that $\pi_{t,1}$ doubles for $M-2$ times.
\begin{lemma}
  Consider a round $t > T_1$ where we have $2^{-m}\leq \pi_{t,1} \leq 2^{-(m-1)}$, where $m \in \mathbb{Z}_{+}$. Then if arm 2 is pulled  $k = \frac{2}{\eta} \, m$ times in rounds $t, t+1, \ldots, T_1$, and round $t'$ denotes the round just after the $k\nth$ pull, then we have 
  $\pi_{t',1}\geq \frac{1}{4}$.
  \label{lm:sum-of-pulls-in-phase2.1}
\end{lemma}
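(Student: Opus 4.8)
The plan is to derive Lemma~\ref{lm:sum-of-pulls-in-phase2.1} from the single-step guarantee of Lemma~\ref{lm:muplitplication-by-2-in-phase2.1} by chaining doublings and controlling the total pull count with a convergent series. The starting point is that, in Phase~2, every pull of arm~2 strictly increases $\pi_{\cdot,1}$: the update multiplies $\pi_{t,1}$ by $1+\eta\,\frac{\pi_{t,2}}{\tilde{\pi}_{t,2}}$, and as long as $\pi_{t,1}\le\tfrac12$ we have $\pi_{t,2}=1-\pi_{t,1}\ge\tfrac12\ge\tfrac1K$, so $\tilde{\pi}_{t,2}=(1-\gamma)\pi_{t,2}+\tfrac{\gamma}{K}\le\pi_{t,2}$ and this factor is at least $1+\eta$. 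Hence $\pi_{\cdot,1}$ climbs monotonically through the dyadic levels $[2^{-j},2^{-(j-1)})$, and since we only need to reach $\tfrac14=2^{-2}$ the condition $\pi_{\cdot,1}\le\tfrac12$ holds throughout. If $m\le 2$ then $\pi_{t,1}\ge\tfrac14$ already and there is nothing to prove, so assume $m\ge 3$; then moving from level $m$ to level $2$ amounts to $m-2$ doublings, and it suffices to fit all of them into the budget $k=\tfrac{2}{\eta}m$.

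For the count, I would apply Lemma~\ref{lm:muplitplication-by-2-in-phase2.1} to the doubling that leaves level $j$, for $j=3,\dots,m$. Its hypothesis $\pi_{t_0,1}\le\tfrac14$ holds since $\pi_{t_0,1}<2^{-(j-1)}\le 2^{-2}$, and it guarantees that at most $\tfrac{2}{\eta}\cdot\tfrac{1}{1-2\pi_{t_0,1}}$ further pulls raise $\pi_{\cdot,1}$ above $2^{-(j-1)}$, i.e.\ out of level $j$ (overshooting a level only deletes doublings, so it can only help). Bounding the starting value by the top of the level, $\pi_{t_0,1}<2^{-(j-1)}$, and writing $\ell:=j-2$,
\begin{align*}
  \sum_{j=3}^{m}\frac{2}{\eta}\cdot\frac{1}{1-2\cdot 2^{-(j-1)}}
   = \frac{2}{\eta}\sum_{\ell=1}^{m-2}\frac{1}{1-2^{-\ell}}
   = \frac{2}{\eta}\left[(m-2)+\sum_{\ell=1}^{m-2}\frac{1}{2^{\ell}-1}\right].
\end{align*}
The crux is then the elementary bound $\sum_{\ell\ge1}\tfrac{1}{2^{\ell}-1}<2$, which yields a total of strictly fewer than $\tfrac{2}{\eta}m=k$ pulls.

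The main obstacle is the bookkeeping of the constant in front of $\tfrac1\eta$. Each doubling costs the baseline $\tfrac{2}{\eta}$ plus an excess $\tfrac{2}{\eta}\big(\tfrac{1}{1-2^{-(j-2)}}-1\big)$ that grows to $\tfrac{2}{\eta}$ as $\pi\uparrow\tfrac14$; a crude worst-case bound (replacing every factor by $\tfrac{1}{1-2^{-1}}=2$) would charge $\tfrac{4}{\eta}$ per doubling and overshoot the budget. What makes the argument work is that this excess decays geometrically in $j$, so its total is less than one extra doubling's worth and is absorbed by the gap between the $m-2$ doublings performed and the $m$ paid for. One subtlety I would watch is that Lemma~\ref{lm:muplitplication-by-2-in-phase2.1} gives an integer (ceiling) pull count per level, which could in principle erode this slack once $m$ is large; the clean way to remove any such concern is to argue directly from the per-pull factor, namely that if $\pi_{\cdot,1}$ stayed below $\tfrac14$ then after $k=\tfrac{2}{\eta}m$ pulls $\pi_{\cdot,1}\ge (1+\eta)^{2m/\eta}\,2^{-m}\ge 2^{m-2}\cdot 2^{-m}=\tfrac14$, a contradiction, using $\ln(1+\eta)\ge\tfrac{2\eta}{3}$ for $\eta\le\tfrac12$.
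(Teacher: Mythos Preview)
Your primary argument — chaining Lemma~\ref{lm:muplitplication-by-2-in-phase2.1} across the dyadic levels $j=m,\dots,3$, bounding each step by $\frac{2}{\eta}\cdot\frac{1}{1-2^{-(j-2)}}$, and then summing $\sum_{\ell=1}^{m-2}\frac{1}{1-2^{-\ell}}=(m-2)+\sum_{\ell=1}^{m-2}\frac{1}{2^{\ell}-1}<m$ — is exactly the paper's proof, with the same reindexed sum and the same geometric-series bound $\frac{1}{2^{\ell}-1}\le\frac{1}{2^{\ell-1}}$.

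Your closing alternative, however, is a genuinely different and shorter route. The paper never uses the observation that in Phase~2 each pull of arm~2 multiplies $\pi_{\cdot,1}$ by at least $1+\eta$ whenever $\pi_{\cdot,1}\le\tfrac12$; it only invokes the doubling lemma level by level. Your contradiction argument $(1+\eta)^{2m/\eta}\,2^{-m}\ge\tfrac14$ (via $\ln(1+\eta)\ge\tfrac{2\eta}{3}$ on $(0,\tfrac12]$) bypasses Lemma~\ref{lm:muplitplication-by-2-in-phase2.1} entirely, removes the per-level bookkeeping, and also dissolves the ceiling concern you correctly flag (which the paper silently ignores). The paper's decomposition buys modularity — the doubling rate is isolated in a standalone lemma that is reused in the narrative of the proof sketch — whereas your direct bound buys a one-line proof with no accumulated rounding.
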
 
Lemma~\ref{lm:sum-of-pulls-in-phase2.1} indicates that given event $\mathcal{E}_1$ happened, $k = \frac{2}{\eta} \, M$ pulls of arm 2 suffice to ensure that $\pi_{t,1} \geq 1/4$. 
The next lemma shows that given event $\mathcal{E}_1$ happened, then with high probability, arm 2 is going to be picked in $T_2$ rounds at least $k$ times. This along with Lemma~\ref{lm:sum-of-pulls-in-phase2.1} implies that $\pi_{T_1+T_2+1,1}$ recovers.
\begin{lemma}
  When $T$ is large enough, with probability at least $1-2 \frac{1}{T^2}$, event $\mathcal{E}_2$ happens, i.e.,  
  $\pi_{T_1+T_2+1,1} \geq 1/4$.
  \label{lm:high-prob-for-phase2.1}
\end{lemma}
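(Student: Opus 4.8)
The plan is to combine the high-probability lower bound on $\pi_{T_1+1,1}$ from Lemma~\ref{lm:hig-prob-lower-bound-on-pi-in-phase-1} with the deterministic recovery guarantee of Lemma~\ref{lm:sum-of-pulls-in-phase2.1}, supplying the missing probabilistic ingredient: that within the $T_2$ rounds of Phase~2.1 arm~2 is pulled at least $k = \frac{2}{\eta}M$ times. The starting observation is that in Phase~2 the best arm's loss is $0$, so the WSU-UX update gives $\pi_{t+1,1} = \pi_{t,1}$ when arm~1 is pulled and $\pi_{t+1,1} \geq \pi_{t,1}$ when arm~2 is pulled; hence $\pi_{t,1}$ is non-decreasing throughout Phase~2. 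Consequently it suffices to show that $\pi_{t,1}$ crosses $1/4$ at \emph{some} round $t \leq T_1+T_2$, since it then stays above $1/4$ up to round $T_1+T_2+1$, which is exactly event $\mathcal{E}_2$ (Definition~\ref{df:event-h-definition}).

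First I would condition on $\mathcal{E}_1$, which by Lemma~\ref{lm:hig-prob-lower-bound-on-pi-in-phase-1} (with $\delta = 1/T^2$) holds with probability at least $1 - 1/T^2$ and guarantees $\pi_{T_1+1,1} \geq 2^{-M}$. Define the stopping time $\tau = \min\{t \geq T_1+1 : \pi_{t,1} \geq 1/4\}$. For every round $t < \tau$ in Phase~2.1 we have $\pi_{t,2} > 3/4$, so for $K=2$ the sampling probability of arm~2 obeys $\tilde{\pi}_{t,2} = (1-\gamma)\pi_{t,2} + \gamma/2 \geq \frac{3-\gamma}{4}$. This uniform lower bound on the arm-2 pull probability, valid as long as $\pi_{t,1}$ has not yet recovered, is the engine of the argument: it drives a steady supply of arm-2 pulls, each of which increases $\pi_{t,1}$.

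Next I would convert this into a concentration statement using the multiplicative form of Azuma's inequality \citep{kuszmaul2021multiplicative}, exactly as in Lemma~\ref{lm:hig-prob-lower-bound-on-pi-in-phase-1}. Let $N$ denote the number of arm-2 pulls in the $T_2$ rounds of Phase~2.1. The conditional pull probabilities $\tilde{\pi}_{t,2}$ are predictable, and on the event $\{\tau > T_1+T_2\}$ they are all at least $\frac{3-\gamma}{4}$, so the random sum of conditional means is at least $\frac{3-\gamma}{4}T_2$. The lower-tail multiplicative Azuma bound, with the choice $\varepsilon_2 = \sqrt{4\ln T/(\tfrac{3-\gamma}{4}T_2)}$, then yields $\Pr[\,N \leq (1-\varepsilon_2)\tfrac{3-\gamma}{4}T_2 \text{ and } \tau > T_1+T_2\,] \leq \exp(-\varepsilon_2^2 \tfrac{3-\gamma}{4}T_2/2) = T^{-2}$. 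The definition of $T'$ in \eqref{eq:some-eq13} is tuned precisely so that $(1-\varepsilon_2)\frac{3-\gamma}{4}T' = \frac{2}{\eta}M = k$, and since $T' \leq T_2$ by \eqref{eq:some-eq14} we get $(1-\varepsilon_2)\frac{3-\gamma}{4}T_2 \geq k$. Thus, off an event of probability $T^{-2}$, either $\tau \leq T_1+T_2$ or $N \geq k$.

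The contradiction then closes the argument: on $\mathcal{E}_1$ we have $\pi_{T_1+1,1} \geq 2^{-M}$, so Lemma~\ref{lm:sum-of-pulls-in-phase2.1} (applied with the relevant $m \leq M$) guarantees that once arm~2 has been pulled $\frac{2}{\eta}M \geq k$ times in Phase~2.1, $\pi_{t,1}$ has already reached $1/4$, i.e.\ $\tau \leq T_1+T_2$; hence $N \geq k$ also forces $\tau \leq T_1+T_2$. Either branch gives $\tau \leq T_1+T_2$ except on the probability-$T^{-2}$ failure event, and a union bound over the failure of $\mathcal{E}_1$ and this concentration failure yields $\Pr[\tau \leq T_1+T_2] \geq 1 - 2/T^2$; by the monotonicity of $\pi_{t,1}$ in Phase~2 this is exactly $\Pr[\mathcal{E}_2] \geq 1 - 2/T^2$. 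The main obstacle I anticipate is the stopping-time bookkeeping in the concentration step: the arm-2 pull probabilities are adaptive and the bound $\frac{3-\gamma}{4}$ only holds before $\tau$, so one must apply multiplicative Azuma to the stopped process (or couple with a fixed-probability Bernoulli sequence) to avoid circularity between \textit{enough pulls occur} and \textit{$\pi_{t,1}$ has not yet recovered}.
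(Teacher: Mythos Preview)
Your proposal is correct and follows essentially the same route as the paper: condition on $\mathcal{E}_1$, reduce via Lemma~\ref{lm:sum-of-pulls-in-phase2.1} to a pull-count requirement of $\frac{2}{\eta}M$ arm-2 pulls, and certify this count over the $T_2$ rounds of Phase~2.1 by the lower-tail multiplicative Azuma inequality with $\varepsilon_2$ as in Definition~\ref{def:def-h-prime}. The stopping-time obstacle you correctly flag is resolved in the paper not by stopping or coupling but by tracking $X_t := \mathbb{1}\bigl[I_t = 2 \text{ or } \pi_{t,1} \geq \tfrac14\bigr]$ instead of raw arm-2 pulls; this indicator has conditional mean at least $\tfrac{3-\gamma}{4}$ \emph{unconditionally} (either $\pi_{t,1}\geq\tfrac14$ so $X_t=1$, or $\pi_{t,2}\geq\tfrac34$ so $\tilde\pi_{t,2}\geq\tfrac{3-\gamma}{4}$), and $\sum_{t\in\mathcal{T}_2} X_t \geq \tfrac{2}{\eta}M$ already implies $\mathcal{E}_2$, so the martingale argument runs with no circularity.
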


We next show that given $\mathcal{E}_2$, the expectation of $\pi_{t,2}$ goes to 0 exponentially quickly as $t$ increments beyond $T_1+T_2$.
\begin{lemma}
  Assume event $\mathcal{E}_2$ happens. Define $t_0=T_1 + T_2+1$, and $1 \leq \tau \leq T_3+T_4$, and time step $t = t_0+ \tau$. Then we have
  \begin{align*}
    \E[\pi_{t,2}|\mathcal{E}_2] \leq \frac{3}{4} \exp{(-\frac{\eta}{4} \tau)}.
  \end{align*}
  \label{lm:pi-2-at-the-end-expectation}
\end{lemma}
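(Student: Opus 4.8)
The plan is to combine a \emph{pathwise} monotonicity property of $\pi_{t,2}$ throughout Phase~2 with the in-expectation recursion of Lemma~\ref{lm:recursive-expectation-lemma}: monotonicity keeps $\pi_{t,2}$ below $3/4$ once event $\mathcal{E}_2$ has occurred, and the recursion then contracts by a factor $(1-\eta/4)$ at each step. Note first that the whole range $1\le\tau\le T_3+T_4$ keeps $t=t_0+\tau$ inside Phase~2 (indeed $T_1+T_2+T_3+T_4=T$), where $\ell_{t,1}=0$ and $\ell_{t,2}=1$, so $C_{t,2}=\eta(\ell_{t,2}-\ell_{t,1})=\eta$ for every $t$ we touch.

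First I would compute the one-step drift. By Lemma~\ref{lm:recursive-expectation-lemma} with $C_{t,2}=\eta$,
\[
\E[\pi_{t+1,2}\mid\mathcal{F}_{t-1}] = (1-\eta)\pi_{t,2}+\eta\pi_{t,2}^2 = \pi_{t,2}\bigl(1-\eta(1-\pi_{t,2})\bigr),
\]
so whenever $\pi_{t,2}\le 3/4$, i.e.\ $1-\pi_{t,2}\ge 1/4$, this is at most $(1-\eta/4)\,\pi_{t,2}$.

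Next I would establish the monotonicity that guarantees $\pi_{t,2}\le 3/4$ for all $t\ge t_0$. Inspecting the WSU-UX update in Phase~2: if arm $1$ is pulled then $\hat\ell_{t,1}=\hat\ell_{t,2}=0$ and $\pi_{t+1,2}=\pi_{t,2}$; if arm $2$ is pulled then $\hat\ell_{t,1}=0$ and $\hat\ell_{t,2}=1/\tilde\pi_{t,2}\ge 0$, giving $\pi_{t+1,2}=\pi_{t,2}\bigl(1-\eta\hat\ell_{t,2}(1-\pi_{t,2})\bigr)\le\pi_{t,2}$, the multiplicative factor lying in $[0,1]$ because the update keeps the probabilities valid (Lemma~\ref{lm:range-of-relative-estimated-loss}). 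Hence $\pi_{t,2}$ is pathwise non-increasing over Phase~2. Since $\mathcal{E}_2=\{\pi_{t_0,1}\ge 1/4\}$ forces $\pi_{t_0,2}\le 3/4$, monotonicity propagates $\pi_{t,2}\le 3/4$ to every $t\ge t_0$ on $\mathcal{E}_2$.

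Finally I would iterate. The event $\mathcal{E}_2$ is $\mathcal{F}_{t_0-1}$-measurable, hence lies in $\mathcal{F}_{t-1}$ for every $t\ge t_0$; multiplying the drift bound by $\indicator[\mathcal{E}_2]$, taking expectations, and dividing by $\prob(\mathcal{E}_2)$ yields $\E[\pi_{t+1,2}\mid\mathcal{E}_2]\le(1-\eta/4)\,\E[\pi_{t,2}\mid\mathcal{E}_2]$ for all $t\ge t_0$. Unrolling from $t_0$ to $t_0+\tau$ and using $\E[\pi_{t_0,2}\mid\mathcal{E}_2]\le 3/4$ together with $1-x\le e^{-x}$ gives
\[
\E[\pi_{t_0+\tau,2}\mid\mathcal{E}_2]\le \tfrac{3}{4}(1-\eta/4)^{\tau}\le\tfrac{3}{4}\exp\bigl(-\tfrac{\eta}{4}\tau\bigr),
\]
which is the claim. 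I expect the only real subtlety to be the clean interplay between the pathwise statement (monotonicity, which maintains $\pi_{t,2}\le 3/4$) and the in-expectation statement (the drift recursion, which supplies the contraction factor); in particular, justifying that $\mathcal{E}_2$ may be pulled inside the conditional expectation via its $\mathcal{F}_{t-1}$-measurability is the one point that needs care.
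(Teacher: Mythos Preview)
Your proposal is correct and follows essentially the same approach as the paper's proof: both use the pathwise monotonicity of $\pi_{t,2}$ in Phase~2 to maintain $\pi_{t,2}\le 3/4$ on $\mathcal{E}_2$, feed this into the recursion of Lemma~\ref{lm:recursive-expectation-lemma} to get a one-step contraction by $(1-\eta/4)$, pass to the conditional expectation given $\mathcal{E}_2$ via its $\mathcal{F}_{t-1}$-measurability, and then iterate. Your treatment is in fact slightly more explicit than the paper's on the monotonicity step and on the measurability justification for conditioning on $\mathcal{E}_2$.
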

Now, conditional on event $\mathcal{E}_2$, by using Chebyshev's inequality and Lemma~\ref{lm:pi-2-at-the-end-expectation}, we show that it is very unlikely that $\pi_{T,1}$ is constantly smaller than 1. 
\begin{lemma}
  Conditional on event $\mathcal{E}_2$ defined in Definition~\ref{df:event-h-definition}, we have
  \begin{align*}
    \Pr\left(\pi_{T+1,1} \leq \frac{3}{4} \,\middle|\, \mathcal{E}_2\right) \leq \frac{75}{4} \, e^{-c \eta T}.
  \end{align*}
  \label{lm:prob-of-lowe-pi}
\end{lemma}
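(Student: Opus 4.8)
The plan is to translate the statement about $\pi_{T+1,1}$ into one about $\pi_{T+1,2}$ and then bound the tail using Lemma~\ref{lm:pi-2-at-the-end-expectation} together with Chebyshev's inequality, exactly as hinted in the text preceding the lemma. Since $K=2$, we have $\pi_{T+1,1} + \pi_{T+1,2} = 1$, so the event $\{\pi_{T+1,1} \leq 3/4\}$ is precisely the event $\{\pi_{T+1,2} \geq 1/4\}$. It therefore suffices to show $\Pr(\pi_{T+1,2} \geq \tfrac14 \mid \mathcal{E}_2) \leq \tfrac{75}{4}e^{-c\eta T}$, and everything will be carried out under the conditional law $\Pr(\cdot \mid \mathcal{E}_2)$.

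First I would invoke Lemma~\ref{lm:pi-2-at-the-end-expectation} at the final round. Taking $t = T+1$ corresponds to $\tau = (T+1) - (T_1+T_2+1) = T - T_1 - T_2 = T_3 + T_4 = \tfrac{79}{100}T$, which sits exactly at the upper end of the admissible range $1 \le \tau \le T_3+T_4$. Hence $\E[\pi_{T+1,2}\mid\mathcal{E}_2] \leq \tfrac34 \exp(-\tfrac{\eta}{4}\cdot\tfrac{79}{100}T)$. In the non-trivial case $\eta \geq T^{-2/3}$, so $\eta T \geq T^{1/3}\to\infty$ and this conditional expectation decays to $0$; in particular, for $T$ large enough we may assume $\E[\pi_{T+1,2}\mid\mathcal{E}_2]\leq \tfrac{1}{20}$, which gives $\tfrac14 - \E[\pi_{T+1,2}\mid\mathcal{E}_2]\geq \tfrac15$.

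Next I would write the target event as a deviation from the conditional mean and apply Chebyshev's inequality:
\begin{align*}
  \Pr\left(\pi_{T+1,2}\geq \tfrac14 \,\middle|\, \mathcal{E}_2\right)
  \leq \frac{\mathrm{Var}(\pi_{T+1,2}\mid\mathcal{E}_2)}{\left(\tfrac14 - \E[\pi_{T+1,2}\mid\mathcal{E}_2]\right)^2}
  \leq 25\,\mathrm{Var}(\pi_{T+1,2}\mid\mathcal{E}_2).
\end{align*}
Because $\pi_{T+1,2}\in[0,1]$ we have $\pi_{T+1,2}^2\leq \pi_{T+1,2}$, so the conditional variance is controlled by the conditional mean: $\mathrm{Var}(\pi_{T+1,2}\mid\mathcal{E}_2)\leq \E[\pi_{T+1,2}^2\mid\mathcal{E}_2]\leq \E[\pi_{T+1,2}\mid\mathcal{E}_2]$. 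Substituting the bound from Lemma~\ref{lm:pi-2-at-the-end-expectation} then yields
\begin{align*}
  \Pr\left(\pi_{T+1,1}\leq \tfrac34 \,\middle|\, \mathcal{E}_2\right)
  \leq 25\cdot \frac34 \exp\!\left(-\frac{\eta}{4}\cdot\frac{79}{100}T\right)
  = \frac{75}{4}\,e^{-c\eta T},
\end{align*}
with $c = \tfrac{79}{400}$, as claimed.

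Essentially all of the technical content is carried by Lemma~\ref{lm:pi-2-at-the-end-expectation}, so the only real obstacle in this lemma is bookkeeping: one must fix the ``large enough $T$'' threshold so that the exponential decay has pushed $\E[\pi_{T+1,2}\mid\mathcal{E}_2]$ below $\tfrac{1}{20}$, keeping the denominator $(\tfrac14 - \E[\pi_{T+1,2}\mid\mathcal{E}_2])^2$ bounded away from $0$ — this is exactly where $\eta T \to \infty$ in the non-trivial regime is used. The choice of Chebyshev (via $\mathrm{Var}\leq\E[\pi_{T+1,2}^2]\leq\E[\pi_{T+1,2}]$) is what produces the constant $\tfrac{75}{4}$; a plain Markov bound $\Pr(\pi_{T+1,2}\geq\tfrac14\mid\mathcal{E}_2)\leq 4\,\E[\pi_{T+1,2}\mid\mathcal{E}_2]$ would also suffice with an even smaller constant, but the Chebyshev route matches the stated form of the bound.
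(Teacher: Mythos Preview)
Your proposal is correct and follows essentially the same approach as the paper: both use Lemma~\ref{lm:pi-2-at-the-end-expectation} at $\tau = T_3+T_4$ to bound the conditional mean, then Chebyshev together with the $[0,1]$-valuedness to bound the variance by the mean. The only cosmetic difference is that the paper works with $X=\pi_{T+1,1}$ and bounds $\mathrm{Var}(X\mid\mathcal{E}_2)\le \E[X\mid\mathcal{E}_2](1-\E[X\mid\mathcal{E}_2])\le 1-\E[X\mid\mathcal{E}_2]$, which is exactly your $\E[\pi_{T+1,2}\mid\mathcal{E}_2]$.
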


Using conditional expectation and Lemma~\ref{lm:prob-of-lowe-pi}, we prove Claim~\ref{claim:small-lawn-pi-on-expectation} in the next subsection.

Next, we introduce the following lemma.
\begin{lemma}
  For large enough $T$, we have  $\E\left[\pi_{t,2}\right]\leq\frac{1}{4}$ for $T_1+T_2+T_3 < t \leq T$.
  \label{lm:dominant-stage}
\end{lemma}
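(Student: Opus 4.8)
The plan is to derive the bound by conditioning on the recovery event $\mathcal{E}_2 = \{\pi_{T_1+T_2+1,1} \geq 1/4\}$ and combining the two prior lemmas that describe Phase~2.3. First I would write, for any fixed $t$ with $T_1+T_2+T_3 < t \leq T$,
\[
  \E[\pi_{t,2}] = \E[\pi_{t,2}\mid\mathcal{E}_2]\,\Pr(\mathcal{E}_2) + \E[\pi_{t,2}\mid\mathcal{E}_2^c]\,\Pr(\mathcal{E}_2^c),
\]
and then bound the two terms separately, the first using the exponential decay of $\pi_{t,2}$ under $\mathcal{E}_2$ and the second using the fact that $\mathcal{E}_2^c$ is a very low-probability event.

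For the first term, set $t_0 = T_1+T_2+1$ and $\tau = t - t_0$, so that $\tau \geq T_3 = T/10$ throughout Phase~2.3 while still satisfying $\tau \leq T_3 + T_4$, which is exactly the range in which Lemma~\ref{lm:pi-2-at-the-end-expectation} applies. That lemma gives $\E[\pi_{t,2}\mid\mathcal{E}_2] \leq \tfrac{3}{4}\exp(-\tfrac{\eta}{4}\tau)$. Since we are in the non-trivial case with $\eta \geq T^{-2/3}$, the exponent obeys $\tfrac{\eta}{4}\tau \geq \tfrac{1}{4}T^{-2/3}\cdot\tfrac{T}{10} = \tfrac{1}{40}T^{1/3}$, so using $\Pr(\mathcal{E}_2) \leq 1$ the first term is at most $\tfrac{3}{4}\exp(-\tfrac{1}{40}T^{1/3})$, which vanishes exponentially fast in $T^{1/3}$.

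For the second term, I would use the trivial bound $\pi_{t,2} \leq 1$, giving $\E[\pi_{t,2}\mid\mathcal{E}_2^c] \leq 1$, together with Lemma~\ref{lm:high-prob-for-phase2.1}, which gives $\Pr(\mathcal{E}_2^c) \leq 2/T^2$. Hence the second term is at most $2/T^2$. Combining the two estimates yields
\[
  \E[\pi_{t,2}] \leq \tfrac{3}{4}\exp\!\left(-\tfrac{1}{40}T^{1/3}\right) + \tfrac{2}{T^2},
\]
and since the right-hand side tends to $0$ as $T \to \infty$ and does not depend on the particular $t$ in Phase~2.3, it falls below $1/4$ for all sufficiently large $T$, establishing the claim uniformly over $T_1+T_2+T_3 < t \leq T$.

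I do not expect a genuine obstacle here, as the statement is essentially a corollary of Lemmas~\ref{lm:pi-2-at-the-end-expectation}~and~\ref{lm:high-prob-for-phase2.1}. The only point requiring care is making the exponential term genuinely small: this is precisely where both the restriction to Phase~2.3 (so that $\tau$ is a constant fraction of $T$) and the non-trivial-case assumption $\eta \geq T^{-2/3}$ are used, together guaranteeing that $\eta\tau$ grows like $T^{1/3}$ rather than remaining bounded.
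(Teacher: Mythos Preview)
Your proposal is correct and follows essentially the same approach as the paper: condition on $\mathcal{E}_2$, bound the $\mathcal{E}_2$ term via Lemma~\ref{lm:pi-2-at-the-end-expectation} and the $\mathcal{E}_2^c$ term via the trivial bound together with Lemma~\ref{lm:high-prob-for-phase2.1}, then observe that both contributions vanish for large $T$. If anything, you are slightly more explicit than the paper in two respects: you treat a general $t$ in the range (the paper fixes $t=T_1+T_2+T_3+1$ and leaves uniformity implicit via monotonicity of the exponential bound in $\tau$), and you spell out the use of $\eta\geq T^{-2/3}$ to make $\eta\tau\gtrsim T^{1/3}$ explicit.
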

We prove  Claim~\ref{claim:pi-tilda-expectation-to-pi-expectation-conversion-lemma} using Lemma~\ref{lm:dominant-stage} in Appendix~\ref{apx:complete-proof-of-claim-1-3}.

\subsection{Complete proof of Claims~\ref{claim:small-lawn-pi-on-expectation}~and~\ref{claim:pi-tilda-expectation-to-pi-expectation-conversion-lemma}}
\label{apx:complete-proof-of-claim-1-3}

\subsubsection{High probability lemma for event $\mathcal{E}_1$}
\label{apx:high-probability-event-1}

\begin{proof}[Proof of Lemma~\ref{lm:hig-prob-lower-bound-on-pi-in-phase-1}]
  We define $\tau$ to be the largest $t$ in $1 \leq t \leq T_1$ such that $\pi_{t,1} > \frac{\gamma}{K}$. Since $\pi_{t,1}$ are random variables that depend on internal random bits of the algorithm, $\tau$ is also a random variable. 
  Observe that for the loss sequence $\{\ell_t\}_{t=1}^T$ we are considering, for all $1 \leq t\leq T_1+1$ we have $\pi_{t,1} \geq \pi_{t+1,1}$. This implies that 
  \begin{align}
    \pi_{t,1} &> \frac{\gamma}{K} &\forall t: 1 &\leq t \leq \tau \label{eq:some-eq17}\\
    \pi_{t,1} &\leq \frac{\gamma}{K} &\forall t: \tau +1 &\leq t \leq T_1 . \label{eq:some-eq35}
  \end{align}	
  Using $\tau$, we can express $\pi_{T_1+1, 1}$ as follows\footnote{For convention, the product over an empty set is assumed to be 1. e.g. if $\tau=1$, then the first term in the right-hand side of~\eqref{eq:some-eq18} which is a product over the empty set is assumed to be 1. Similarly, if $\tau=T_1$, the third term in the right-hand side of~\eqref{eq:some-eq18} is assumed to be 1.}:
  \begin{align}
    \pi_{T_1+1, 1} &= \pi_{1,1} \prod_{s=1}^{T_1}\frac{\pi_{s+1,1}}{\pi_{s,1}} \nonumber\\
                   &= \underbrace{\left(\pi_{1,1} \prod_{s=1}^{\tau-1}\frac{\pi_{s+1,1}}{\pi_{s,1}} \right)}_{\text{first term}}  \underbrace{\left(\frac{\pi_{\tau+1,1}}{\pi_{\tau,1}}\right)}_{\text{second term}}  \underbrace{\left(\prod_{s=\tau+1}^{T_1}\frac{\pi_{s+1,1}}{\pi_{s,1}}\right)}_{\text{third term}} .
                     \label{eq:some-eq18}
  \end{align}
  Clearly, the first term in \eqref{eq:some-eq18} can be lower bounded as
  \begin{align}
    \pi_{1,1} \prod_{s=1}^{\tau-1}\frac{\pi_{s+1,1}}{\pi_{s,1}} = \pi_{\tau-1, 1}  >  \frac{\gamma}{K},
    \label{eq:some-eq19}
  \end{align}
  where we used \eqref{eq:some-eq17}. 
  Next, observe that for rounds $1\leq t\leq T_1$, in phase one, we can simply lower bound $\frac{\pi_{s+1,1}}{\pi_{s,1}}$ as follows:
  \begin{align}
    \frac{\pi_{s+1,1}}{\pi_{s,1}} &= 1 - \eta \Big(\hat{\ell}_{s,1} - \sum_{j \in [2]}\pi_{s,j}\hat{\ell}_{s,j}\Big) \nonumber\\
                                  &=1 - \eta \frac{1-\pi_{s,1}}{(1-\gamma)\pi_{s,1} + \frac{\gamma}{K}} \indicator[I_s = 1]\nonumber\\
                                  &\geq 1 - \eta \frac{1-0}{0 + \frac{\gamma}{K}} \indicator[I_s = 1] & (\pi_{s,1}\geq 0)\nonumber\\
                                  &=1 -  \frac{\eta K}{\gamma}\indicator[I_s = 1]\nonumber\\
                                  &=(1 -  \frac{\eta K}{\gamma})^{\indicator[I_s=1]}.
                                    \label{eq:eq-telescope-update}
  \end{align}
  Therefore, the second term can be lower bounded by 
  \begin{align}
    \underbrace{\frac{\pi_{\tau+1,1}}{\pi_{\tau,1}}}_{\text{second term}} \geq (1 -  \frac{\eta K}{\gamma})^{\indicator[I_{\tau}=1]} \geq 1/2,
    \label{eq:some-eq25}
  \end{align}
  since $\frac{\eta K}{\gamma}\leq 1/2$ in WSU-UX. Moreover, we have
  \begin{align}
    \underbrace{\prod_{s=\tau+1}^{T_1}\frac{\pi_{s+1,1}}{\pi_{s,1}}}_{\text{third term}} &\geq 
                                                                                           \prod_{s=\tau+1}^{T_1}(1 -  \frac{\eta K}{\gamma})^{\indicator[I_s=1]} & (\text{by~\ref{eq:eq-telescope-update}} ) \nonumber\\
                                                                                         &= (1 -  \frac{\eta K}{\gamma})^{\sum_{s=\tau+1}^{T_1}\indicator[I_s=1]} .
                                                                                           \label{eq:some-eq20}
  \end{align}  
  Therefore, by plugging \eqref{eq:some-eq19}, \eqref{eq:some-eq25}, and  \eqref{eq:some-eq20} into the right hand side of \eqref{eq:some-eq18} we obtain
  \begin{align}
    \pi_{T_1+1,1 }\geq \frac{\gamma}{2K} (1 -  \frac{\eta K}{\gamma})^{\sum_{s=\tau+1}^{T_1}\indicator[I_s=1]},
    \label{eq:some-eq26}
  \end{align}
  where the right-hand side is a random variable that depends on $\tau$. 
  Note that $\indicator\left[\tau = t' | \mathcal{F}_{t'}\right]$ is measurable, meaning that in round $t'$, given access to the past history, we can deterministically tell whether $\tau = t'$ or not.
  Next, we will show that for any possible value $t'$ that $\tau$ can take, and particular history $\F_{t'}$ up until the end of round $t'$ for which $\tau=t'$,\footnote{Observe that this $t'$ is well defined since $\F_{t'-1}$, the history up until round $t'-1$, is enough to determine whether $\tau=t'$ or $\tau \neq t'$.} 
  there is a suitable upper bound on the following term
  \begin{align}
    \sum_{s=t'+1}^{T_1}\indicator[I_s=1 \mid \F_{t'}] 
    \label{eq:some-eq24}
  \end{align}
  that holds with probability at least $1-\delta$.  This implies a uniform high probability lower bound on \eqref{eq:some-eq26}, which completes the proof.

  It remains to show this uniform upper bound for \eqref{eq:some-eq24}.
  In order to show this, we set up a martingale. In particular, we define $I_{t,i} := \indicator[I_t=i]$. Consider any fixed history up until round $t'$ denoted by $\F_{t'}$ such that $\tau=t'$. Then for any $t$ where $t'+1\leq t\leq T_1$, we have
  \begin{align}
    \E \left[ I_{t,1} \,\middle|\, \F_{t-1} \right] &= \E \left[(1-\gamma)\pi_{t,1} + \frac{\gamma}{K} \,\middle|\, \F_{t-1}\right] & (\text{Uniform Exploration by WSU-UX})\nonumber\\
                                                    &\leq \E \left[\pi_{t,1} + \frac{\gamma}{K} \,\middle|\, \F_{t-1}\right]\nonumber\\
                                                    &\leq \frac{2\gamma}{K} =:q, & (\text{from}~\ref{eq:some-eq35})
                                                                                   \label{eq:some-eq23}
  \end{align}
  where the last inequality comes from \eqref{eq:some-eq17}. Define $Z_{t'}:=0$ and for any $t$ where $t' + 1 \leq t\leq T_1$, let $W_t := I_{t,1} - \E [ I_{t,1} \mid \F_{t-1} ]$ and $Z_t := \sum_{s=t'+1}^t W_s$. Observe that for $t$ in   $t' +1 \leq t \leq T_1$, we have 
  \begin{align*}
    \E[Z_t \mid \F_{t-1}] = Z_{t-1} + \E[W_t \mid \F_{t-1}] = Z_{t-1} ;
  \end{align*} 
  therefore $(Z_t)_{t \in \{ t', \ldots, T_1\}}$ is a martingale. Moreover, for $t$ in  $t' +1 \leq t \leq T_1$,  we have $Z_t - Z_{t-1} = W_t \in [-A_t, B_t]$ for $A_t = \E \left[ I_{t,1}\mid \F_{t-1} \right]$ and $B_t = 1 - A_t$, simply because $I_{t,1} \in \{0, 1\}$. Observe that we have
  \begin{align*}
    \sum_{t=t'+1}^{T_1} A_t = \sum_{t=t'+1}^{T_1} \E \left[ I_{t,1} \,\middle|\, \F_{t-1} \right] \leq \sum_{t=t'+1}^{T_1} q \leq \sum_{t=1}^{T_1} q = q T_1 = \frac{2\gamma}{K}T_1=: \mu ,
  \end{align*}
  where the first inequality comes from \eqref{eq:some-eq23}. We define $c:= A_t + B_t = 1$ and apply Theorem~10 of~\cite{kuszmaul2021multiplicative} to get that for all $\varepsilon > 0$, 
  \begin{align*}
    \Pr\left(Z_{T_1} - Z_{t'} \geq \varepsilon \mu \,\middle|\, \F_{t'} \right) \leq \exp \left( -\frac{\varepsilon^2 \mu}{(2 + \varepsilon) c} \right) .
  \end{align*}
  Using our definition of the martingale sequence, 
  and noting that $c=1$, we get
  \begin{align*}
    \Pr \left( \sum_{t=t'+1}^{T_1} I_{t,1} \geq \sum_{t=t'+1}^{T_1} \E \left[ I_{t,1} \,\middle|\, \F_{t-1} \right]  + \varepsilon \mu \,\middle|\, \F_{t'}\right) 
    \leq \exp \left( -\frac{\mu  \varepsilon^2}{2 + \varepsilon} \right) .
  \end{align*}
  Using $\sum_{t=t'+1}^{T_1} \E \left[ I_{t,1}\mid \F_{t-1} \right] \leq \mu$ and by imposing the restriction $\varepsilon \leq 1$, we have for $\varepsilon \in (0, 1]$,
  \begin{align*}
    \Pr \left( \sum_{t=t'+1}^{T_1} I_{t,1} \geq (1 + \varepsilon) \mu \,\middle|\, \F_{t'}\right) 
    \leq \exp \left( -\frac{\mu \varepsilon^2}{3} \right) .
  \end{align*}
  Equivalently, conditional on $\F_{t'}$, with probability at least $1-\delta$, where $ \varepsilon =  \sqrt{\frac{3\ln{\frac{1}{\delta}}}{\frac{2\gamma}{K}T_1}} \in (0,1]$,  we have 
  \begin{align}
    \sum_{t=t'+1}^{T_1} I_{t,1} &\leq (1+\varepsilon) \mu \nonumber\\
                                &=(1+\varepsilon) \frac{2\gamma}{K}T_1. 
                                  \label{eq:high_prob_phase1}
  \end{align}
  This is the suitable upper bound we wanted for the quantity in \eqref{eq:some-eq24}. In particular, we have 
  \begin{align}
    (1 -  \frac{\eta K}{\gamma})^{\left[\sum_{t=t'+1}^{T_1} I_t\right]} &\geq  \Big( e^{(-\frac{\eta K}{\gamma})(1+\frac{\eta K}{\gamma})} \Big)^{\left[\sum_{t=t'+1}^{T_1} I_t\right]} & (1-x \geq e^{(-x - x^2)} \text{ for } 0 < x \leq \frac{1}{2}) \nonumber\\
                                                                        &\geq   \Big( e^{(-\frac{\eta K}{\gamma})(1+\frac{\eta K}{\gamma})} \Big)^{(1+\varepsilon) \frac{2\gamma}{K}T_1}  &\text{from }\eqref{eq:high_prob_phase1} \nonumber\\
                                                                        &=  \exp{\Bigg(- 2(1+\varepsilon) (1+\frac{\eta K }{\gamma}) \eta T_1\Bigg)}.
                                                                          \label{eq:some-eq21}
  \end{align}
  Combining this lower bound with \eqref{eq:some-eq26}, we get the following statement.
  
  For any possible value for $\tau$ and any fixed $\F_{t'}$ satisying $\tau=t'$, for any $\varepsilon = \sqrt{\frac{3 \ln{\frac{1}{\delta}}}{\frac{2\gamma}{K}T_1}} \in (0,1]$, with probability at least $1-\delta$, we have 
  \begin{align}
    \pi_{T_1+1, 1}  = \frac{\gamma}{2K} (1 -  \frac{\eta K}{\gamma})^{\left[\sum_{s=\tau+1}^{T_1}\indicator[I_s=1]\right]} &\geq \frac{ \gamma}{2K} \exp{\Bigg(- 2(1+\varepsilon) (1+\frac{\eta K }{\gamma}) \eta T_1\Bigg)} \nonumber \\
                                                                                                                           &= \exp{\Bigg(- \bigg(2(1+\varepsilon)(1+\frac{\eta K}{\gamma})\eta T_1 + \ln {\frac{2K}{\gamma}}\bigg)\Bigg)} \nonumber\\
                                                                                                                           &= 2^{-\mathcal{M}}.
                                                                                                                             \label{eq:some-eq22}
  \end{align}
  Since \eqref{eq:some-eq22} holds true for any possible value of $\tau$ and any fixed $\F_{t'}$ where $\tau=t'$, it holds true in general. 
  
  Moreover, note that when $T$ is large enough, we can choose $\delta = \frac{1}{T^2}$ since $\varepsilon = \sqrt{\frac{6\ln{T}}{\frac{2\gamma}{K}T_1}} \in (0,1]$ for large enough $T$. As a result, we get
  \begin{align*}
    \pi_{T_1+1,1} \geq 2^{-M},
  \end{align*}
  where $M$ is defined in \eqref{eq:some-eq37}.
\end{proof}
\subsubsection{Upper bound on the number of pulls of arm 2}

\begin{proof}[Proof of Lemma~\ref{lm:muplitplication-by-2-in-phase2.1}]
  We can lower bound $\pi_{t_m+1, 1}$ as
  \begin{align*}
    \pi_{t_m+1, 1} = 1-\pi_{t_m+1,2} &= 1 - \pi_{t_0,2} \prod_{s=t_0}^{t_m} 
                                       \frac{\pi_{s+1, 1}}{\pi_{s,1}}\\
                                     &= 1 - \pi_{t_0,2} \prod_{s=t_0}^{t_m} 
                                       \big( 1 - \eta \frac{1-\pi_{s,2}}{(1-\gamma)\pi_{s,2} + \frac{\gamma}{K}}\indicator[I_s=2] \big)\\
                                     &\geq  1 - \pi_{t_0,2} \prod_{s=t_0}^{t_m} 
                                       \big( 1 - \eta \frac{\pi_{s,1}}{1} \indicator[I_s=2] \big) & \big((1-\gamma)\pi_{s,2} + \frac{\gamma}{K}\leq 1\big)\\
                                     &\geq  1 - \pi_{t_0,2} \prod_{s=t_0}^{t_m} 
                                       \big( 1 - \eta \pi_{t_0,1} \indicator[I_s=2] \big) & \big(\pi_{s,1} \geq \pi_{t_0,1}\big) \\
                                     &=  1 - \pi_{t_0,2} \prod_{s=t_0}^{t_m} 
                                       \big( 1 - \eta \pi_{t_0,1}\big)^{\indicator[I_s=2]} \\
                                     & = 1 - \pi_{t_0,2} 
                                       \big( 1 - \eta \pi_{t_0,1}\big)^{m} \\
                                     &\geq  1 -  
                                       \big( 1 - \eta \pi_{t_0,1}\big)^{m} . & (\pi_{t_0,2} \leq 1)
  \end{align*}
  Moreover, 
  \begin{align*}
    (1 - \eta \pi_{t_0,1})^m &\leq \exp{(-\eta \pi_{t_0,1} m)} & (1-x \leq \exp(x) ,\forall x \in \mathbb{R})\\
                             &\leq \exp{\left(\frac{-2\pi_{t_0,1}}{1-2\pi_{t_0,1}}\right)} & (m \geq \frac{2}{\eta \, (1-2\pi_{t_0,1})})\\
                             &\leq 1 - 2\pi_{t_0,1} , & (e^{\frac{-2x}{1-2x}}\leq 1-2x, \forall x \in (0,\frac{1}{2}])  
  \end{align*}
  which means 
  \begin{align*}
    \pi_{t_m+1, 1} \geq 1 - \big( 1 - \eta \pi_{t_0,1}\big)^{m} \geq 2 \pi_{t_0,1} .
  \end{align*}
\end{proof}
\begin{proof}[Proof of Lemma~\ref{lm:sum-of-pulls-in-phase2.1}]
  Note that if $m\leq 2$, then we already have $\pi_{t,1} \geq \frac{1}{4}$, and since $t \geq T_1$, any pulls of arm 2 only increase $\pi_{t,1}$. Hence, after $k$ pulls  we have $\pi_{t',1}\geq \pi_{t,1} \geq \frac{1}{4}$.
  
  Consider the case where $m\geq 3$. We have $2^{-m}\leq\pi_{t,1} \leq 2^{m-1}$. Now we want to upper bound the number of pulls it takes so that $1/4 \leq \pi_{t',1} \leq 1/2$. Suppose we require $k_1$ pulls for the first doubling of $\pi$, $k_2$ for the second doubling, and so forth. This means we need $k=\sum_{i=1}^{m-2} k_i$ pulls before we get  $1/4 \leq \pi_{t',1} \leq 1/2$. Next, we upper bound each $k_i$. To do this, we denote all the rounds after $t$, in which we pull arm 2 as follows
  \begin{align*}
    \underbrace{t_{1}^{(1)}, t_{2}^{(1)}, \ldots,  t_{k_1}^{(1)}}_{\text{rounds before 1st doubling}} \underbrace{t_{1}^{(2)}, \ldots,  t_{k_2}^{(2)}}_{\text{rounds before 2nd doubling}} \ldots
    \underbrace{t_{1}^{(i)}, t_{2}^{(i)}, \ldots  t_{k_i}^{(i)}}_{\text{rounds before i}^{th}\text{ doubling}} \ldots
    \underbrace{t_{1}^{(m_2-2)}, \ldots,  t_{k_{m_2-2}}^{(m_2-2)}}_{\text{rounds before $(m-2)$ doubling}} , \underbrace{t_{k}}_{1/4 \leq \pi_{t_{k},1} }
  \end{align*}
  Now we can upper bound each $k_i$ as follows:	
  \begin{align*}
    k_i &\leq \frac{2}{\eta} \frac{1}{1 - 2 \pi_{t_1}^{(i)}} & (\text{Lemma~\ref{lm:muplitplication-by-2-in-phase2.1}}) \\
        &\leq \frac{2}{\eta} \frac{1}{1 - 2^{-m+i+1}} \,\, . & (\pi_{t_1}^{(i)} \leq 2^{-m+i})
  \end{align*}
  To see why the first inequality holds, observe that we start from round ${t_1}^{(i)}$, and Lemma~\ref{lm:muplitplication-by-2-in-phase2.1} has an upper bound on the number of pulls needed to get doubled. 
  
  Therefore, we get
  \begin{align}
    \sum_{i=1}^{m-2} k_i \leq \sum_{i=1}^{m-2}  \frac{2}{\eta} \frac{1}{1 - 2^{-m+i+1}} &=  \frac{2}{\eta}  \sum_{i=1}^{m-2}  \frac{1}{1 - 2^{-i}} \,\, .
                                                                                          \label{eq:summation-of-pulls}
  \end{align}
  Now, observe that
  \begin{align*}
    \frac{1}{1 - 2^{-i}} &= 1 + \frac{1}{2^i - 1}\\
                         &\leq 1 + \frac{1}{2^{i-1}} \,\, . & (2^i - 1 \geq 2^{i-1} \text{ for } i \geq 1)
  \end{align*}
  Therefore, we get
  \begin{align*}
    \sum_{i=1}^{m-2} k_i &\leq \frac{2}{\eta} \sum_{i=1}^{m-2} \left( 1 + \frac{1}{2^{i-1}} \right) \\
                         &\leq \frac{2}{\eta} \left[ m-2 +  \sum_{i=0}^{m-2} \frac{1}{2^{i}}\right]\\
                         &\leq \frac{2}{\eta} \left[ m-2 +  2 \right] &(\text{geometric series})\\
                         &= \frac{2}{\eta} m .
  \end{align*}
\end{proof}
\subsubsection{High probability lemma for event $\mathcal{E}_2$}
\label{apx:high-prob-for-event-2}
\begin{proof}[Proof of Lemma~\ref{lm:high-prob-for-phase2.1}]
  Recall the definition of $\mathcal{E}_1$ from Definition~\ref{df:event-1-definition} and $\mathcal{E}_2$ from Definition~\ref{df:event-h-definition} as follows:
  \begin{align*}
    \mathcal{E}_1 &= \left\{\pi_{T_1+1,1} \geq 2^{-M}\right\}\\
    \mathcal{E}_2 &= \left\{\pi_{T_1+T_2+1,1} \geq \frac{1}{4}\right\} ,
  \end{align*}
  where $M = {\frac{1}{\ln{2}}{\left(2(1+\varepsilon_1)(1+\frac{\eta K}{\gamma})\eta T_1 + \ln {\frac{2K}{\gamma}}\right)}}$. 
  We then show the following two statements.
  \begin{enumerate}[label=(\alph*)]
  \item For $\delta=\frac{1}{T^2}$, we prove that with probability at least $1-\delta_1$, $\mathcal{E}_1$ happens, i.e., $\Pr(\mathcal{E}_1)\geq 1-\delta_1$
  \item  Given $\mathcal{E}_1$ happened, for $\delta_2=\frac{1}{T^2}$, we prove that with probability at least $1-\delta_2$, $\mathcal{E}_2$ happens, i.e., $\Pr(\mathcal{E}_2\mid\mathcal{E}_1 )\geq 1-\delta_2$.
  \end{enumerate}
  Having both (a) and (b) implies that the lemma holds true, as 
  \begin{align*}
    \Pr(\mathcal{E}_2) \geq 	\Pr(\mathcal{E}_1 \text{ and }\mathcal{E}_2) = \Pr(\mathcal{E}_1) \, \Pr(\mathcal{E}_2\mid\mathcal{E}_1) &= (1-\delta_1)\,(1-\delta_2) \\
                                                                                                                                         &\geq 1 - \delta_1 -\delta_2 = 1 - \frac{2}{T^2}.
  \end{align*}
  Now, we prove (a) and (b).
  \begin{enumerate}[align=left]
  \item[Proof of (a)]
    
    By Lemma~\ref{lm:hig-prob-lower-bound-on-pi-in-phase-1}, with probability at least $1-\frac{1}{T^2}$, we have
    \begin{align*}
      \pi_{T_1+1, 1} &\geq 2^{-M},
    \end{align*}
    for $M={\frac{1}{\ln{2}}{\left(2(1+\varepsilon_1)(1+\frac{\eta K}{\gamma})\eta T_1 + \ln {\frac{2K}{\gamma}}\right)}}$.
    
  \item[Proof of (b)]
    
    We show that for any history $\F_{T_1}$ such that 
    $\mathcal{E}_1$ happened, we have 
    \begin{align}
      \Pr (\mathcal{E}_2 \mid F_{T_1}) \geq 1 - \delta_2.
      \label{eq:some-eq36}
    \end{align}
    This implies that $\Pr (\mathcal{E}_2 \mid \mathcal{E}_1) \geq 1 - \delta_2$.
    
    Consider a fixed history $\F_{T_1}$ such that 
    $\mathcal{E}_1$ happened. Event $\mathcal{E}_1$ implies that for some $M' \leq M$, we have 
    \begin{align*}
      2^{-M'} \leq \pi_{T_1+1,1} \leq  2^{-(M'-1)} . 
    \end{align*}
    
    Now, Lemma~\ref{lm:sum-of-pulls-in-phase2.1} states that $ \Gamma=\frac{2}{\eta} M'$ pulls is sufficient to get
    \begin{align}
      \pi_{t',1} \geq \frac{1}{4},
    \end{align}
    where $t'$ is round number after $\Gamma$-th pull. We define 
    \begin{align*}
      X_t := \indicator\left[I_t=2 \text{ or } \pi_{t,1} \geq \frac{1}{4}\right].
    \end{align*}
    Next, observe that if 
    \begin{align}
      \sum_{t \in \text{phase 2.1}} X_t = \sum_{t=T_1+1}^{T_1 + T_2 } X_t \geq \frac{2}{\eta} M',
      \label{eq:some-eq27}
    \end{align}
    then this implies that $\pi_{T_1+T_2+1} \geq 1/4$, (i.e.~$\mathcal{E}_2$ happens.) To see why, note that if for any for $t$ in $T_1+1\leq t \leq T_1+T_2$, we have $\pi_{t,1}\geq 1/4$, this implies $\pi_{T_1+T_2+1} \geq 1/4$ since $\pi_{t,1}$ can only increase in phase 2.1. If for all $t$ in $T_1+1\leq t \leq T_1+T_2$, we have $\pi_{t,1} < 1/4$, then \eqref{eq:some-eq27} implies that 
    \begin{align*}
      \sum_{t=T_1+1}^{T_1 + T_2 } X_t = \sum_{t=T_1+1}^{T_1+T_2} \indicator[I_t=2] \geq \frac{2}{\gamma}M'.
    \end{align*}
    Therefore, Lemma~\ref{lm:sum-of-pulls-in-phase2.1} implies that $\pi_{T_1+T_2+1} \geq 1/4$. 
    
    Now it remains to show that with probability at least $1-\frac{1}{T^2}$, \eqref{eq:some-eq27} happens. We use a martingale concentration argument to show this. Indeed, we define $Z_{T_1}=0$ and for any $t$ in $T_1 +1 \leq t \leq T_1+T_2$, we define $W_t := X_t - \E[X_t \mid \F_{t-1}]$ and $Z_t:= \sum_{s=T_1+1}^t W_s$. Observe that 
    \begin{align*}
      \E[Z_t \mid \F_{t-1}] = Z_{t-1} + \E[W_t \mid \F_{t-1}] = Z_{t-1},
    \end{align*} and hence $(Z_t)_{t \in \{T_1,\ldots, T\}}$ is a martingale. Since $X_t \in \{0, 1\}$, we have $Z_t - Z_{t-1} = W_t \in [-A_t, B_t]$ for $A_t = \E \left[ X_t\mid \F_{t-1} \right]$ and $B_t = 1 - A_t$. Consequently, we have $A_t + B_t = 1 := c$ for all $t\geq t'$. Define $q:= \frac{3-\gamma}{4}$. 
    
    Observe that we have $\E \left[X_t\mid \F_{t-1} \right] \geq q$. It is because for any $\F_{t-1}$ such that $\pi_{t,1} \geq 1/4$, we have $\E \left[X_t\mid \F_{t-1} \right] = \E \left[\indicator\left[I_t=2 \text{ or } \pi_{t,1} \geq \frac{1}{4} \right] \mid \F_{t-1}\right] =  1 >q$. Moreover, for any $\F_{t-1}$ such that $\pi_{t,1} < 1/4$, we have $\pi_{t,2} \geq 3/4$ and hence $\E\left[\indicator\left[I_t=2\right] \,\middle|\, \F_{t-1}\right] = \tilde{\pi}_{t,2} = (1-\gamma)\pi_{t,2} + \frac{\gamma}{2} \geq \frac{3-\gamma}{4}$ , therefore
    \begin{align*}
      \E \left[X_t\mid \F_{t-1} \right] =  \E \left[\indicator\left[I_t=2\right] \text{ or } \indicator[\pi_{t,1} \geq \frac{1}{4}] \,\middle|\, \F_{t-1}\right]=\E\left[\indicator\left[I_t=2\right] \,\middle|\, \F_{t-1}\right]\geq \frac{3-\gamma}{4}.
    \end{align*} 
    Note that we have 
    \begin{align*}
      \sum_{t=T_1+1}^{T_1+T_2} A_t = \sum_{t=T_1+1}^{T_1+T_2} \E \left[X_t\mid \F_{t-1} \right] \leq\sum_{t=T_1+1}^{T_1+T_2} q =  q T_2 =: \mu .
    \end{align*}
    We now apply Theorem~15 from \cite{kuszmaul2021multiplicative} to get for any $\varepsilon > 0$,
    \begin{align*}
      \Pr(Z_{T_1+T_2} - Z_{T_1} \leq -\varepsilon \mu \mid \F_{T_1}) \leq \exp \left( -\frac{\varepsilon^2 \mu}{2 c} \right)
    \end{align*}
    for $\F_{T_1}$ where $\mathcal{E}_1$ holds.
    
    Plugging in our setting of $c$ and using our definition of the martingale sequence gives
    \begin{align*}
      \Pr \left( \sum_{t=T_1+1}^{T_1+T_2} X_t \leq \sum_{t=T_1+1}^{T_1+T_2} \E \left[ X_t\,\middle|\, \F_{t-1} \right]  - \varepsilon q T_2  \,\middle|\, \F_{T_1}\right) 
      \leq \exp \left( -\frac{\mu  \varepsilon^2}{2} \right) .
    \end{align*}
    Using $\sum_{t=T_1+1}^{T_1+T_2} \E \left[ X_t\mid \F_{t-1} \right] \leq \mu$,
    we have all $\varepsilon >0$,
    \begin{align*}
      \Pr \left( \sum_{t=T_1+1}^{T_1+T_2} X_t \leq (1 - \varepsilon) \mu \,\middle|\, \F_{T_1} \right) 
      \leq \exp \left( -\frac{\mu \varepsilon^2}{2} \right) .
    \end{align*}
    This implies that, for any given $\F_{T_1}$ such that $\mathcal{E}_1$ holds, with probability at least $1-\frac{1}{T^2}$, we have 
    \begin{align*}
      \sum_{t=T_1+1}^{T_1 + T_2} X_t \geq  \mu = \frac{3-\gamma}{4}(1-\varepsilon_2)T_2,
    \end{align*}
    where $\varepsilon_2 = \sqrt{\frac{4 \ln{T}}{\frac{3-\gamma}{4}T_2}}$. 
    Now, recall $T'$ from Definition~\ref{def:def-h-prime}. For large enough $T$, by \eqref{eq:some-eq14}, we have $T_2\geq T'$. Therefore,  
    \begin{align*}
      (\frac{3-\gamma}{4} ) (1-\varepsilon_2) T_2 \geq (\frac{3-\gamma}{4})  (1-\varepsilon_2) T'.
    \end{align*}
    Also by definition of $T'$, we get
    \begin{align*}
      \frac{3-\gamma}{4} (1-\varepsilon_2) T'&= M \frac{2}{\eta}.
    \end{align*}  
    Finally by definition of $M'$, we have 
    \begin{align*}
      M \frac{2}{\eta} \geq M' \frac{2}{\eta},
    \end{align*}
    which means with probability at least $1-\frac{1}{T^2}$, \eqref{eq:some-eq27} happens.
  \end{enumerate}
\end{proof}
\subsubsection{Proof of Lemmas~\ref{lm:pi-2-at-the-end-expectation}~and~\ref{lm:prob-of-lowe-pi}}
We first prove Lemma~\ref{lm:pi-2-at-the-end-expectation}.
\begin{proof}[Proof of Lemma~\ref{lm:pi-2-at-the-end-expectation}]
  Consider any round $t$ where $t \geq t_0 + 1$. Consider any history $\mathcal{F}_{t-1}$ where event $\mathcal{E}_2$ happened. By applying  Lemma~\ref{lm:recursive-expectation-lemma} for $i=2$, we get
  \begin{align*}
    \E\left[\pi_{t+1,2} \mid \F_{t-1}\right]    &= (1-\eta) \,\pi_{t,2}  + \eta \, \pi^2_{t,2}.
  \end{align*}
  Now, note that since $\mathcal{E}_2$ happened we have $\pi_{t_0,1}\geq 1/4$. Since $\pi_{t,1} = \pi_{t_0+\tau}$ can only increase, we have $\pi_{t,1}\geq 1/4$. This implies $\pi_{t,2}\leq 3/4$, therefore
  \begin{align*}
    \pi^2_{t,2} \leq  \frac{3}{4} \, \pi_{t,2}.
  \end{align*}
  Therefore, we get
  \begin{align}
    \E\left[\pi_{t+1,2} \mid \F_{t-1}\right]    &\leq (1-\eta) \, \pi_{t,2}  + \frac{3}{4} \eta \, \pi_{t,2}  \nonumber\\
                                                &\leq (1-\frac{\eta}{4}) \, \pi_{t,2} . 
                                                  \label{eq:some-eq38}
  \end{align}
  We now can show an upper bound on $\E \left[\pi_{t+1,2} \,\middle|\, \mathcal{E}_2\right]$ by noting that 
  \begin{align*}
    \E \left[\pi_{t+1,2} \,\middle|\, \mathcal{E}_2\right] &=\E \left[ \E \left[\pi_{t+1,2}  \,\middle|\, \mathcal{E}_2, \F_{t-1}\right]\,\middle|\, \mathcal{E}_2\right] \\
                                                           &= \E \left[ \E \left[\pi_{t+1,2}  \,\middle|\, \F_{t-1}\right]\,\middle|\, \mathcal{E}_2\right] .
  \end{align*}
  This means that we can take the conditional expectation on both sides of \eqref{eq:some-eq38} to get
  \begin{align}
    \E[\pi_{t+1,2}|\mathcal{E}_2] \leq (1-\frac{\eta}{4}) \E \left[\pi_{t,2}|\mathcal{E}_2\right] .
    \label{eq:some-eq15}
  \end{align}
  Moreover, by definition of $\mathcal{E}_2$ we have $\E[\pi_{t_0, 2}|\mathcal{E}_2] = \pi_{t_0, 2} \leq 3/4$. Therefore, we get
  \begin{align*}
    \E[\pi_{t,2} \mid \mathcal{E}_2] &= \E[\pi_{t_0, 2}\mid\mathcal{E}_2] \prod_{s=t_0}^{t_0+\tau-1} \frac{\E[\pi_{s+1,2}\mid\mathcal{E}_2]}{\E[\pi_{s,2}\mid\mathcal{E}_2]}\\
                                     &\leq  \E[\pi_{t_0, 2} \mid\mathcal{E}_2] \prod_{s=t_0}^{t_0+\tau-1} (1 - \frac{\eta}{4}) &\text{by } \eqref{eq:some-eq15}  \\
                                     & \leq \frac{3}{4}  \, (1 - \frac{\eta}{4})^{\tau} \\
                                     &\leq \frac{3}{4} \exp{(-\frac{\eta}{4} \tau)} . & (1-x \leq e^{-x}) 
  \end{align*}
\end{proof}
We now prove Lemma~\ref{lm:prob-of-lowe-pi}.
\begin{proof}[Proof of Lemma~\ref{lm:prob-of-lowe-pi}]
  Let $\tau = T_3 +T_4 = c\,T $ for $c>0$. Define random variable $X := \pi_{T+1,1}\in [0,1]$. Clearly 
  Lemma~\ref{lm:pi-2-at-the-end-expectation} implies that $\E[X|\mathcal{E}_2]=\E[\pi_{T+1,1}|\mathcal{E}_2] = 1 - \E[\pi_{T+1,2}|\mathcal{E}_2]  \geq 1 - \frac{3}{4} e^{-\frac{\eta}{4}c\,T}$. Therefore, using Chebyshev's inequality, we get 
  \begin{align*}
    \Pr\left(X \leq \frac{3}{4}\,\middle|\,\mathcal{E}_2\right) &= \Pr\left(X - \E\left[X|\mathcal{E}_2\right] + \E\left[X|\mathcal{E}_2\right] \leq \frac{3}{4}\,\middle|\,\mathcal{E}_2\right) & \\
                                                                &= \Pr\left(X - \E\left[X|\mathcal{E}_2\right]   \leq \frac{3}{4} - \E\left[X|\mathcal{E}_2\right] \,\middle|\, \mathcal{E}_2\right)  \\
                                                                &\leq \Pr\left(X - \E\left[X|\mathcal{E}_2\right] \leq \frac{3}{4} - \left(1 - \frac{3}{4} e^{-\frac{\eta}{4}cT}\right)\,\middle|\,\mathcal{E}_2\right) & (\E\left[X|\mathcal{E}_2\right] \geq 1 - \frac{3}{4} e^{-\frac{\eta}{4}cT})\\
                                                                &= \Pr\left(X - \E\left[X|\mathcal{E}_2\right] \leq \frac{3}{4} e^{-\frac{\eta}{4}cT} - \frac{1}{4} \,\middle|\, \mathcal{E}_2\right) \\
                                                                &\leq \Pr\left(X - \E\left[X|\mathcal{E}_2\right]  \leq -\frac{1}{5}\,\middle|\, \mathcal{E}_2\right) & (\text{for large } T\text{ we have}\frac{3}{4} e^{-\frac{c}{4}\eta T} \leq \frac{1}{20})\\
                                                                &\leq \Pr\left( \left| X - \E\big[X|\mathcal{E}_2\big] \right| \geq \frac{1}{5} \,\middle|\, \mathcal{E}_2\right) &\\
                                                                &\leq 25 \operatorname{Var}(X|\mathcal{E}_2) & (\text{Chebyshev inequality})\\
                                                                &= 25 \left(\E\left[X^2|\mathcal{E}_2\right] - \E\left[X|\mathcal{E}_2\right]^2\right) &\\
                                                                &\leq  25  \left(\E\left[X|\mathcal{E}_2\right] - \E\left[X|\mathcal{E}_2\right]^2\right) & (E[X] \geq E[X^2] \text{ for X } \in [0,1] )\\
                                                                &= 25 \, \E\left[X|\mathcal{E}_2\right] \left(1-\E\left[X\,\middle|\,\mathcal{E}_2\right]\right)& \\
                                                                &\leq 25 \,\big(1-\E\big[X|\mathcal{E}_2\big]\big) & (\E\big[X|\mathcal{E}_2\big] \leq 1)&\\
                                                                &\leq \frac{75}{4} e^{-c  \frac{\eta}{4} T} . & (\text{from Lemma~\ref{lm:pi-2-at-the-end-expectation}})
  \end{align*}
\end{proof}
\subsubsection{Proof of Claim~\ref{claim:small-lawn-pi-on-expectation}}
Now, we are ready to prove Claim~\ref{claim:small-lawn-pi-on-expectation}. We recall Claim~\ref{claim:small-lawn-pi-on-expectation}. 
\RestateClaimOne*

First, we have the following simple observation.
\begin{observation}
  When running WSU-UX with any valid hyperparameter $\eta, \gamma$ on the loss sequence $\{\ell_t\}_{t=1}^T$ defined in Definition~\ref{df:loss-sequence}, we have with probability 1, that 
  \begin{align*}
    \pi_{T+1,1} \geq \left( \frac{1}{2}\right)^{\frac{T}{100}+1}.
  \end{align*}
  \label{obs:pi-1-loose-lower-bound}
\end{observation}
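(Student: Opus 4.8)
The plan is to exploit the strong asymmetry of the loss sequence: all of the downward pressure on $\pi_{t,1}$ is confined to Phase~1, which lasts only $T_1=\frac{T}{100}$ rounds, whereas Phase~2 can only help arm~1. Concretely, I would first show that for $t>T_1$ the probability $\pi_{t,1}$ is non-decreasing, reducing the problem to lower bounding $\pi_{T_1+1,1}$; then I would bound the per-round multiplicative change of $\pi_{t,1}$ during Phase~1 by a factor of at least $\tfrac12$, so that $T_1$ rounds cost a factor of at most $(\tfrac12)^{T_1}$. Combined with the uniform initialization $\pi_{1,1}=\tfrac1K=\tfrac12$ (here $K=2$), this yields exactly the claimed $(\tfrac12)^{T_1+1}$ bound. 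Since every inequality holds deterministically for each realization of the pulls $I_t$, the resulting bound holds with probability~$1$.

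For the Phase~2 monotonicity, I would note that for $t>T_1$ we have $\ell_{t,1}=0$, hence $\hat{\ell}_{t,1}=0$, so the WSU-UX update gives
\[
\pi_{t+1,1}=\pi_{t,1}\Bigl(1-\eta\bigl(0-\textstyle\sum_{j}\pi_{t,j}\hat{\ell}_{t,j}\bigr)\Bigr)=\pi_{t,1}\Bigl(1+\eta\,\pi_{t,2}\hat{\ell}_{t,2}\Bigr)\ge\pi_{t,1},
\]
because $\hat{\ell}_{t,2}\ge0$. Telescoping over Phase~2 then gives $\pi_{T+1,1}\ge\pi_{T_1+1,1}$, so it suffices to control $\pi_{T_1+1,1}$.

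For Phase~1 I would reuse exactly the per-round computation already recorded in \eqref{eq:eq-telescope-update}: for $1\le t\le T_1$, using the exploration floor $\tilde{\pi}_{t,1}\ge\frac{\gamma}{K}$ and $1-\pi_{t,1}\le1$,
\[
\frac{\pi_{t+1,1}}{\pi_{t,1}}=1-\eta\,\frac{1-\pi_{t,1}}{\tilde{\pi}_{t,1}}\indicator[I_t=1]\ge 1-\frac{\eta K}{\gamma}\indicator[I_t=1]\ge 1-\frac{\eta K}{\gamma}\ge\frac12,
\]
where the last step is the validity constraint $\frac{\eta K}{\gamma}\le\frac12$. Hence each of the $T_1$ Phase-1 factors is at least $\tfrac12$ (and equals $1$ on rounds where arm~1 is not pulled), so
\[
\pi_{T_1+1,1}=\pi_{1,1}\prod_{t=1}^{T_1}\frac{\pi_{t+1,1}}{\pi_{t,1}}\ge\frac1K\Bigl(\frac12\Bigr)^{T_1}=\Bigl(\frac12\Bigr)^{T_1+1},
\]
using $\pi_{1,1}=\tfrac1K$ and $K=2$. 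Chaining this with the Phase~2 bound proves the observation.

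There is no genuine obstacle here; the only points requiring care are ensuring that the Phase~1 per-round factor is bounded below by $\tfrac12$ \emph{for every round} regardless of whether arm~1 is pulled (which is immediate, since the factor is $1$ when arm~1 is idle), and recognizing that the validity constraint $\frac{\eta K}{\gamma}\le\frac12$ together with the uniform-exploration floor $\tilde{\pi}_{t,1}\ge\frac{\gamma}{K}$ is precisely what tames the potentially enormous importance-weighted loss $1/\tilde{\pi}_{t,1}$ into a benign constant factor.
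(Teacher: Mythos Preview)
Your proof is correct and follows essentially the same approach as the paper: both argue that $\pi_{t,1}$ can only decrease during the $T_1$ rounds of Phase~1, where each per-round multiplicative factor is at least $1-\frac{\eta K}{\gamma}\ge\tfrac12$ by the validity constraint, and combine this with the initialization $\pi_{1,1}=\tfrac12$. Your version is more explicit about the Phase~2 monotonicity and the telescoping, but the underlying argument is identical.
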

\begin{proof}[Proof of Observation~\ref{obs:pi-1-loose-lower-bound}]
  The probability of $\pi_{t,1}$ can only decrease in the first $\frac{T}{100}$ rounds and only if arm 1 is pulled in those rounds. It is easy to see that the value drops by at most a factor of 2 each time it is pulled as for $1 \leq s \leq T_1$ we have
  \begin{align*}
    \frac{\pi_{s+1,1}}{\pi_{s,1}} &= 1 - \eta \Big(\hat{\ell}_{s,1} - \sum_{j \in [2]}\pi_{s,j}\hat{\ell}_{s,j}\Big) \geq 1 -  \frac{\eta K}{\gamma}\indicator[I_s = 1]\nonumber \geq 1/2.
  \end{align*}
\end{proof}
\begin{proof}[Proof of Claim~\ref{claim:small-lawn-pi-on-expectation}]
  
  Define event $A := \indicator\left[\pi_{T+1,1}\geq 3/4\right]$. Using conditional expectation, we have
  \begin{align*}
    \E[\ln{\pi_{T+1,1}}\mid\mathcal{E}_2] &= \Pr(A\mid\mathcal{E}_2) \, \E[\ln{\pi_{T+1,1}}\mid \mathcal{E}_2,A] + \Pr(A^c \mid \mathcal{E}_2) \, \E[\ln{\pi_{T+1,1}}\mid\mathcal{E}_2,A^c] \\
                                          &\geq \left(1 - \frac{75}{4} e^{-\frac{c\eta}{4}T}\right) \, \E[\ln{\pi_{T+1,1}}|\mathcal{E}_2,A]  + \left(\frac{75}{4} e^{-\frac{c\eta}{4}T}\right) \,  \E[\ln{\pi_{T+1,1}}|\mathcal{E}_2,A^c] ,
  \end{align*}
  where the inequality comes from Lemma~\ref{lm:prob-of-lowe-pi}. This can be further lower bounded by 
  \begin{align}
    & \left(1 - \frac{75}{4} e^{-\frac{c\eta}{4}T}\right) \, \ln{\frac{3}{4}}  + \left(\frac{75}{4} e^{-\frac{c\eta}{4}T}\right) \,  \E[\ln{\pi_{T+1,1}}|\mathcal{E}_2,A^c] \nonumber\\
    &\geq \left(1 - \frac{75}{4} e^{-\frac{c\eta}{4}T}\right) \, \ln{\frac{3}{4}}  + \left(\frac{75}{4} e^{-\frac{c\eta}{4}T}\right) \, \min\left[\ln{\pi_{T+1,1}}\right] \nonumber\\
    &\geq \left(1 - \frac{75}{4} e^{-\frac{c\eta}{4}T}\right) \, \ln{\frac{3}{4}}  + \underbrace{\left(\frac{75}{4} e^{-\frac{c\eta}{4}T}\right)  \left(\frac{T}{100}+1\right) \ln{\frac{1}{2}}}_{\text{second term}} & (\text{Observation~\ref{obs:pi-1-loose-lower-bound}})\nonumber\\
    &\geq \ln{\frac{11}{16}} . & \text{second term }\rightarrow \text{0 as T} \rightarrow \infty \nonumber
  \end{align}

  Therefore, we have 
  \begin{align}
    \E[\ln{\pi_{T+1,1}}\mid\mathcal{E}_2]  \geq \ln{\frac{11}{16}}.  
    \label{inequality:inequality-for-ln-pi-T-given-H}
  \end{align}
  Now, we can lower bound $\E[\ln{\pi_{T+1,1}}]$ using conditional expectation:
  \begin{align*}
    \E\left[\ln{\pi_{T+1,1}}\right] &= \Pr(\mathcal{E}_2) \,  \E\left[\ln{\pi_{T+1,1}}\,\middle|\,\mathcal{E}_2\right] + \Pr(\mathcal{E}_2^c) \, \E\left[\ln{\pi_{T+1,1}}\,\middle|\,\mathcal{E}_2^c\right] \\
                                    & \geq \left(1-2\delta\right) \, \E\left[\ln{\pi_{T+1,1}}\,\middle|\,\mathcal{E}_2\right] + (2\delta) \, \E\left[\ln{\pi_{T+1,1}}\,\middle|\,\mathcal{E}_2^c\right] \\
                                    &\geq (1-2\delta) \, \E\left[\ln{\pi_{T+1,1}}\,\middle|\,\mathcal{E}_2\right] + (2\delta) \, \left[\min\ln{\pi_{T+1,1}}\right] \\
                                    &\geq (1-2\delta) \, (\ln{\frac{11}{16}}) + (2\delta) \, [\min\ln{\pi_{T+1,1}}] &(\text{By inequality~\ref{inequality:inequality-for-ln-pi-T-given-H}}) \\
                                    &  \geq (1-2\delta) \, (\ln{\frac{11}{16}}) + (2\delta) \, (\frac{T}{100}+1) \ln{\frac{1}{2}}  & (\text{Observation~\ref{obs:pi-1-loose-lower-bound}})\\
                                    &\geq (\ln{\frac{11}{16}}) + \underbrace{(2\delta) \, (\frac{T}{100}+1) \ln{\frac{1}{2}}}_{\text{second term}}  \\
                                    &\geq (\ln{\frac{5}{8}}) . & \delta = \frac{1}{T^2} \text{, therefore  second term} \rightarrow 0 \text{ as T}\rightarrow\infty 
  \end{align*}
  Therefore, we get
  \begin{align*}
    \E[\ln{\pi_{T+1,1}} + \ln{K}]  &\geq (\ln{\frac{5}{8}}) + \ln{K} \\
                                   &= \ln{\frac{5}{4}}. & (K=2)
  \end{align*}
  Therefore, for $c_1= \ln{\frac{5}{4}} > 0$, 
  \begin{align*}
    \E[\ln{\pi_{T+1,1}} + \ln{K}] \geq  c_1.
  \end{align*}
\end{proof}
\subsubsection{Proof of Claim~\ref{claim:pi-tilda-expectation-to-pi-expectation-conversion-lemma}}
We first prove Lemma~\ref{lm:dominant-stage} and then we prove  Claim~\ref{claim:pi-tilda-expectation-to-pi-expectation-conversion-lemma}.
\begin{proof}[Proof of Lemma~\ref{lm:dominant-stage}]
  Set $t = \frac{31T}{100} = T_1 + T_2 + T_3 +1 $. We can use conditional expectation for $\pi_{2,t}$ on event $\mathcal{E}_2$ defined in Definition~\ref{df:event-h-definition} to get
  \begin{align*}
    \E[\pi_{t,2}] &= \Pr(\mathcal{E}_2) \E[\pi_{t,2}|\mathcal{E}_2] + \Pr(\mathcal{E}_2^c) \E[\pi_{t,2}|\mathcal{E}_2^c] \\
                  &\leq \E[\pi_{t,2}|\mathcal{E}_2] + \Pr(\mathcal{E}_2^c) \cdot 1 . & (\pi_{t,2} \leq 1)
  \end{align*}
  Now, by setting $\tau = T_3$ in Lemma~\ref{lm:pi-2-at-the-end-expectation}, one would get $\E\left[\pi_{t,2}|\mathcal{E}_2\right] \leq \frac{3}{4}\exp{(-\frac{\eta}{4}T_3)} = \frac{3}{4}\exp{(\frac{-c \eta }{4}T)}  $ for $c = \frac{1}{10} > 0 $. Moreover, by Lemma~\ref{lm:high-prob-for-phase2.1} we have that $\Pr(\mathcal{E}_2^c)\leq \frac{2}{T^2}$. Therefore, for large enough $T$, we can further upper bound $\E[\pi_{t,2}]$ by 
  \begin{align*}
    \E\left[\pi_{t,2}\mid \mathcal{E}_2\right] + \Pr(\mathcal{E}_2^c) \cdot 1 \leq \frac{3}{4} e^{\frac{-c\eta T}{4}} + \frac{2}{T^2} \leq \frac{1}{4}.
  \end{align*}
\end{proof}
\begin{proof}[Proof of Claim~\ref{claim:pi-tilda-expectation-to-pi-expectation-conversion-lemma}]
  By expanding $\tilde{\pi}_{t,j}=\pi_{t,j}(1-\gamma) + \frac{\gamma}{2}$, we get
  \begin{align*}
    \left[\sum_{j=1}^2\tilde{\pi}_{t,j} \hat{\ell}_{t,j} - \hat{\ell}_{t,1} \right] = \sum_{j=1}^2\left(\pi_{t,j}(1-\gamma) + \frac{\gamma}{2}\right) \hat{\ell}_{t,j} - \hat{\ell}_{t,1} = \left[  \sum_{j=1}^2\pi_{t,j} \hat{\ell}_{t,j} - \hat{\ell}_{t,1}\right] - \sum_{j=1}^2\gamma \pi_{t,j}\hat{\ell}_{t,j} +  \sum_{j=1}^2\frac{\gamma}{2} \hat{\ell}_{t,j}
  \end{align*}
  Taking the expectation from both sides, we get
  \begin{align*}
    \E\left[\left(\sum_{j=1}^2\tilde{\pi}_{t,j} \hat{\ell}_{t,j} - \hat{\ell}_{t,1}\right)\right] 
    &= \mathbb{E}\left[\left(\sum_{j=1}^2\pi_{t,j} \hat{\ell}_{t,j} - \hat{\ell}_{t,1}\right)\right] + \left(- \sum_{j=1}^2 \E\left[ \gamma \pi_{t,j}\hat{\ell}_{t,j}\right] +  \sum_{j=1}^2\mathbb{E}\left[\frac{\gamma}{2} \hat{\ell}_{t,j}\right]\right)\\
    &= \mathbb{E}\left[\left(\sum_{j=1}^2\pi_{t,j} \hat{\ell}_{t,j} - \hat{\ell}_{t,1}\right)\right] - \sum_{j=1}^2\mathbb{E}\left[\gamma \pi_{t,j}\,\E_{t-1}\left[\hat{\ell}_{t,j}\right]\right] +  \sum_{j=1}^2\mathbb{E}\left[\frac{\gamma}{2}\,\E_{t-1}\left[ \hat{\ell}_{t,j}\right]\right]\\
    &= \mathbb{E}\left[\left(\sum_{j=1}^2\pi_{t,j} \hat{\ell}_{t,j} - \hat{\ell}_{t,1}\right)\right] +  \left(- \sum_{j=1}^2\mathbb{E}\left[\gamma \pi_{t,j}\ell_{t,j}\right] + \sum_{j=1}^2  \frac{\gamma}{2} \ell_{t,j}\right).
  \end{align*}
  Summing over $T$ rounds, we get
  \begin{align*}
    \E\left[\sum_{t=1}^T\left(\sum_{j=1}^2\tilde{\pi}_{t,j} \hat{\ell}_{t,j} - \hat{\ell}_{t,1}\right)\right] =  \mathbb{E}\left[\sum_{t=1}^{T}\left(\sum_{j=1}^2\pi_{t,j} \hat{\ell}_{t,j} - \hat{\ell}_{t,1}\right)\right] +  \underbrace{\left(- \sum_{t=1}^{T}\sum_{j=1}^2\mathbb{E}\left[\gamma \pi_{t,j}\ell_{t,j}\right] + \sum_{t=1}^{T}\sum_{j=1}^2  \frac{\gamma}{2} \ell_{t,j}\right)}_{\Delta} ,
  \end{align*}	
  where we define
  \begin{align}
    \Delta:= - \sum_{t=1}^{T}\sum_{j=1}^2\mathbb{E}\left[\gamma \pi_{t,j}\ell_{t,j}\right] + \sum_{t=1}^{T}\sum_{j=1}^2  \frac{\gamma}{2} \ell_{t,j}.
    \label{eq:some-eq39}
  \end{align}
  Note that to prove Claim~\ref{claim:pi-tilda-expectation-to-pi-expectation-conversion-lemma}, we need to show that for large enough $T$,  
  \begin{align}
    \Delta  \geq c_3\gamma \, T
    \label{eq:some-eq54}
  \end{align}
  holds true.
  
  Note that in loss sequence $\{\ell_t\}_{t=1}^T$ for all rounds $t$, we have $\ell_{t,1} + \ell_{t,2} =1$; therefore 
  \begin{align}
    \sum_{t=1}^{T}\sum_{j=1}^2  \frac{\gamma}{2} \ell_{t,j} = \frac{\gamma \, T}{2}.
    \label{eq:some-eq41}
  \end{align}
  Moreover, for large enough $T$ we have 
  \begin{equation}
    \begin{aligned}
      \sum_{t=1}^{T}\sum_{j=1}^2\mathbb{E}\left[\gamma \pi_{t,j}\ell_{t,j}\right] &= \underbrace{\sum_{t \in \mathcal{T}_1}\E\left[\gamma\pi_{t,1}  \right]}_{\text{phase 1}} + \underbrace{\sum_{t \in \mathcal{T}_2 \cup \mathcal{T}_3}\mathbb{E}\left[\gamma \pi_{t,2}\right]}_{\text{phase 2.1 and phase 2.2} } 
      +
      \underbrace{\sum_{t \in \mathcal{T}_4}\mathbb{E}\left[\gamma \pi_{t,2}\right]}_{\text{phase 2.3}}\\
      &\leq T_1 \frac{\gamma}{2}  +\sum_{t \in \mathcal{T}_2 \cup \mathcal{T}_3}\mathbb{E}\left[\gamma \pi_{t,2}\right]+\sum_{t \in \mathcal{T}_4}\mathbb{E}\left[\gamma \pi_{t,2}\right] &(\pi_{t,1}\leq 1/2 \in \mathcal{T}_1) \\
      &\leq T_1 \frac{\gamma}{2}  +\gamma \, (T_2+T_3) +\sum_{t \in \mathcal{T}_4}\mathbb{E}\left[\gamma \pi_{t,2}\right] &(\pi_{t,2} \leq 1)\\
      &\leq T_1 \frac{\gamma}{2}  + \gamma\left(T_2 + T_3\right)  +T_4\, \frac{\gamma}{4}, & (\forall t \in \mathcal{T}_4, \E[\pi_{t,2}]\leq 1/4 \text{ when }T \text{ is large})
    \end{aligned}
    \label{eq:some-eq40}
  \end{equation}
  where the first inequality comes from the fact that $\pi_{t,1}\leq 1/2$ when $1 \leq t \leq T_1$. The third inequality comes from the fact that by Lemma~\ref{lm:dominant-stage}, for large enough $T$, we have $\E[\pi_{t,2}]\leq\frac{1}{4}$ when $t \in \mathcal{T}_4$. 
  
  By using 
  \eqref{eq:some-eq41} and \eqref{eq:some-eq40}, we have 
  \begin{align*}
    \Delta = - \sum_{t=1}^{T}\sum_{j=1}^2\mathbb{E}\left[\gamma \pi_{t,j}\ell_{t,j}\right] + \sum_{t=1}^{T}\sum_{j=1}^2  \frac{\gamma}{2} \ell_{t,j} &\geq - \left( T_1 \frac{\gamma}{2}  + (T_2+T_3) \, \gamma  +T_4\, \frac{\gamma}{4} \right) + T \frac{\gamma }{2} \\
                                                                                                                                                     &= -T \,  (\frac{1}{100} \frac{\gamma}{2} + \frac{3}{10} \gamma +  \frac{69}{100} \frac{\gamma}{4}) +  T \frac{\gamma }{2} \\
                                                                                                                                                     &= \frac{9}{400} \gamma \,T = c_3 \gamma \,T , 
  \end{align*}
  for large enough $T$, i.e.,~\eqref{eq:some-eq54} holds. This proves the claim. 
\end{proof}

\section{Potential analysis}
\label{app:linear-approx}
\subsection{WSU as a linear approximation of Hedge update}
As mentioned in the main text, the WSU update can be viewed as a linear approximation to the Hedge update. In this section, we briefly show this approximation argument.

Observe that by applying the linear approximation $f(x) \approx f(x_0) + f'(x_0) (x-x_0)$ for $f(x) = \exp(-x)$ and for $x = \eta \ell_{t,i}$ and $x_0= \eta \bar{\ell}_t$, where $\bar{\ell}_t:=\sum_{j} \pi_{t,j} \ell_{t,j}$,  we get 
\begin{align}
  \exp{(-\eta \ell_{t,i})} \approx \exp{\left(-\eta \bar{\ell}_{t}\right)} \cdot \left(1-\eta\left(\ell_{t,i}-\bar{\ell}_t\right)\right).
  \label{eq:some-eq55}
\end{align}
Note that Hedge updates weights by the LHS of \eqref{eq:some-eq55}.  Now, if we instead update the weights by RHS of \eqref{eq:some-eq55}, we get 
\begin{align*}
  w_{t+1,i} = w_{t,i} \cdot \exp{\left(-\eta \bar{\ell}_{t}\right)} \cdot \left(1-\eta\left(\ell_{t,i}-\bar{\ell}_t\right)\right).
\end{align*}
By defining $\pi_{t,i} := \frac{w_{t,i}}{\sum_{j \in [K]}w_{t,j}}$, we get
\begin{align*}
  \pi_{t+1,i} = \frac{w_{t+1,i}}{\sum_{j}w_{t+1,j}} &= \frac{\exp{\left(-\eta \bar{\ell}_{t}\right)} \left[w_{t,i} \cdot \left(1-\eta\left(\ell_{t,i}-\bar{\ell}_t\right)\right)\right]}{\exp{\left(-\eta \bar{\ell}_{t}\right)} \left[\sum_{j\in [K]}w_{t,j}  \cdot \left(1-\eta\left(\ell_{t,j}-\bar{\ell}_t\right)\right)\right]} \\
                                                    &= \frac{w_{t,i} \cdot \left(1-\eta\left(\ell_{t,i}-\bar{\ell}_t\right)\right)}{\sum_{j\in [K]}w_{t,j}  \cdot \left(1-\eta\left(\ell_{t,j}-\bar{\ell}_t\right)\right)}\\
                                                    &=\frac{w_{t,i} \cdot \left(1-\eta\left(\ell_{t,i}-\bar{\ell}_t\right)\right)}{\sum_{j\in [K]}w_{t,j}  }\\
                                                    &= \pi_{t,i} \left(1-\eta\left(\ell_{t,i}-\bar{\ell}_t\right)\right).
\end{align*}
Note that this recovers the WSU update.\footnote{The idea of linear approximation of hedge was noted by \cite{kivinen1997exponentiated} for a slightly different setting.}
\subsection{Completed version of potential argument of Section~\ref{sec:potential-section}}
In this subsection, for the convenience of the reader, we give a comprehensive explanation of the derivation of \eqref{eq:some-eq53} and \eqref{eq:some-eq52}.

In the potential analysis of Hedge which can be found in \cite{hazan2016introduction}, for any $i \in [K]$ and $t \in [T]$, we define $\Phi^{\mathrm{HEDGE}}_{t,i}~:=~w_{t,i}$ with $w_{t,i}$ and $\pi_{t,i}$ as in Definition~\ref{def:hedge-mwu-definition}. Moreover, assume that $w_{1,i}=1$.\footnote{This is slightly different than Definition~\ref{def:hedge-mwu-definition} where $w_{1,i}=1/K$. We can view it as dividing all weights by the same constant. This does not impact the behaviour of Hedge at all.} We also define  define $\Phi^{\mathrm{HEDGE}}_{t}~:=~\sum_{j\in [K]}w_{t,j}$. By non-negativity of $w_{t,i}$, we have
\begin{align}
  \frac{1}{\eta} \ln{(\Phi^{\mathrm{HEDGE}}_{T+1,i})}
  \leq \frac{1}{\eta} \ln{(\Phi^{\mathrm{HEDGE}}_{T+1})} .
  \label{eq:some-eq59}
\end{align}
It is easy to see that for any $t \in [T]$ we can write
\begin{align*}
  \Phi^{\mathrm{HEDGE}}_{t+1} = \Phi^{\mathrm{HEDGE}}_{t} \left(\sum_{j \in [K]} \pi_{t,j} \exp{(-\eta \ell_{t,j})}\right).  
\end{align*}
Note that we have
\begin{align*}
  \sum_{j \in [K]} \pi_{t,j} \exp{(-\eta \ell_{t,j})} &\leq 1 -\eta  
                                                        \sum_{j \in [K]} \pi_{t,j} \ell_{t,j} + \eta^2 \sum_{j \in [K]}\pi_{t,j} \left(\ell_{t,j}\right)^2 & (\exp{(-x)} \leq 1 - x + x^2 \text{ for } x \geq 0) \\
                                                      &\leq \exp{\left(-\eta  
                                                        \sum_{j \in [K]} \pi_{t,j} \ell_{t,j} + \eta^2 \sum_{j \in [K]}\pi_{t,j} \left(\ell_{t,j}\right)^2\right)}. & (\exp{(x)} \leq 1+ x)
\end{align*}
Therefore, we have 
\begin{align*}
  \Phi^{\mathrm{HEDGE}}_{t+1} \leq \Phi^{\mathrm{HEDGE}}_{t} \exp{\left(-\eta  
  \sum_{j \in [K]} \pi_{t,j} \ell_{t,j} + \eta^2\sum_{t \in [T]} \sum_{j \in [K]}\pi_{t,j} \left(\ell_{t,j}\right)^2\right)}. 
\end{align*}
By applying \eqref{eq:some-eq57} recursively, we get
\begin{align}
  \Phi^{\mathrm{HEDGE}}_{T+1} &\leq \Phi^{\mathrm{HEDGE}}_{1} \exp{\left(-\eta  
                                \sum_{t \in [T]}\sum_{j \in [K]} \pi_{t,j} \ell_{t,j} + \eta^2 \sum_{t \in [T]} \sum_{j \in [K]}\pi_{t,j} \left(\ell_{t,j}\right)^2\right)} \nonumber\\
                              &= \exp{ \left(\ln{K}   -\eta  
                                \sum_{t \in [T]}\sum_{j \in [K]} \pi_{t,j} \ell_{t,j} + \eta^2 \sum_{t \in [T]} \sum_{j \in [K]}\pi_{t,j} \left(\ell_{t,j}\right)^2 \right)},
                                \label{eq:some-eq57}
\end{align}
since $\Phi^{\mathrm{HEDGE}}_{1}=\sum_{j \in [K]} \frac{1}{K} = K$.

On the other hand, we have 
\begin{align}
  \Phi^{\mathrm{HEDGE}}_{T+1,i} = \Phi^{\mathrm{HEDGE}}_{T,i}\exp{(-\eta \ell_{T,i})} =\Phi^{\mathrm{HEDGE}}_{1,i} \exp{(-\eta \sum_{t \in [T]}\ell_{t,i})} =  \exp{\left(-\eta \sum_{t \in [T]}\ell_{t,i}\right)}.
  \label{eq:some-eq58}
\end{align}
We can upper bound the RHS of \eqref{eq:some-eq59} by  \eqref{eq:some-eq57} and lower bound the LHS of \eqref{eq:some-eq59} by \eqref{eq:some-eq58} to get
\begin{align*}
  - \sum_{t \in [T]} \ell_{t,i} \leq \frac{1}{\eta} \ln{(\Phi^{\mathrm{HEDGE}}_{T+1,i})}
  \leq \frac{1}{\eta} \ln{(\Phi^{\mathrm{HEDGE}}_{T+1})} \leq  -\sum_{t \in [T]}\sum_{j \in [K]} \pi_{t,j} \ell_{t,j}+  \frac{\ln{K}}{\eta}+ \eta \sum_{t \in [T]} \left[\sum_{j \in [K]}\pi_{t,j} \left(\ell_{t,j}\right)^2\right].
\end{align*}
Note that the above is the full version of \eqref{eq:some-eq52}.
Rearranging, we get \footnote{Note that the exploration term is an inevitable error incurred by both Hedge and WSU when they move toward the optimal point in the simplex $\Delta_{K}$ by the learning rate $\eta$. We call it exploration term as the algorithm is trying to explore and find the optimal point in the domain of the simplex.}
\begin{align*}
  \sum_{t \in [T]}\sum_{j \in [K]} \pi_{t,j} \ell_{t,j} -\sum_{t \in [T]} \ell_{t,i} \leq  \underbrace{\frac{\ln{K}}{\eta}}_{\text{exploration term}} + \eta \sum_{t \in [T]} \underbrace{\left[\sum_{j}\pi_{t,j} \left(\ell_{t,j}\right)^2\right]}_{\text{Second order error}}.
\end{align*}

For WSU, the potential is defined as $\Phi^{\mathrm{WSU}}_{t,i}~:=~\pi_{t,i}$ and $\Phi^{\mathrm{WSU}}_{t}~:=~\sum_{j\in [K]}\pi_{t,i}=1$. By non-negativity of $\pi_{t,i}$ we have
\begin{align}
  \frac{1}{\eta} \ln{(\Phi^{\mathrm{WSU}}_{T+1,i})}
  \leq \frac{1}{\eta} \ln{(\Phi^{\mathrm{WSU}}_{T+1})} = 0.
  \label{eq:some-eq60}
\end{align}
Now, the RHS of \eqref{eq:some-eq51} (which is $0$) does not involve any second-order error term. In fact, since WSU is normalized, the RHS does not give us information about the regret.  However, we can extract the difference between the cumulative loss of the algorithm and expert $i$ from the LHS of \eqref{eq:some-eq51}.

Indeed, we have 
\begin{align}
  \Phi^{\mathrm{WSU}}_{T+1,i} &= \Phi^{\mathrm{WSU}}_{T,i} \left(1 - \eta\left[ \ell_{T,i} - \sum_{j} \pi_{T,j} \ell_{T,j}\right]\right)\nonumber\\
                              &= \Phi^{\mathrm{WSU}}_{1,i} \prod_{t \in [T]}\left(1 - \eta\left[ \ell_{t,i} - \sum_{j} \pi_{t,j} \ell_{t,j}\right]\right) \nonumber\\
                              &\geq \frac{1}{K} \prod_{t \in [T]} \exp{\left(-\eta\left[ \ell_{t,i} - \sum_{j} \pi_{t,j} \ell_{t,j}\right] - \eta^2 \left[ \ell_{t,i} - \sum_{j} \pi_{t,j} \ell_{t,j}\right]^2\right)}\nonumber\\
                              &= \frac{1}{K}  \exp{\left(-\eta\sum_{t \in [T]}\left[ \ell_{t,i} - \sum_{j} \pi_{t,j} \ell_{t,j}\right] - \eta^2 \sum_{t \in [T]}\left[ \ell_{t,i} - \sum_{j} \pi_{t,j} \ell_{t,j}\right]^2\right)} ,
                                \label{eq:some-eq61}
\end{align}
where the inequality comes from $1-x \geq \exp(-x - x^2)$ for $ 0 \leq x \leq 1/2$.

Using \eqref{eq:some-eq61}, we can lower bound the LHS of \eqref{eq:some-eq60} as
\begin{align*}
  \sum_{t \in [T]} \left[\sum_{j} \pi_{t,j} \ell_{t,j} - \ell_{t,i}\right]  
  -\frac{\ln{K}}{\eta} - \,\eta \sum_{t\in [T]} \left[\sum_{j} \pi_{t,j} \ell_{t,j} - \ell_{t,i}\right]^2 \leq \frac{1}{\eta} \ln{(\Phi^{\mathrm{WSU}}_{T+1,i})}
  \leq \frac{1}{\eta} \ln{(\Phi^{\mathrm{WSU}}_{T+1})} = 0.
\end{align*}
Note that the above is the full version of \eqref{eq:some-eq53}. Rearranging, we get
\begin{align*}
  \sum_{t \in [T]}\sum_{j \in [K]} \pi_{t,j} \ell_{t,j} -\sum_{t \in [T]} \ell_{t,i} \leq  \underbrace{\frac{\ln{K}}{\eta}}_{\text{exploration term}} + \eta \sum_{t \in [T]} \underbrace{\left[\sum_{j} \pi_{t,j} \ell_{t,j} - \ell_{t,i}\right]^2}_{\text{Second order error}}.
\end{align*}

\paragraph{Implication for bandit case}
In the bandit setting, when we use WSU-UX, we can show that we get a second-order term in \eqref{eq:some-eq53} which is upper bounded by
\begin{align*}
  \E\left[\left(\sum_{j} \pi_{t,j} \hat{\ell}_{t,j} - \hat{\ell}_{t,i}\right)^2\right]
  &\leq \E\left[\left(\sum_{j} \pi_{t,j} \hat{\ell}_{t,j}\right)^2+ \left(\hat{\ell}_{t,i}\right)^2\right]\\
  &\leq \E\left[\sum_{j} \pi_{t,j} \left(\hat{\ell}_{t,j}\right)^2+ \left(\hat{\ell}_{t,i}\right)^2\right]. & (\text{Jensen's inequality for } f(x) = x^2)
\end{align*}
Note that
\begin{align}
  \E \left[\sum_{j \in [K]} \pi_{t,j} \left(\hat{\ell}_{t,j}\right)^2 \right] &=
                                                                                \E \left[\sum_{j \in [K]} \pi_{t,j} \left(\frac{\ell_{t,j} \indicator[I_t=j]}{\tilde{\pi}_{t,j}}\right)^2 \right]  \nonumber\\
                                                                              &= \E \left[\sum_{j \in [K]} \pi_{t,j} \left(\frac{\ell_{t,j} }{\tilde{\pi}_{t,j}} \right)^2 \E_{t-1}\left[ \indicator[I_t = j]^2\right] \right] \nonumber\\
                                                                              &= \E \left[\sum_{j \in [K]}  \frac{\pi_{t,j} }{\tilde{\pi}_{t,j}}  \right] & (\ell_{t,j} \leq 1) \\
                                                                              &\leq 2K,
                                                                                \label{eq:some-eq56}
\end{align}
where the last inequality holds since $ \frac{\pi_{t,j} }{\tilde{\pi}_{t,j}} \leq 2$ as we have
\begin{align*}
  2 \tilde{\pi}_{t,i} - \pi_{t,i}  = 2 \left( (1-\gamma) \pi_{t,i} + \frac{\gamma}{K}\right) - \pi_{t,i} = (1-2\gamma) \, \pi_{t,i} +  2\gamma \frac{1}{K} \geq \min\{\pi_{t,i} , \frac{1}{K}\} \geq 0.
\end{align*}
Moreover, 
\begin{align*}
  \E \left[ \left(\hat{\ell}_{t,i}\right)^2 \right] &=
                                                      \E \left[ \left(\frac{\ell_{t,i} \indicator[I_t=i]}{\tilde{\pi}_{t,i}}\right)^2 \right]  \\
                                                    &= \E \left[\left(\frac{\ell_{t,i} }{\tilde{\pi}_{t,i}} \right)^2 \E_{t-1}\left[ \indicator[I_t = i]^2\right] \right] \\
                                                    &= \E \left[ \frac{\ell^2_{t,i}}{\tilde{\pi}_{t,i}}\right]\\
                                                    &\leq \E \left[ \frac{1}{\tilde{\pi}_{t,i}}\right]. & (\ell_{t,i}\leq 1)
\end{align*}
Therefore, we have 
\begin{align*}
  \E \left[\sum_{j \in [K]} \pi_{t,j} \left(\hat{\ell}_{t,j}\right)^2 \right]  \leq 2K + \E\left[\frac{1}{\tilde{\pi}_{t,i}}\right] \leq 2K + \frac{K}{\gamma} = O(\frac{K}{\gamma}),
\end{align*}
where the last inequality holds because we have $\tilde{\pi}_{t,i} = (1 - \gamma) \pi_{t,i} + \frac{\gamma}{K} \geq \frac{\gamma}{K}$.

\end{document}